\documentclass[lettersize,journal]{IEEEtran}

\usepackage{cite}
\usepackage{amsmath,amssymb,amsfonts,bm,amsthm}
\usepackage{algorithmic}
\usepackage{tikz}
\usepackage{tabularx}
\usepackage{graphicx}
\usepackage{dblfloatfix}
\usepackage{subcaption}
\usepackage{textcomp}
\usepackage{xcolor}
\usepackage{bbold}
\usepackage{mwe}
\newcommand{\E}{\mathrm{E}}

\newcommand{\eqc}{\overset{c}{=}}

\DeclareMathOperator{\Tr}{Tr}
\usepackage[ruled,vlined]{algorithm2e}
\usepackage{array}
\usepackage{booktabs}
\usepackage{multirow}

\newtheorem{assumption}{Assumption}
\newtheorem{definition}{Definition}
 
\newtheorem{theorem}{Theorem}[section]
\newtheorem{corollary}{Corollary}[theorem]
\newtheorem{lemma}[theorem]{Lemma}
\newtheorem{proposition}[theorem]{Proposition}
\theoremstyle{remark}
\newtheorem{remark}{Remark}[theorem]
\newtheorem*{remark*}{Remark}

\def\BibTeX{{\rm B\kern-.05em{\sc i\kern-.025em b}\kern-.08em
    T\kern-.1667em\lower.7ex\hbox{E}\kern-.125emX}}

\newcommand{\overbar}[1]{\mkern 1.5mu\overline{\mkern-1.5mu#1\mkern-1.5mu}\mkern 1.5mu}

\begin{document}

\title{PiVoT: A Variational Solution for Real-time Large-scale Multi-object Detection and Tracking under Heavy Clutter}
\author{Runze~Gan,~\IEEEmembership{Member,~IEEE}, Qing~Li,~\IEEEmembership{Member,~IEEE}, 
Simon~J.~Godsill,~\IEEEmembership{Fellow,~IEEE}, 
Mike~E.~Davies,~\IEEEmembership{Fellow,~IEEE},
and James~R.~Hopgood,~\IEEEmembership{Senior Member,~IEEE}
\thanks{
This work was supported through the SIGNeTS project (W911NF-20-2-0225) and the Engineering and Physical Sciences Research Council (EPSRC) under grant EP/X025365/1, in collaboration with Leonardo, as part of the ‘Smart Products Made Smarter’ project.
}
\thanks{
R. Gan is with the Institute for Imaging, Data and Communications (IDCOM), University of Edinburgh, Edinburgh EH9 3FG, U.K., and also with the Department of Engineering, University of Cambridge, Cambridge CB2 1PZ, U.K. (e-mail: rgan@ed.ac.uk; rg605@cam.ac.uk).
Q. Li is with the School of Mathematics, University of Edinburgh, Edinburgh EH9 3FD, U.K., and also with the Department of Engineering, University of Cambridge, Cambridge CB2 1PZ, U.K. (e-mail: qli10@ed.ac.uk; ql289@cam.ac.uk).
S. J. Godsill is with the Department of Engineering, University of Cambridge, Cambridge CB2 1PZ, U.K. (e-mail: sjg30@cam.ac.uk).
M. E. Davies, and J. R. Hopgood are with IDCOM, University of Edinburgh, Edinburgh EH9 3FG, U.K. (e-mail: \{mike.davies, james.hopgood\}@ed.ac.uk).
}%
}

\maketitle
\begingroup\renewcommand\thefootnote{\textsection}

\begin{abstract}
Multi-object detection and tracking from noisy point clouds remain challenging in many data-scarce radar applications. Current Bayesian trackers based on Poisson measurement models offer a training-free solution but struggle to achieve accuracy and efficiency under severe clutter, large object populations, and full-resolution Doppler point clouds. We address this with PiVoT, a fast, clutter-resilient multi-object tracker
for both positional and Doppler measurements.
PiVoT performs end-to-end detection and tracking of a large and time-varying number of objects without external clustering or detectors, through joint inference of object states, shapes, existence probabilities, data association, and measurement rates. Its efficiency is driven by several variational inference innovations, such as theoretically justified birth pruning, quadratic-to-linear complexity reductions for exact updates, and a computationally efficient Doppler Poisson model. 
Experiments show that PiVoT substantially outperforms existing Bayesian trackers in challenging scenes,
while also demonstrating exceptional scalability to a thousand objects, robustness to clutter visually inseparable from objects, and real-time operation on full-scale modern automotive radar datasets,
where it attains performance comparable to a deep-learning detection benchmark as a training-free joint detector and tracker.

\end{abstract}

\begin{IEEEkeywords}
Multi-object tracking, radar point clouds, variational inference, object detection, Doppler measurements, Poisson measurement model, data association, Bayesian inference.
\end{IEEEkeywords}

\section{Introduction}

\label{sec:intro}
\IEEEPARstart{D}{etecting} and tracking multiple objects directly from noisy point-cloud-like measurements \cite{wang2024sequential,granstrom2022tutorial,scheiner2021object,cavagna2019sparta} is increasingly relevant for modern sensing systems. A prominent example is high-resolution radar that provides many spatial returns with Doppler velocity over long detection ranges (see Fig. \ref{fig: FrontDemo} for an illustration), even under poor weather conditions \cite{kari2023evolutionary}. While recent advances in point-cloud deep learning have enabled effective detection and subsequent tracking in data-rich automotive settings \cite{wang2024sequential,scheiner2021object}, labelled data scarcity remains a fundamental challenge in many radar-dependent applications \cite{ahmad2023review,scheiner2021new}, and is especially restrictive in surveillance radar, where controlled data collection and annotation require costly and time-consuming field trials.

\begin{figure*}[t!] 
\centerline{\includegraphics[width=18.2cm]{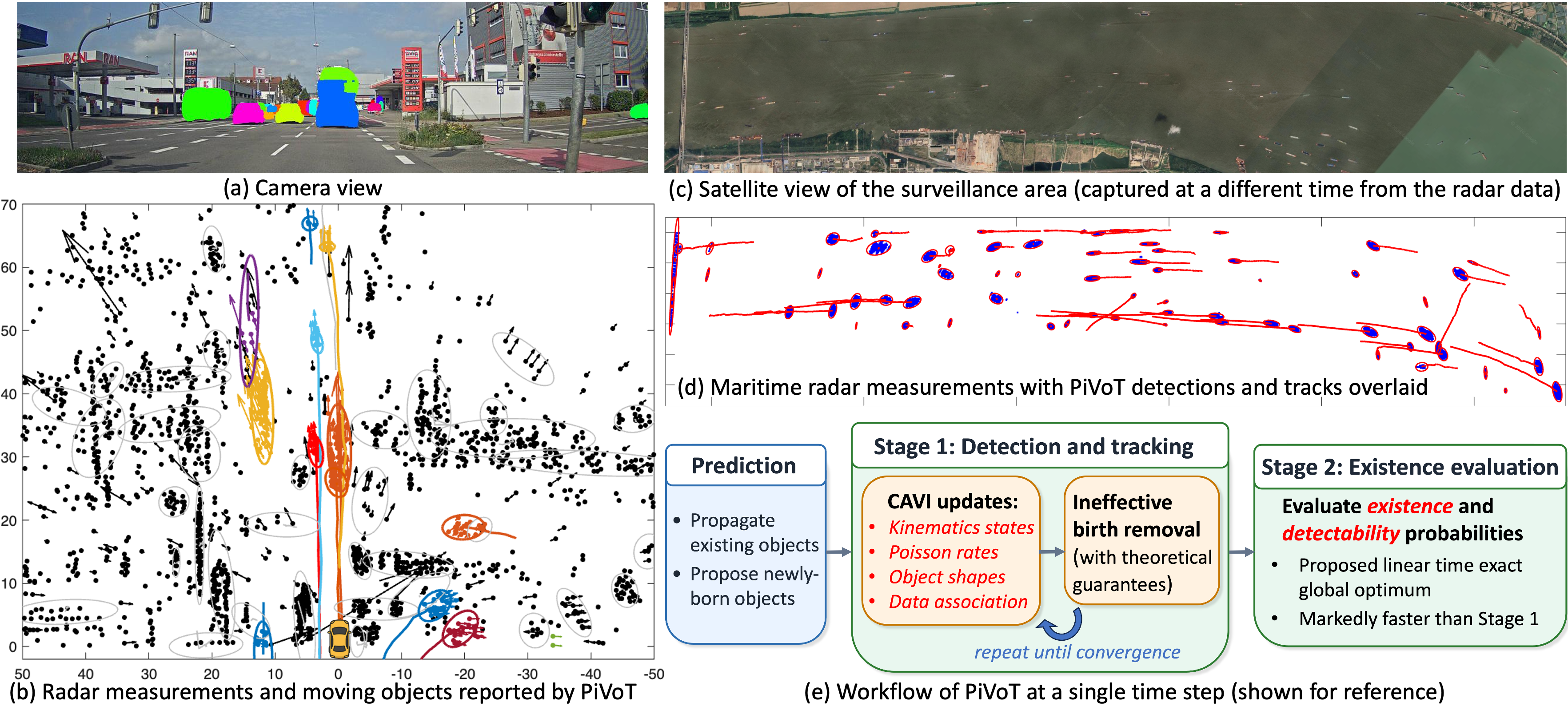}}
% {figure/FrontDemo3.png}}
    \caption{PiVoT joint detection and tracking on real radar data, with associated workflow.
    (a) Camera image from the RadarScenes dataset \cite{schumann2021radarscenes} (not used by PiVoT). (b) Corresponding radar point cloud with Doppler shown by arrows, where coloured points and arrows mark ground-truth moving-object returns. PiVoT estimates are overlaid as ellipses (shapes) and curves (tracks), with coloured estimates for moving objects and light-grey estimates for stationary objects or clutter, reliably recovering the true movers under heavy clutter.
    (c) Satellite view of the maritime surveillance area.
    (d) Maritime radar measurements (blue point clusters), overlaid by PiVoT estimates (red ellipses: shapes; curves: tracks). (e) Workflow of PiVoT at each time step.
    \label{fig: FrontDemo}
    }
    \vspace{-0.9em}
\end{figure*}

In such a data-constrained setting, multi-object tracking has historically been formulated within a model-based Bayesian framework \cite{bar1995multitarget,cox1996efficient}, where object kinematics and measurement generation are modelled probabilistically, and detection and tracking are performed as approximate Bayesian inference without the need for training data.

To account for the point-cloud-like measurements from modern high-resolution sensors, the non-homogeneous Poisson process (NHPP) measurement model was introduced in \cite{gilholm2005poisson}, characterising a generation process where each object produces a number of measurements drawn from a Poisson distribution.
Many Bayesian multi-object trackers have been actively developed based on this NHPP measurement model, commonly referred to as extended-object trackers \cite{granstrom2022tutorial}. Examples include random-finite-set-based \cite{granstrom2022tutorial,granstrom2019poisson,xia2023trajectory} and vector-based methods \cite{meyer2021scalable,li2023scalable,li2023adaptive}, employing inference techniques such as multiple-hypothesis tracking \cite{granstrom2019poisson}, belief propagation \cite{xia2023trajectory,meyer2021scalable}, and Monte Carlo methods \cite{li2023scalable,li2023adaptive}.

While effective in ideal scenarios, these trackers often struggle with accuracy and efficiency in challenging real-world settings with heavy clutter and a large number of objects. Such clutter may arise from noisy backgrounds, as commonly observed in automotive radar data (see Fig. \ref{fig: FrontDemo}(b)), and from sea or rain clutter \cite{BOLE2014139}. Large-scale tracking further involves many objects and massive data, for example when tracking drone swarms or groups of insects or animals. 

A further limitation of existing Bayesian extended-object trackers is that they often neglect or inefficiently exploit valuable Doppler point cloud information, which is routinely provided by many automotive and surveillance radars \cite{kari2023evolutionary,schumann2021radarscenes}.

To this end, we introduce PiVoT (\textbf{P}o\textbf{i}sson Measurements-based \textbf{V}ariational Multi-\textbf{o}bject Detection and \textbf{T}racking), an efficient variational inference framework for detecting and tracking a large, time-varying number of objects directly from point-cloud measurements, robust to dense clutter and supporting optional incorporation of Doppler information. PiVoT does not require training data and jointly estimates object kinematic states, shapes, Poisson rates (the mean measurement counts), existence and detectability probabilities within a unified probabilistic model. Its performance on challenging real automotive and maritime radar data\footnote{Video demonstrations of PiVoT on real automotive and maritime radar data, together with simulated heavy-clutter and large-scale tracking scenes, are provided in \cite{pivotweb}.} is illustrated in Fig.~\ref{fig: FrontDemo}.

\vspace{-0.5EM}
\subsection{Related work}
The efficiency of PiVoT arises from a novel variational inference routine tailored to the NHPP measurement model with uncertain object existence, admitting closed form, parallelisable coordinate ascent variational inference (CAVI) updates \cite{blei2017variational,bishop:2006:PRML} and a linear-time global optimum, see Fig.~\ref{fig: FrontDemo}(e).

Although CAVI has been applied to NHPP-based tracking in \cite{gan2022variational,gan2024variational}, these methods assume a fixed number of objects, and extending them to a time-varying cardinality introduces substantial technical difficulties (see Section~\ref{sec: challenges}). Previous works \cite{turner2014complete,lau2016structured,davey2007integrated} have also applied CAVI and expectation-maximisation to joint detection and tracking; however, they are under the point object model rather than the NHPP, and \cite{turner2014complete,lau2016structured} further require iterative belief propagation at each CAVI iteration. Extending these methods to NHPP is also problematic, \cite{lau2016structured} loses closed-form updates, while \cite{davey2007integrated} retains the original issue of exponential scaling with the number of objects. In contrast to \cite{turner2014complete,lau2016structured,davey2007integrated}, PiVoT preserves closed-form structure and scalability via a two-stage variational inference that targets distinct marginal posteriors to exploit the model’s tractable structure. It also addresses the computational bottlenecks in existing NHPP-based trackers \cite{granstrom2019poisson,xia2023trajectory,meyer2021scalable,li2023scalable,li2023adaptive}.

To handle a varying number of objects, PiVoT uses existence indicators and birth models, as in\cite{granstrom2019poisson,meyer2021scalable,xia2023trajectory,li2023scalable,turner2014complete,lau2016structured,davey2007integrated}. 
A key innovation to improve efficiency is adaptive birth removal (Fig. \ref{fig: FrontDemo}(e), Stage 1), which progressively removes ineffective births during CAVI and is formally supported by Theorem~\ref{main theorem}.

PiVoT extends naturally to Doppler point clouds through our Doppler-augmented NHPP model, which, unlike existing nonlinear Doppler models~\cite{knill2016direct,thormann2021incorporating,scheel2018tracking}, retains a linear Gaussian per-object likelihood and thus preserves the efficient closed-form update structure of the positional NHPP model~\cite{gilholm2005poisson}.

PiVoT directly processes all point cloud measurements without an external detector or clustering. Instead, its end to end joint inference inherently provides clutter robust, parallelisable clustering to detect an unknown number of objects while incorporating previous tracking results. This distinguishes PiVoT from existing NHPP-based trackers, e.g. \cite{granstrom2019poisson,meyer2021scalable,xia2023trajectory} that benefit from external clustering for data association, track initiation, or measurement censoring, and from automotive radar trackers, e.g. \cite{scheel2018tracking,tilly2020detection} that require clustering or pretrained networks as detectors. 
For Doppler point clouds, PiVoT does not prune near-zero Doppler measurements, as they may arise from tangential motion. Unlike conventional clustering methods \cite{scheiner2019multi, malzer2021constraint} that group measurements by similar Doppler values, PiVoT accounts for geometry and kinematics, enabling detection of objects that may generate measurements with quite different Doppler values. See Section~\ref{sec: Property of Doppler PiVoT} for details.

\subsection{Contributions}
Our key contribution is to present PiVoT, a fast, training-free, clutter resilient method for large-scale multi-object detection and tracking, applicable to both positional and Doppler point cloud measurements. Compared to existing NHPP-based trackers, PiVoT delivers substantial performance gains in challenging settings, as shown in Section \ref{sec: results}. It also demonstrates the following capabilities (illustrated in supplementary videos \cite{pivotweb}) beyond existing Bayesian trackers: 1) robustness in dense clutter, even when objects are difficult to distinguish by human inspection; 2) detecting and tracking a thousand objects in under a second on a standard laptop without gating; 3) to our knowledge, being the first fully training-free\footnote{No supervised learning is used in either detection or tracking.} detector and tracker to run efficiently on full-scale modern automotive radar datasets, with performance comparable to a deep-learning detection benchmark
as shown in Section \ref{sec: results}. 
At the same time, PiVoT additionally provides estimates of tracks, shapes, data association, detectability and existence probabilities.

The central methodological contribution of PiVoT is a novel two-stage variational inference framework (Section~\ref{sec: two stage overview}) that targets distinct marginal posteriors to exploit the NHPP model's tractable structure. This design addresses several inherent challenges of applying standard variational inference directly to the full posterior, as detailed in Section~\ref{sec: challenges}. 

Within this two-stage framework (see Fig.~\ref{fig: FrontDemo}(e)), three further methodological innovations enable PiVoT's efficiency and strong empirical performance:
1) a procedure for early removal of ineffective birth objects that would otherwise converge to having no associated measurements, with theoretical guarantees developed in Theorem~\ref{main theorem} in Section~\ref{sec: theoretical analysis}, thereby greatly accelerating Stage~1 inference; 2) an exact global optimiser for object existence and detectability probabilities in Stage~2, which reduces the standard quadratic complexity to linear time, as detailed in Section~\ref{sec: stage 2}; and 3) a Doppler-augmented NHPP model (Section~\ref{sec: Doppler model}) that, to our knowledge, is the first NHPP model to incorporate Doppler information while preserving a linear Gaussian per-object likelihood, providing a convenient building block for efficient handling of Doppler point clouds in both PiVoT and future trackers.

A preliminary version of PiVoT was presented in \cite{gan2025pivot}. The current paper substantially extends it with broader experiments and technical developments, including deeper CAVI theory and proofs, a numerically stable and efficient existence evaluation, a principled treatment of mean-field limitations, and a Doppler-augmented NHPP model (Doppler PiVoT) validated on automotive radar point clouds from RadarScenes~\cite{schumann2021radarscenes}.

\subsection{Paper layout}
The rest of the paper is organised as follows. Section~\ref{sec: Prob Formulation} formulates the problem as an inference task under a general NHPP model. Section~\ref{sec: PiVoT challenges solution} discusses the main inference challenges and introduces the proposed two-stage variational inference framework, with detailed Stages 1 and 2 inference steps presented in Sections~\ref{sec: stage 1} and \ref{sec: stage 2}. Section~\ref{sec: Doppler PiVoT} presents the Doppler-augmented NHPP model and Doppler PiVoT. Section~\ref{sec: Algorithm structure} describes the implementation details. Section~\ref{sec: results} presents the experimental results, while the theoretical analysis is deferred to Section~\ref{sec: theoretical analysis}. Section~\ref{sec: conclusion} concludes the paper.

Tables~\ref{tb: variable notation}--\ref{tab: notation_functions} in Appendix~\ref{apx: notation} summarise the main notation used throughout the paper. All appendices cited in the main paper are provided in the supplementary material.

\section{Problem formulation} \label{sec: Prob Formulation}
We assume that existing objects may cease to exist (die) and new objects may emerge (birth) at each time step, as in \cite{granstrom2019poisson,meyer2021scalable,xia2023trajectory,li2023scalable,turner2014complete}. At time step $n$, consider that there are $K_n=K_{n-1}+K_n^b$ potentially existing objects. These $K_n$ objects include $K_{n-1}$ legacy objects from the previous time step $n-1$, and $K_n^b$ newly born objects at the current time step $n$. We assign  each of the $K_n$ objects an index $k \in \{1,...,K_n\}$. Legacy objects from time $n-1$ are indexed as $k=1,...,K_{n-1}$, while newly born objects at time $n$ are indexed as $k=K_{n-1}+1,...,K_n$.

For each object $k=1,...,K_n$, we define the following variables. The kinematic state $X_{n,k}$ is a vector containing position, velocity, and any other motion parameters. The scalar Poisson rate $\Lambda_{n,k}>0$ equals the expected number of measurements generated by object $k$ at time $n$. The precision matrix $P_{n,k}$ has an inverse that describes the elliptical shape of object $k$. The binary indicators $D_{n,k}\in\{0,1\}$ and $E_{n,k}\in\{0,1\}$ represent detectability and existence respectively. More details about these features will be introduced in Section \ref{sec: measuremnt model}. The clutter Poisson rate (the expected number of clutter measurements) is denoted by $\Lambda_{n,0}$. Finally, we denote $X_n$, $\Lambda_{n}$, $P_{n}$, $D_{n}$ and $E_{n}$ as the respective collections of these variables for all objects $k=1,...,K_n$, for example, $\Lambda_n=[\Lambda_{n,1},...,\Lambda_{n,K_n}]$.

The measurements at time step $n$ are denoted by $Y_n=[Y_{n,1}, Y_{n,2},..., Y_{n,M_n}]$, where $M_n$ is the total number of measurements. We define the measurement-oriented association $\theta_n=[\theta_{n,1},\theta_{n,2},...,\theta_{n,M_n}]$, where each $\theta_{n,j}\in\{0,1,...,K_n\}$ for $j=1,2,...,M_n$ indicates the origin of the measurement $Y_{n,j}$. Specifically, $\theta_{n,j}=k$ indicates that $Y_{n,j}$ was generated by object $k$ if $k=1, ...,K_n$, or by clutter if $k=0$.

\subsection{Model for measurements and association} \label{sec: measuremnt model}
We assume that existing objects produce measurements following an NHPP measurement model \cite{gilholm2005poisson,gan2024variational} with a detection probability $p_{n,k}^d\!\in\!(0,1]$ \cite{granstrom2019poisson}. Specifically, an object $k=1,\ldots,K_n$ at time $n$ is detectable only if it exists, and given existence $E_{n,k}=1$ it is detectable, $D_{n,k}=1$, with probability $p_{n,k}^d$. An object can generate measurements only when $D_{n,k}=1$, so the detectability indicator is useful to model occasional occlusion. Mathematically,
\vspace{-0.3em}
\begin{align} \label{eq: detectability model}
    p(D_n|E_n)=\prod\nolimits_{k=1}^{K_n} p(D_{n,k}|E_{n,k}), \\[-2em]\notag
\end{align}
where $p(D_{n,k}|E_{n,k}\!=\!1)$ is $p_{n,k}^d$ if $D_{n,k}\!=\!1$ and $1\!-\!p_{n,k}^d$ otherwise, while $p(D_{n,k}|E_{n,k}\!=\!0)\!=\!\delta[D_{n,k}\!=\!0]$, where $\delta[\cdot]$ is Kronecker delta function. Finally, we assume $p_{n,k}^d=1$ for all newly born objects $k=K_{n-1}\!+\!1,...,K_n$ at current time $n$.

Conditional on $D_{n,k}=1$, object $k$ generates measurements according to an NHPP. Specifically, the number of measurements is Poisson distributed with rate $\Lambda_{n,k}$, and each measurement is independent and identically distributed with density $\ell(\cdot \mid X_{n,k},P_{n,k})$.
When $D_{n,k}=0$, object $k$ produces no measurements. 
Consequently, conditional on $D_n$, the full set of measurements $Y_n$ forms an NHPP obtained by superposing conditionally independent NHPPs from clutter and the $K_n$ objects with rates $\Lambda_{n,0}$ and $D_{n,k}\Lambda_{n,k}$ respectively. The likelihood is given by \cite{gilholm2005poisson}:
\vspace{-0.5em}
\begin{align}\notag
        &p(Y_{n},M_n|D_n,{X}_{n},\Lambda_n,P_n)=\frac{\exp(-\Lambda_{n,0}\!-\!\sum_{k=1}^{K_n}D_{n,k}\Lambda_{n,k})}{M_n !}\\[-0.5em] \label{likelihood poisson existence}
        & \!\times\prod_{j=1}^{M_n}\!\Big(\Lambda_{n,0} \ell_0(Y_{n,j})\!+\!\sum_{k=1}^{K_n}\!\Lambda_{n,k} D_{n,k} \ell(Y_{n,j}|X_{n,k},P_{n,k})\!\Big),\\[-2em]\notag
\end{align}
where $\ell_0(Y_{n,j})$ and $\ell(Y_{n,j}|X_{n,k},P_{n,k})$ denote the clutter and single-object measurement likelihoods, respectively.

Depending on the sensor, each measurement $Y_{n,j}$ may contain only position or position together with Doppler velocity. We first consider the classical positional-only model, where $Y_{n,j}$ reduces to a positional measurement $y_{n,j} \in \mathbb{R}^{d_Y}$ with $d_Y = 2$ for planar and $d_Y = 3$ for spatial tracking. For this positional-only model, i.e. $Y_{n,j}=y_{n,j}$, the clutter likelihood $\ell_0$ in \eqref{likelihood poisson existence} may be uniform over the surveillance area or follow a predefined spatial map, while the object measurement likelihood $\ell$ is assumed to follow a linear Gaussian form:
\vspace{-0.3em}
\begin{equation} 
   \ell(Y_{n,j}|X_{n,k},P_{n,k})= 
    \mathcal{N}(y_{n,j}; H X_{n,k},P_{n,k}^{-1}) \\[-0.3em]
\label{measurement model}
\end{equation}
where $H$ is the observation matrix mapping the object state to the positional measurement space, and $P_{n,k}^{-1}$ is the covariance matrix (with $P_{n,k}$ being its precision) that captures the object’s elliptical shape. Example measurements from this positional-only NHPP model, together with the corresponding $\Lambda_{n,k},P_{n,k},D_{n,k}$ and $E_{n,k}$ are illustrated in Fig. \ref{fig: NHPP Env}. 

\begin{figure}[t!]
    \centering
    \includegraphics[width=1\linewidth]{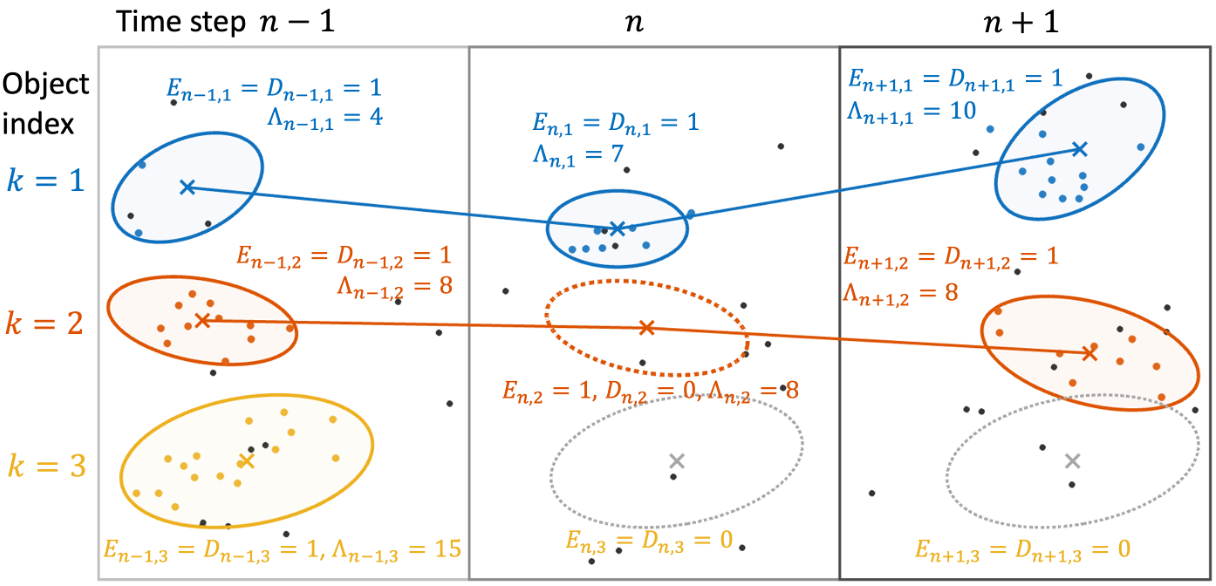}
    \caption{Simulated NHPP measurements from three objects over three time steps. Black dots denote clutter, and coloured dots are object-generated measurements with Poisson rate $\Lambda_{n,k}$ and shape $P_{n,k}$ (shown as ellipses), generated only when $D_{n,k}=1$ (which implies $E_{n,k}=1$). In particular, object $k=3$ no longer exists from time $n$ onwards, while object $k=2$ still exists at time $n$ but is not detectable (e.g. due to occlusion).}
    \label{fig: NHPP Env}
    \vspace{-0.5em}
\end{figure}

The Doppler NHPP model will be presented in Section~\ref{sec: Doppler PiVoT}.
The PiVoT inference framework in Sections~\ref{sec: PiVoT challenges solution}-\ref{sec: stage 2} applies to both settings, whereas the final update expressions in Section~\ref{sec: stage 1} are specific to the positional model introduced here; the corresponding Doppler case is given in Section~\ref{sec: Doppler PiVoT}.

By introducing the data association $\theta_n$, the joint likelihood of $Y_{n}$, $M_n$, $\theta_{n}$, $D_n$ can be expressed as follows, which can be verified using \eqref{likelihood poisson existence} following a similar procedure in \cite{gan2024variational}:
\begin{equation}
\label{eq: likelihood joint} 
\begin{split}
        &p(Y_{n},M_n,\theta_{n},D_n|{X}_{n},\Lambda_n,P_n,E_n)\\ 
        &\qquad =p(Y_{n}|\theta_{n},P_n,X_{n})p(\theta_n,M_n|\Lambda_n,D_n)p(D_n|E_n),
\end{split} 
\end{equation}
The association-conditioned likelihood in \eqref{eq: likelihood joint} is given by 
\vspace{-0.5em}
\begin{align} \label{eq: individual likelihood}p(Y_{n}|\theta_{n},P_n,X_{n})=\prod\nolimits_{j=1}^{M_n}p(Y_{n,j}|\theta_{n,j},P_n,X_{n}) \\[-1.7em]\notag
\end{align}
\begin{equation} \notag
   p(Y_{n,j}|\theta_{n,j},P_n,X_{n})=\begin{cases} \ell(Y_{n,j}|X_{n,\theta_{n,j}},P_{n,\theta_{n,j}}), & \!\!\text{$\theta_{n,j}\neq 0 $}\\
     \ell_0(Y_{n,j}), & \!\!\text{$\theta_{n,j}= 0 $}
\end{cases}
\end{equation}
The joint prior for $\theta_n,M_n$ in \eqref{eq: likelihood joint} can be verified to be \cite{gan2024variational}
\begin{align} \notag
    % &p(\theta_n,M_n|\Lambda_n,\!D_n)\!=\!\frac{e^{-\Lambda_{n\!,0}-\sum_{k=1}^K\!D_{n\!,k}\Lambda_{n\!,k}}}{M_n !}h(\Lambda_n,\!D_n,\theta_n) \\
    &p(\theta_n,M_n|\Lambda_n,D_n)= \tfrac{1}{M_n !}\exp(-\Lambda_{n,0}-\sum\nolimits_{k=1}^{K_n}D_{n,k}\Lambda_{n,k})\\ \label{eq: association prior}
     &\qquad\qquad\qquad\qquad\qquad \times h(\Lambda_n,D_n,\theta_n)
    \\ 
    \notag
    &h(\Lambda_n,D_n,\theta_n)=\Lambda_{n,0}^{\sum_{j=1}^{M_n} \delta[\theta_{n,j}=0]}\prod_{k=1}^{K_n} (\Lambda_{n,k}D_{n,k})^{\sum_{j=1}^{M_n} \delta[\theta_{n,j}=k]}\\[-1.5em] \label{eq: association prior form 1}
    \\[-0.5em] \label{eq: association prior form 2}
    &\ \  =\prod_{j=1}^{M_n} \Big(\Lambda_{n,0} \delta[\theta_{n,j}=0]+\sum_{k=1}^{K_n}\Lambda_{n,k}D_{n,k}\delta[\theta_{n,j}=k]\Big),
\end{align}
where \eqref{eq: association prior form 1} and \eqref{eq: association prior form 2} provide alternative but equivalent expressions, each useful in different contexts encountered later.
\subsection{Model for object dynamics and other features} \label{sec: object model}
\subsubsection{Model for legacy objects} \label{sec: legacy object model}
At each time $n$, we assume that transitions are independent across legacy objects $k=1,2,...,K_{n-1}$ and also independent among each object's features $X_{n,k}$, $\Lambda_{n,k}$, $P_{n,k}$, $E_{n,k}$. Specifically, each legacy object survives from time $n-1$ to $n$ with a survival probability $p_{n,k}^s\in(0,1]$, i.e. $p(E_{n,k}|E_{n-1,k}=1)=p_{n,k}^s$ if $E_{n,k}=1$ and $1-p_{n,k}^s$ if $E_{n,k}=0$. A legacy object that does not exist ($E_{n-1,k}=0$) remains nonexistent in future time steps ($E_{n,k}=0$). We assume linear Gaussian state transition with transition matrix $F_n$ and covariance $Q_n$ for each legacy object:
\vspace{-0.5em}
\begin{equation} \label{eq: linear Gaussian transition}
    p(X_{n,k}|X_{n-1,k})=\mathcal{N}(X_{n,k};F_nX_{n-1,k}, Q_n).\\[-0.5em]
\end{equation}
Moreover, we adopt the commonly assumed heuristic transition  (e.g. in \cite{granstrom2019poisson,granstrom2022tutorial,xia2023trajectory}) for rate $\Lambda_{n,k}$ and shape $P_{n,k}$, which preserves the Gamma and Wishart distribution forms of $\Lambda_{n,k}$ and $P_{n,k}$ in the prediction step while slightly increasing the uncertainty of $\Lambda_{n,k}$ and $P_{n,k}^{-1}$, as detailed in Section \ref{sec: Algorithm structure}.
\subsubsection{Model for newly born objects} \label{sec: birth model}
At each time $n$, we assume a birth model with a maximum of $K_n^b$ new births. For each newly born object $k=K_{n-1}+1,...,K_n$, its birth prior is assumed to be independent across features $X_{n,k}$, $\Lambda_{n,k}$, $P_{n,k}$, $E_{n,k}$, as well as independent of all other newly born and legacy objects' features. Specifically, 
the birth priors for existence $E_{n,k}$, state $X_{n,k}$, rate $\Lambda_{n,k}$, and shape (precision matrix) $P_{n,k}$ follow Bernoulli, Gaussian, Gamma, Wishart distributions, respectively:
\begin{align} \label{eq: birth prior}
\begin{aligned}
    &p(E_{n,k})\!=\!\text{Ber}( p_{n,k}^b), \ \quad \ \ \ \ p(X_{n,k})\!=\!\mathcal{N}(\mu_{n,k}^b,\Sigma_{n,k}^b), \\  
    &p(\Lambda_{n,k})\!=\!\mathcal{G}(\eta_{n,k}^b,\rho_{n,k}^b), \  \ \ \
    p(P_{n,k})\!=\!\mathcal{W}(\Phi_{n,k}^b,\phi_{n,k}^b),
\end{aligned}
\end{align}
where the Bernoulli parameter $p_{n,k}^b\in(0,1)$ denotes the probability of birth, i.e. $E_{n,k}=1$. $\mu_{n,k}^b,\Sigma_{n,k}^b$ denote the mean and covariance, $\eta_{n,k}^b,\rho_{n,k}^b$ are the shape and scale parameters, $\Phi_{n,k}^b$ is the scale matrix, and $\phi_{n,k}^b$ is the degrees of freedom.

For reliable detection of all potentially born objects in a cluttered environment, the maximum birth count $K_n^b$ should be sufficiently large, often far exceeding the actual number of births. However, PiVoT can adaptively reduce $K_n^b$, removing ineffective birth objects during inference to improve efficiency and refine the birth model, see Section \ref{sec: ineffective birth} and \ref{sec: initialisation, clustering demo}. For efficient detection/clustering over large areas, PiVoT typically uses identical non-informative birth priors $p(X_{n,k})$ (see Section \ref{sec: initialisation, clustering demo}), unlike the usual multi-Bernoulli birth setting.

\subsection{Inference goal: filtering posterior and predictive prior}

The goal of inference at time step $n$ is to obtain an accurate and efficient approximation of the following filtering posterior:
\vspace{-1.5em}
\begin{align}\notag
    \hat{p}_n(X_n,\Lambda_n,P_n &, \theta_n,E_n,D_n|Y_n)\propto\hat{p}_{n}(X_n,\Lambda_n,P_n,E_n)\\ \label{eq: filtering posterior definition}
    &\times p(Y_{n},M_n,\theta_{n},D_n|{X}_{n},\Lambda_n,P_n,E_n),\\[-2em]\notag
\end{align}
where the likelihood term (second line) follows the exact NHPP joint likelihood given in \eqref{eq: likelihood joint}-\eqref{eq: association prior} and \eqref{eq: detectability model}. The prior term $\hat{p}_{n}(X_n,\Lambda_n,P_n,E_n)$ consists of the exact birth prior (Section \ref{sec: birth model}) and the predictive prior for legacy objects, constructed using the transition model (Section \ref{sec: legacy object model}) and the approximated posterior from the previous time step. In PiVoT, this predictive prior retains the same independent factorisation and distribution forms as the birth prior, leading to:
\vspace{-0.5em}
\begin{align} \notag
    &\!\!\!\! \hat{p}_{n}(X_n,\Lambda_n ,P_n,E_n)\!=\!\!\prod_{k=1}^{K_n} \!\hat{p}_{n}(X_{n,k})\hat{p}_{n}(\Lambda_{n,k})\hat{p}_{n}(P_{n,k}\!)\hat{p}_{n}(E_{n,k})\\ \notag
    % &\hat{p}_{n}(E_{n,k})\!=\!\text{Ber}(p_{n,k}^{e \ \prime}), \ \hat{p}_{n}(X_{n,k})\!=\!\mathcal{N}(X_{n,k};\mu_{n,k}',\Sigma_{n,k}') \!\!\\ \notag
    % &\hat{p}_{n}(\Lambda_{n,k})\!=\!\mathcal{G}(\Lambda_k;\eta'_{n,k},\rho'_{n,k}), \  \hat{p}_{n}(P_{n,k})\!=\!\mathcal{W}(P_{n,k};\Phi_{n,k}',\phi_{n,k}')
    &\hat{p}_{n}(X_{n,k})=\mathcal{N}(\mu_{n,k}',\Sigma_{n,k}'), \ \quad \hat{p}_{n}(\Lambda_{n,k})=\mathcal{G}(\eta'_{n,k},\rho'_{n,k}), \\ \label{eq: prior parameters}
    &\hat{p}_{n}(P_{n,k})=\mathcal{W}(\Phi_{n,k}',\phi_{n,k}'), \ \quad \hat{p}_{n}(E_{n,k})=\text{Ber}(p_{n,k}^{e\prime}),\\[-2.0em]\notag
\end{align}
For newly born objects $k\!=\!K_{n-1}+1,...,K_n$, the prior parameters in \eqref{eq: prior parameters} are exactly identical to those in the birth prior \eqref{eq: birth prior}, i.e. $[\mu_{n,k}',\Sigma_{n,k}',...,\phi_{n,k}',p_{n,k}^{e \prime}]=[\mu_{n,k}^b,\Sigma_{n,k}^b,...,\phi_{n,k}^b,p_{n,k}^{b}]$. For legacy objects $k=1,...,K_{n-1}$, the prior is the predictive prior, with parameters computed from the prediction step, as detailed in \eqref{eq: prediction} in Section \ref{sec: Algorithm structure}.

In the rest of this paper, we omit the subscript $n$ in the notation for clarity, unless necessary to avoid confusion.

\section{PiVoT: Inference challenges and solution} \label{sec: PiVoT challenges solution}
This section first outlines the challenges of approximating the filtering posterior in \eqref{eq: filtering posterior definition}. We then present our solution, a two-stage variational inference framework that serves as the backbone of PiVoT and summarises the method in a nutshell.

\subsection{Inference challenges and motivation} \label{sec: challenges}
The key inference routine of PiVoT is CAVI, which demonstrated superior efficiency and reliability for tracking a fixed number of objects under the NHPP model \cite{gan2024variational,gan2022variational}. However, extending it to our models in Section \ref{sec: Prob Formulation} for a varying number of objects is challenging. A standard mean-field assumption:
\vspace{-0.5em}
\begin{align} \label{eq: naive mean-field}   q(\theta,X,\Lambda,P,E,D)=q(\theta)q(X)q(\Lambda)q(P)q(E,D),\\[-2.1em]\notag
\end{align}
fails when approximating $\hat{p}$ in \eqref{eq: filtering posterior definition} by minimising the Kullback-Leibler divergence (KLD) KL$(q\|\hat{p})$.
The reason is as follows. Since $\hat{p}$ is zero whenever a measurement is associated to a non-existent object, $q$ cannot place probability mass there. This forces variational inference (VI) into one of two failures: 1) forcing $q(E_k \!=\! 0) \!=\! 0$, meaning that object $k$ must exist with no uncertainty, or 2) preventing object $k$ from associating with measurements, making updates impossible. Either outcome renders the naive mean-field assumption in \eqref{eq: naive mean-field} fundamentally flawed. Detailed derivations are given in Appendix \ref{apx:limitation MF}.

Possible remedies include modifying the model to allow non-existent objects to generate measurements \cite{turner2014complete}, but inference performance is highly sensitive to the extra Poisson rates introduced for non-existent objects. Another approach replaces $q(\theta)$ in \eqref{eq: naive mean-field} with $q(\theta | D)$, which preserves detectability dependence but introduces computationally prohibitive evaluations of $q(\theta | D)$ over $2^{K_n}$ configurations as in \cite{davey2007integrated}. While general inference methods like Monte Carlo and gradient-based VI could be used, they compromise efficiency. Instead, PiVoT introduces alternative objectives and suitable approximations to retain efficient closed-form CAVI updates, as shown below.

\subsection{PiVoT: Two-stage variational inference framework} \label{sec: two stage overview}
PiVoT follows a mean-field approximation similar to \eqref{eq: naive mean-field}, but employs two separate inference stages with distinct, well motivated objectives to address inference challenges. Specifically, it approximates $\hat{p}$ in \eqref{eq: filtering posterior definition} as
\vspace{-0.5em}
\begin{align} \notag
q(X,\Lambda,P,\theta,E,D)\!=\!q_1(X)q_1(\Lambda)q_1(P)q_1(\theta)\prod_{k=1}^{K_n} q_2(E_k,D_k),\\[-1.3em]\label{eq: objective filtering posterior} \\[-2.2em]\notag
\end{align}
where $q_1$ and $q_2$ are evaluated in the first and second stages, respectively, as introduced below and summarised in Fig.~\ref{fig: FrontDemo}(e).

\subsubsection{Detection and tracking stage} \label{sec: first stage overview}
Stage 1 aims to use VI to approximate the marginal posterior $\hat{p}(X, \Lambda, P, \theta | Y)$ from \eqref{eq: filtering posterior definition} with $q_1(X)q_1(\Lambda)q_1(P)q_1(\theta)$:
\vspace{-0.5em}
\begin{align} \label{eq: stage 1 approximation goal}
    &q_1(X)q_1(\Lambda)q_1(P)q_1(\theta)\approx \sum_{E,D}\hat{p}(X, \Lambda, P, \theta, E, D | Y)\\[-0.4em] \label{eq: stage 1 unnormalised goal}
    &\ \propto\hat{p}(X,\Lambda,P) p(Y|\theta,P,X) \sum\nolimits_{D}p(\theta,M|\Lambda,D)\hat{p}(D),
\end{align}
Here, \eqref{eq: stage 1 unnormalised goal} follows from 
\eqref{eq: likelihood joint} and \eqref{eq: filtering posterior definition}. 
The factors $\hat{p}(X,\Lambda,P)$, $p(Y|\theta,P,X)$ and 
$p(\theta,M|\Lambda,D)$ are specified in 
\eqref{eq: prior parameters}, \eqref{eq: individual likelihood} and 
\eqref{eq: association prior}, respectively. The shorthand 
$\hat{p}(D)$ is defined as
\begin{align}
\begin{aligned} \label{eq: predictive Dk}
    &\hat{p}(D):=\sum\nolimits_E p(D|E)\hat{p}(E)= \prod\nolimits_{k=1}^{K_n}\hat{p}(D_k), \\ 
    &\hat{p}(D_k)=\sum\nolimits_{E_k}p(D_k|E_k)\hat{p}(E_k)=\text{Ber}(p_k^{e \prime}p_k^d),
\end{aligned}
\end{align}
where the simplified factorised form follows from \eqref{eq: detectability model} 
and \eqref{eq: prior parameters}.

Since $E,D$ are marginalised out, the resulting posterior is not necessarily zero for all possible associations $\theta$, avoiding the mean-field issue discussed in Section \ref{sec: challenges}. Parallelisable CAVI then iteratively updates each variational distribution $q_1$.  

This stage forms the core of PiVoT, jointly detecting new objects, tracking legacy objects, and estimating rates and shapes. Notably, the design in Section \ref{sec: stage 1} enables automatic clutter-robust clustering for birth detections while improving efficiency through early identification and removal of ineffective births, as supported by Theorem \ref{main theorem} in Section \ref{sec: ineffective birth}.

\subsubsection{Existence evaluation stage} \label{sec: second stage overview}
In Stage 2, the main objective is to independently evaluate $q_2(E_k,D_k)$ in \eqref{eq: objective filtering posterior} for each object $k=1,2,...,K_n$, by targeting the following object-wise marginal approximation with VI: 
\vspace{-0.5em}
\begin{align} \notag 
q_2(E_k,&D_k)q_2(\theta|E_k,D_k)q_1(X)q_1(\Lambda)q_1(P)\\ \label{eq: stage 2 marginal posterior}
    \approx &\sum\nolimits_{E_{k-},D_{k-}}\hat{p}(X, \Lambda, P, \theta, E, D | Y)\\ \notag
    & \propto  \ \hat{p}(X,\Lambda,P)\hat{p}(E_k)p(D_k|E_k)p(Y|\theta,P,X)\\ \label{eq: stage 2 unnormalised goal}
    & \qquad \times\sum\nolimits_{D_{k-}}p(\theta,M|\Lambda,D)\hat{p}(D_{k-}),\\[-2.1em]\notag
\end{align}
where $q_1(X)q_1(\Lambda)q_1(P)$ is obtained from the first stage and remains fixed in the second stage. The subscript $k-$ denotes all indexed variables except the $k$-th one, e.g. $E_{k-}$ includes all components of $E$ except $E_k$. \eqref{eq: stage 2 unnormalised goal} is derived similarly to \eqref{eq: stage 1 unnormalised goal}, with $\hat{p}(D_{k-})$ obtained directly from \eqref{eq: predictive Dk}. 

The rationale behind this stage is that $q_1(X) q_1(\Lambda) q_1(P)$ from the first stage should already approximate \( \hat{p}(X, \Lambda, P | Y) \) well, so keeping it fixed allows efficient evaluation of $q_2(E_k,D_k) q_2(\theta | E_k, D_k)$ to approximate $\hat{p}(E_k,D_k,\theta|Y)$, though only $q_2(E_k,D_k)$ is required in \eqref{eq: objective filtering posterior}.

Here $q_2(\theta | E_k, D_k)$ depends on $E_k, D_k$ to avoid the mean-field issue, and to avoid the  $2^{K_n}$ configurations when conditioning on all $E$, both discussed in Section \ref{sec: challenges}. Notably, Section \ref{sec: stage 2} presents a method that directly computes the globally optimal $q_2(E_k,D_k)$ for all $k=1,..., K_n$  without evaluating $q_2(\theta | E_k, D_k)$. This makes the second stage surprisingly fast, with the total cost of evaluating all $q_2(E_k,D_k)$ comparable to just two iterations of $q_1(\theta)$ updates in the first stage; thus Stage 2 takes much less time than Stage 1. 

For each object $k$, Stage 2 can be extended with an optional refinement step to improve the accuracy of inferred object features, as detailed in Appendix~\ref{apx: refinement}.

\subsection{Approximation for efficient two-stage inference} \label{sec: final approximation}
The two-stage VI requires the unnormalised marginal posteriors in \eqref{eq: stage 1 unnormalised goal} and \eqref{eq: stage 2 unnormalised goal}%.
, wherein the marginal associations in each are respectively given as follows using \eqref{eq: association prior}, \eqref{eq: association prior form 1}:
\vspace{-0.5em}
\begin{align} \notag
    &\sum_{D}p(\theta,M|\Lambda,D)\hat{p}(D)=\frac{e^{-\Lambda_{0}}}{M !}\Lambda_{0}^{\sum_{j=1}^{M} \delta[\theta_j=0]}\prod_{k=1}^{K_n}\Xi_k(\Lambda_k,\theta),\\ \label{eq: partial marginal assoc}
    &\sum_{D_{k-}}p(\theta,M|\Lambda,D)\hat{p}(D_{k-}\!)=\frac{e^{-\Lambda_{0}-D_k\Lambda_k}}{M !}\Lambda_{0}^{\sum_{j=1}^{M} \delta[\theta_j=0]}\\[-0.5em]  \notag
    &\qquad\qquad \quad \times(\Lambda_{k}D_k)^{\sum_{j=1}^{M} \delta[\theta_j=k]} \prod\nolimits_{k-}\Xi_k(\Lambda_k,\theta), \\[-2.0em]\notag
\end{align}
% where
\begin{align} \label{eq: original expectation}
    \Xi_k(\Lambda_k,\theta)&:=\E_{\hat{p}(D_k)}\left[e^{-D_k\Lambda_k} (\Lambda_kD_k)^{\sum_{j=1}^{M_n} \delta[\theta_{j}=k]}\right].\\[-2.0em]\notag
\end{align}

Although $\Xi_k(\Lambda_k,\theta)$ can be expressed analytically, their product complicates efficient optimisation in both stages. To enable efficient two-stage inference in PiVoT, we introduce one approximation: 
\vspace{-0.5em}
\begin{align} \label{eq: final approx for Xi}
    \Xi_k(\Lambda_k,\theta)\approx e^{-p_k^{e \prime}p_k^d\Lambda_k} (p_k^{e \prime}p_k^d\Lambda_k)^{\sum_{j=1}^{M_n} \delta[\theta_{j}=k]}.\\[-2.0em]\notag
\end{align}
This approximation replaces $D_k$ in $\Xi_k$ with its mean $\E_{\hat{p}(D_k)}[D_k] = p_k^{e \prime}p_k^d$ using \eqref{eq: predictive Dk},
corresponding to a first-order Taylor approximation of the expectation $\Xi_k$ \cite{papoulis2002probability}.

An alternative and perhaps more principled interpretation of \eqref{eq: final approx for Xi} is that it replaces a marginalised point process within the superposed measurement process by an NHPP with intensity $p_k^{e\prime}p_k^d \Lambda_k \ell(Y_j|X_k,P_k)$. This approximation is detailed and justified in Appendix~\ref{apx: alternative approx interpret}, where it is shown to be the unique NHPP minimising the KLD from the original marginalised process. 

Furthermore, Appendix \ref{apx: justification of approx for far away} analyses the impact of the approximation in \eqref{eq: final approx for Xi} on inference accuracy. 
A key takeaway is that, for an object sufficiently far from others, the approximation has minimal impact on accuracy, and residual error can be removed through the additional Stage 2 refinement steps.

\section{Stage 1: Detection and tracking} \label{sec: stage 1}
This section first presents the Stage 1 evidence lower bound (ELBO) \cite{blei2017variational}, then the corresponding CAVI updates and an early identification and removal procedure for ineffective birth objects, both under the Stage 1 block in Fig.~\ref{fig: FrontDemo}(e). This removal step, for which we develop a formal guarantee, enables efficient discovery of birth objects over large surveillance areas.

\subsection{Stage 1 ELBO} \label{sec: stage 1 ELBO}
Recall from Section \ref{sec: first stage overview} that Stage 1 aims to approximate the marginal posterior in \eqref{eq: stage 1 approximation goal} with variational inference.
This amounts to optimising $q_1$ to maximise the following ELBO:
\begin{align} \notag
\mathcal{F}_1=E_{q_1(X)q_1(\Lambda)q_1(P)q_1(\theta)}\log \frac{\bar{p}(X, \Lambda, P, \theta , Y)}{q_1(X)q_1(\Lambda)q_1(P)q_1(\theta)},
\end{align}
where the joint density $\bar{p}(X, \Lambda, P, \theta , Y)$ is defined as 
\vspace{-0.5em}
\begin{align} \notag
    &\bar{p}(X, \Lambda, P, \theta , Y) := \hat{p}(X,\Lambda,P) p(Y|\theta,P,X)\\ \label{eq: stage 1 unnormalised}
    &\!\!\!\times \tfrac{1}{M!} e^{-\Lambda_{0}-\sum_{k=1}^K\!p_k^{e \prime}p_k^d\Lambda_k} h (\Lambda,D\!=\![p_1^{e \prime}p_1^d,...,p_K^{e \prime}p_K^d],\theta),
\end{align}
where $h(\cdot)$ is defined in \eqref{eq: association prior form 1}. The last line in \eqref{eq: stage 1 unnormalised} approximates the marginal $\sum\nolimits_{D}p(\theta,M|\Lambda,D)\hat{p}(D)$ in \eqref{eq: partial marginal assoc} 
using \eqref{eq: final approx for Xi}, so that the resulting CAVI updates admit closed forms.

This ELBO $\mathcal{F}_1$ enables efficient CAVI to achieve the Stage 1's inference goal in \eqref{eq: stage 1 approximation goal}. Specifically, CAVI minimises KL$(q_1(X)q_1(\Lambda)q_1(P)q_1(\theta)\|\bar{p}(X, \Lambda, P, \theta | Y))$ by iteratively updating each of $q_1(X),q_1(\Lambda),q_1(P),q_1(\theta)$ to its optimal value while keeping others fixed. Each update ensures a non-increasing KLD, and subsequently, convergence to a stationary point is guaranteed.

\subsection{Coordinate ascent updates}  \label{sec: ca updates}
This section presents the updates for each $q_1(X)$, $q_1(\Lambda)$, $q_1(P)$, $q_1(\theta)$, using standard CAVI update formula \cite{bishop:2006:PRML,blei2017variational}: 
\begin{align} \label{eq: CAVI update formula}
    q_1(z_i) \propto \exp\left(\E_{q_1(z_{i-})} \log \bar{p}(X, \Lambda, P, \theta, Y) \right),
\end{align}
where $z_i$ represents each variable in $\{X,\Lambda,P,\theta\}$, $z_{i-}$ denotes all remaining variables, and $\bar{p}$ is given in \eqref{eq: stage 1 unnormalised}. 
The final update expressions given below are for the positional-only likelihood in \eqref{measurement model}; the corresponding Doppler-augmented updates are presented in Section \ref{sec: Doppler PiVoT}. 

The specific structure of $\bar{p}$ in \eqref{eq: stage 1 unnormalised} implies that the updated $q_1$ obtained via the CAVI formula in \eqref{eq: CAVI update formula} factorises as
\vspace{-0.5em}
\begin{align} \notag
&q_1(X)q_1(\Lambda)q_1(P)q_1(\theta)\!=\!\prod_{k=1}^{K_n} \!q_1(X_k)q_1(\Lambda_k)q_1(P_k)\!\prod_{j=1}^M \!q_1(\theta_j),
\\ \label{eq: all updates form}
    &q_1(X_k)\!=\!\mathcal{N}(\mu_k,\Sigma_k\!), \ q_1(\Lambda_k)\!=\!\mathcal{G}(\eta_k,\rho_k\!), \ q_1(P_k)\!=\!\mathcal{W}(\Phi_k,\phi_k\!)\\[-2.0em]\notag
\end{align}
where $(\mu_k,\Sigma_k)$, $(\eta_k,\rho_k)$, $(\Phi_k,\phi_k)$ are the corresponding variational parameter pairs for the Gaussian, Gamma, and Wishart distributions. Each parameter pair is updated in turn while the others are held fixed and used in the required expectations.

The following updates will apply to all legacy and newly-born objects, differing only in the respective model parameters.\\
\textbf{State update}: Setting $z_i=X$ in \eqref{eq: CAVI update formula} and substituting \eqref{eq: all updates form}, \eqref{eq: stage 1 unnormalised}, \eqref{eq: prior parameters}, \eqref{eq: individual likelihood}, the CAVI update for $q_1(X)$ reduces to updating each $q_1(X_k)$ for $k=1,...,K_n$ independently according to
\begin{align} \notag
    q_1(X_k) \!\propto \! \hat{p}(X_k) \!\prod\nolimits_{j=1}^M\!\!\exp(q_1(\theta_j\!=\!k) \E_{q_1(P_k)}\! \log \ell (Y_j|X_k,P_k))\\[-0.7em] \label{eq: general state update}
\end{align}
Using \eqref{eq: all updates form} and the positional-only likelihood $\ell$ in \eqref{measurement model}, each $\mu_k,\Sigma_k$ in  \eqref{eq: all updates form} can be obtained independently from a Kalman filter with prior $\hat{p}(X_k)$ in \eqref{eq: prior parameters} and a linear Gaussian likelihood:
\vspace{-0.5em}
\begin{align} \label{eq: state update}
    & \qquad\qquad q_1(X_k)\propto \hat{p}(X_k)\mathcal{N}\left(\overbar{y}_k;HX_{k},\overbar{R}_k\right), \\ \notag
    &\overbar{R}_k=\frac{\Phi_k^{-1}}{\phi_k\sum_{j=1}^{M}q_1(\theta_{j}=k)}, \quad 
    \overbar{y}_k=\frac{\sum_{j=1}^{M}y_{j} q_1(\theta_{j}=k)}{\sum_{j=1}^{M}q_1(\theta_{j}=k)}.\\[-2.0em]\notag
\end{align}
\textbf{Rate update}: Similarly, setting $z_i=\Lambda$ in \eqref{eq: CAVI update formula} and substituting \eqref{eq: all updates form}, \eqref{eq: stage 1 unnormalised}, \eqref{eq: prior parameters}, the update of $q_1(\Lambda)$ reduces to independently evaluating each $q_1(\Lambda_k)$ for $k=1,...,K_n$ as
\begin{align} \notag
    q_1(\Lambda_k)=\mathcal{G}(\eta_k,\rho_k\!)\propto \hat{p}(\Lambda_k)\exp(-p_k^{e \prime}p_k^d\Lambda_k)\Lambda_k^{\sum_{j=1}^M q_1(\theta_j=k)},\\[-2.5em]\notag
\end{align}
\begin{align} \label{eq: rate update}
    \eta_k=\eta'_{k}\!+\!\sum_{j=1}^M q_1(\theta_j=k), \quad \ \rho_k=\rho'_{k}/(p_k^{e \prime}p_k^d\rho'_{k}+1).\\[-2em]\notag
\end{align}
\textbf{Shape update}: For the CAVI update of $q_1(P)$, using \eqref{eq: CAVI update formula} with \eqref{eq: all updates form}, \eqref{eq: stage 1 unnormalised}, \eqref{eq: prior parameters}, \eqref{eq: individual likelihood}, it becomes evaluating each $q_1(P_k)$ via
\begin{align}\notag
    q_1(P_k) \!\propto\! \hat{p}(P_k)\!\prod\nolimits_{j=1}^M\!\!\exp(q_1(\theta_j\!=\!k) \E_{q_1(X_k)}\! \log \ell (Y_j|X_k,P_k)).
\end{align}
With the prior $\hat{p}(P_{k})=\mathcal{W}(\Phi_{k}',\phi_{k}')$ in \eqref{eq: prior parameters} and the positional-only likelihood $\ell$ in \eqref{measurement model}, this gives $q_1(P_k)=\mathcal{W}(\Phi_k,\phi_k)$ with parameters updated independently for each $k=1,...,K_n$ as:
\vspace{-0.5em}
\begin{align}\notag
    &\Phi_{k}^{-1}\!=\!{\Phi_{k}'}^{\!-1}\!\!+\!\!\sum_{j=1}^M q_1(\theta_j\!=\!k)\!\Big[\!(y_j\!-\!\!H\mu_k)(y_j\!-\!\!H\mu_k)^{\!\top} \!\!+\!H\Sigma_kH^{\!\top}\!\Big]\\[-0.5em] \label{eq: shape update}
    & \qquad\qquad\qquad \phi_k= \phi_{k}'+\sum_{j=1}^M q_1(\theta_j=k).\\[-2.0em]\notag
\end{align}
\textbf{Association update}: Finally, for the update of $q_1(\theta)$, using \eqref{eq: CAVI update formula} with \eqref{eq: all updates form}, \eqref{eq: stage 1 unnormalised}, \eqref{eq: prior parameters}, \eqref{eq: individual likelihood}, the association for each data $Y_j$ (i.e. $q_1(\theta_j)$) is independently updated for $j=1, ...,M$ as 
\vspace{-0.5em}
\begin{align} \label{eq: association update}
    &q_1(\theta_j)\propto \Lambda_0\ell_0(Y_j)\delta[\theta_{j}=0]+\sum_{k=1}^{K_n} p_k^{e \prime}p_k^d S_j^k\delta[\theta_{j}=k],\\\notag
    &S_j^k\!:= \! \exp( \E_{q_1(\Lambda_k)}\!\log\Lambda_k\!+\!\E_{q_1(X_k)q_1(P_k)}\!\log\ell(Y_j|X_k,P_k)).
\end{align}
Substituting the positional-only likelihood $\ell$ in \eqref{measurement model}, we have 
\begin{align} \label{eq: Sjk positional only l}
    & \qquad\qquad\qquad\qquad S_j^k = \exp(T_j^k),\\ \notag
    \!\!\!T_j^k&:= \!-\tfrac{1}{2}\Big[(y_j\!-\!H\mu_k)^{\!\top} \!\phi_k\Phi_k (y_j\!-\!H\mu_k)\!+\!\Tr(\phi_k\Phi_kH\Sigma_kH^\top\!)\Big] \\ \label{eq: Tjk definition}
    & +\psi(\eta_k)\!+\!\log \rho_k \!+\! \tfrac{1}{2}\big[\psi_{d_Y}\!(\tfrac{\phi_k}{2})\!+\!\log|\Phi_k|\!-\!d_Y\!\log(\pi)\big]
\end{align}
where $d_Y$ is the dimension of $y_j$, $|\cdot|$ denotes the determinant, and $\Tr(\cdot)$ denotes the trace. $\psi(\cdot)$ and $\psi_{d_Y}\!(\cdot)$ are the digamma and multivariate digamma functions, respectively.

\begin{figure*}[tp!]
\centerline{\includegraphics[width=18cm]{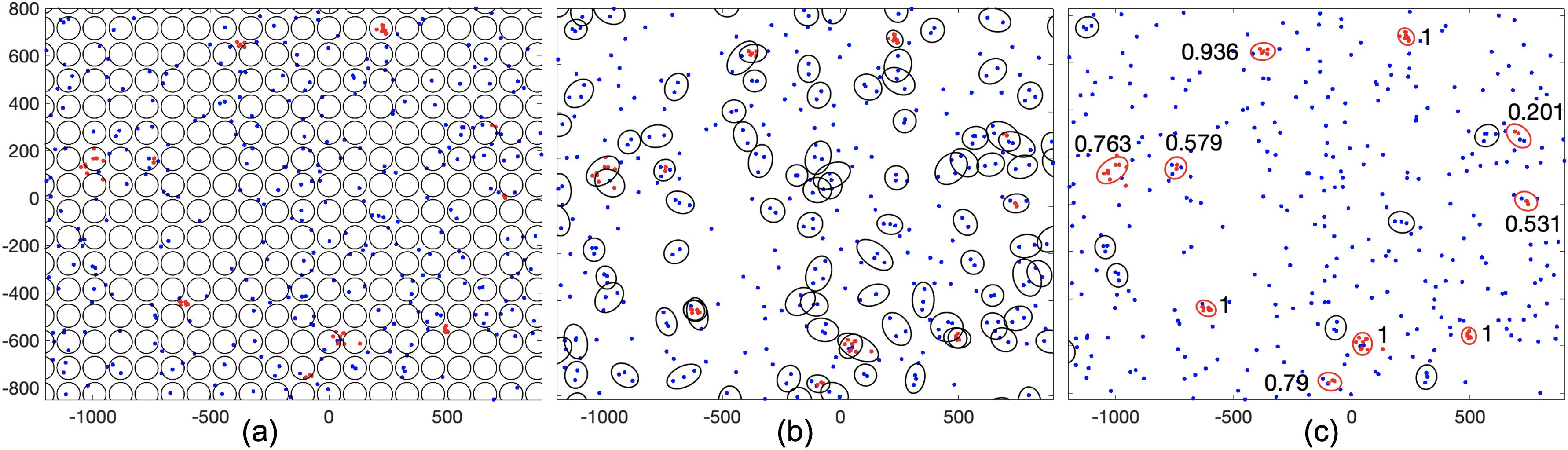}}
    \caption{Clutter-robust detection using PiVoT Stage 1. The data is from Dataset 3 (first time step) in Section \ref{sec: comparison simulation}, showing a zoomed-in region of the surveillance area. Blue and red dots represent clutter and true object measurements. Black ellipses denote the positions in $q_1(X_k)$ and shapes $q_1(P_k)$ in that iteration. (a) Initial CAVI iteration with 1312 births across the full surveillance area. (b) 3rd iteration with 392 remaining births after ineffective birth removal (Section \ref{sec: ineffective birth}). (c) 24th iteration (converged) with 47 remaining births. Red-highlighted ellipses mark objects with existence probability above 0.1, computed in Stage 2 (Section \ref{sec: stage 2}) and displayed next to them. Notably, these red ellipses precisely correspond to true objects.
    \label{fig:clustering}
    }
    \vspace{-0.9em}
\end{figure*}

\subsection{Early identification of ineffective births} \label{sec: ineffective birth}
Exhaustive birth detection under heavy clutter requires initially setting a birth count $K_n^b$ far above the true number, increasing feature updates and association dimensionality. However, many ineffective births that ultimately fail to associate with measurements upon CAVI convergence are naturally irrelevant to new object detection. Identifying and removing them early, by eliminating them from the prior and reducing the association dimension, greatly improves CAVI efficiency.

PiVoT implements this by monitoring the total association mass, $\sum_{j=1}^M q_1(\theta_{j}=k)$, in each CAVI iteration. If this count falls below a predefined threshold $L$ for newly born objects $k=K_{n-1}+1,...,K_n$, those births are deemed ineffective and removed from the model, and subsequent CAVI iterations update only the remaining objects' $X,\Lambda,P$, continuing from their current variational distribution (i.e. without restarting the optimisation, as illustrated in Fig. \ref{fig: FrontDemo}(e)).

This procedure is formally justified by Theorem~\ref{main theorem} in Section~\ref{sec: theoretical analysis}, informally stated below.
In short, any removed birth is provably redundant at CAVI convergence.

\noindent\textbf{Theorem~\ref{main theorem} (informal).}
\textit{Under mild modelling conditions, one can compute a sufficient threshold $L$ such that, if $\varsigma_k\!:=\!\sum_{j=1}^M \!q_1(\theta_{j}\!=\!k) \!<\! L$ at any CAVI iteration for any newly born object $k$, then subsequent CAVI updates drive $\varsigma_k$ monotonically to $0$.
Equivalently, at convergence, that birth object assigns zero probability of associating with any measurement.
}

The formal theorem and full analysis are deferred to Section~\ref{sec: theoretical analysis}. To our knowledge, existing CAVI analyses do not provide a computable sufficient condition certifying that a component will be driven to a zero-association fixed point. We derive such a condition by bounding how the total association mass propagates through subsequent coordinate updates, and by using the Lambert-W characterisation in Corollary~\ref{lemma: lambert W corr} in Appendix~\ref{sec: theoretical lemma and proof} to obtain an explicit sufficient threshold.

As Theorem~\ref{main theorem} gives only a sufficient condition, in practice we set $L$ heuristically, for example $L=0.5$ for speed or $L=0.05$ for more conservative pruning, noting that an overly large $L$ may discard valid births.

The modelling conditions required by Theorem~\ref{main theorem} are given in Assumption \ref{assump} in Section~\ref{sec: theoretical analysis}. Notably, they are satisfied by the efficient Model Scenario in the following Section \ref{sec: initialisation, clustering demo}, together with an extra uniform clutter assumption.

\subsection{Efficient clutter-robust birth detection: parameter suggestions and  demo} \label{sec: initialisation, clustering demo}
Denote by $X_k^p$ and $X_k^v$ the positional and remaining components of the kinematics state $X_k$. We recommend the following model scenario for efficient PiVoT implementation:

\noindent\textbf{Model Scenario:} 
\textit{For each newly-born object $k$, assume the birth prior in \eqref{eq: birth prior} is factorisable as $p(X_k)=p(X_k^p)p(X_k^v)$, and that $p(X_k^p)$ is a flat Gaussian approximating a uniform positional birth prior.}

In this model scenario, the uninformative positional birth prior greatly simplifies birth state update: \eqref{eq: state update} reduces to $q_1(X_k)=\mathcal{N}(X_k^{p};\overbar{y}_k,\overbar{R}_k)p(X_k^v)$; see Remark \ref{birth prior remark} under Assumption \ref{assump} in Section~\ref{sec: theoretical analysis}. Hence, the updated birth state is obtained directly without using a Kalman filter in any CAVI iteration, which reduces the per birth computational cost.

For efficient clutter-robust birth detection, we suggest:

\textit{(i)} The above Model Scenario holds, i.e. a factorised prior with a flat Gaussian $p(X_k^p)$ is used for all newly-born objects.

\textit{(ii)} Constrain each newly born object's initial positional variational distribution $q_1(X_k^p)$ to a distinct small region (crucial for updated $q_1(X_k)$ to localise the true object within it under dense clutter \cite{gan2024variational}). This can be achieved by setting a small covariance $\Sigma_k^p$ with a distinct mean $\mu_k^p$, where $\mu_k^p$ and $\Sigma_k^p$ denote the corresponding positional subvector and submatrix of the variational parameters $\mu_k$ and $\Sigma_k$ in \eqref{eq: all updates form}.

\textit{(iii)} Tile the entire surveillance area with these small regions for exhaustive detection, which implicitly determines an often very large $K_n^b$. However, ineffective birth removal in PiVoT Stage 1 inference (Section \ref{sec: ineffective birth}) then adaptively reduces $K_n^b$, improving efficiency and refining the model.

An example of such an initial $q_1(X_k)$ is shown in Fig. \ref{fig:clustering}(a), where each initial $q_1(X_k)$ covers a distinct small region, and collectively spans the full surveillance area (extending beyond the figure). This results in an initial $K_n^b$ of $1312$. Fig. \ref{fig:clustering}(b) and \ref{fig:clustering}(c) further illustrate the reliable clustering performance of PiVoT Stage 1 under dense clutter, using the same parameter settings as in Section \ref{sec: comparison simulation}. The proposed early ineffective birth removal progressively reduces $K_n^b$ to $952$, $631$, $392$ in the first three iterations, ultimately converging at $47$ by iteration $24$, greatly improving efficiency. Each iteration refines cluster positions while gradually eliminating those containing only sparse measurements. Notably, the relatively high existence probabilities (computed in PiVoT Stage 2) accurately distinguish true births from clutter-induced clusters in Fig. \ref{fig:clustering}(c).

\section{Stage 2: Existence evaluation} 
This section formulates the Stage 2 ELBO and its optimal $q_2(E_k,D_k)$ over existence and detectability (see Fig.~\ref{fig: FrontDemo}(e), Stage 2). While direct evaluation scales as $\mathcal{O}(K_n^2M)$, we derive an exact closed-form expression that reduces the cost to $\mathcal{O}(K_nM)$. We also provide a numerically stable implementation that preserves the $\mathcal{O}(K_nM)$ complexity,
making Stage 2 comparable in cost to a few updates of $q_1(\theta)$ in \eqref{eq: association update}.
\label{sec: stage 2}
\subsection{Stage 2 ELBO and the optimal variational solution}
\subsubsection{Stage 2 ELBO} Recall from Section \ref{sec: second stage overview} that Stage 2 inference targets \eqref{eq: stage 2 marginal posterior}. Specifically, holding Stage 1 solutions
$q_1(X)q_1(\Lambda)q_1(P)$ fixed, Stage 2 seeks $q_2(\theta,E_k,D_k)$ by maximising, for each object $k=1,...,K_n$, the ELBO
\begin{align}\notag
    \mathcal{F}_2^k\!=\!E_{q_2(\theta,E_k,D_k\!)q_1\!(\!X\!)q_1\!(\!\Lambda\!)q_1\!(\!P\!)}\!\log\!\frac{\bar{p}(X, \Lambda, P, \theta , E_k, D_k, Y)}{q_2(\theta,E_k,D_k\!) q_1\!(\!X)q_1\!(\Lambda)q_1\!(\!P)}
\end{align}
where the structured variational distribution $q_2(\theta,E_k,D_k)=q_2(E_k,D_k)q_2(\theta|E_k,D_k)$ is used to fully preserve conditional dependence, and the joint density $\bar{p}$ is defined as 
\begin{align} \notag
    \bar{p}(X, \Lambda, P, \theta , E_k, D_k, & Y) = \hat{p}(X,\Lambda,P)\hat{p}(E_k)p(D_k|E_k)\\ \label{eq: stage 2 unnormalised}
    &\times p(Y|\theta,P,X)\bar{p}(\theta,M|\Lambda,D_k),\\[-2.0em]\notag
\end{align}
\begin{align} \label{eq: stage 2 marginal assoc}
    &\bar{p}(\theta,M|\Lambda,D_k):=\tfrac{1}{M !}e^{-\Lambda_{0}-D_k\Lambda_k-\sum\nolimits_{s=1,s\neq k}^{K_n} p_s^{e \prime}p_s^d\Lambda_s }\\\notag
    & \!\times \! h (\Lambda,D\!=\![p_1^{e \prime}p_1^d,...,p_{k-1}^{e \prime}p_{k-1}^d,D_k,p_{k+1}^{e \prime}p_{k+1}^d,...,p_K^{e \prime}p_K^d],\theta),
\end{align}
where $h$ is given in \eqref{eq: association prior form 2}. The only approximation in $\mathcal{F}_2^k$, relative to the ELBO under the exact Stage~2 unnormalised posterior in \eqref{eq: stage 2 unnormalised goal}, is the use of $\bar{p}(\theta,M|\Lambda,D_k)$ in \eqref{eq: stage 2 marginal assoc} to approximate the marginal
$\sum_{D_{k-}} p(\theta,M\mid\Lambda,D)\hat{p}(D_{k-})$ in \eqref{eq: stage 2 unnormalised goal} and \eqref{eq: partial marginal assoc}. This approximation follows directly from \eqref{eq: final approx for Xi}, in the same manner as the Stage~1 ELBO formulation in Section~\ref{sec: stage 1 ELBO}.

\subsubsection{Optimal variational distribution} 
Maximising the Stage 2 ELBO $\mathcal{F}_2^k$ with Stage 1 solution $q_1(X)q_1(\Lambda)q_1(P)$ fixed admits a closed-form global optimum $q_2^*(E_k,D_k)q_2^*(\theta|E_k,D_k)$ for each object $k=1,2,...,K_n$.
Using the optimal update formula for conditional variational distributions \cite{Gan2022,bishop:2006:PRML}, this optimum is given by
\vspace{-0.5em}
\begin{align} \notag
    &\log q_2^*(\theta|E_k,D_k)= \E_{q_1(X,\Lambda,P)}\log \bar{p}(X, \Lambda, P, \theta , E_k, D_k,  Y)\\ 
    & \label{eq: conditional association} \quad\quad\quad\quad\quad\quad\quad\quad + c_1\\\label{eq: direct conditional update}
    & \log q_2^*(E_k,D_k) = c_2-\E_{q_2^*(\theta|E_k,D_k)}\log q_2^*(\theta|E_k,D_k)\\ \notag
    & \  + \E_{q_1(X,\Lambda,P)q_2^*(\theta|E_k,D_k)}\log \bar{p}(X, \Lambda, P, \theta , E_k, D_k,  Y),\\[-2.0em]\notag
\end{align}
where $c_1,c_2$ are constants. Recall that only $q_2^*(E_k,D_k)$ is needed as the filtering posterior in \eqref{eq: objective filtering posterior}. However, from \eqref{eq: direct conditional update}, direct computation requires first evaluating $q_2^*(\theta|E_k,D_k)$, which alone entails $M$ evaluations of the categorical distribution $q_2^*(\theta_j|E_k,D_k)$ of size $K_n+1$, leading to a total of $\mathcal{O}(K_n^2 M)$ exponential/logarithm operations for all $k=1,...,K_n$. This makes direct computation infeasible for large $K_n$ and $M$. Next, we introduce a surprisingly efficient method to compute all $q_2^*(E_k,D_k)$ directly.

\subsection{Efficient evaluation of the optimal $q_2^*(E_k,D_k)$} \label{sec: efficient optimal}
Here we derive an exact expression for $q_2^*(E_k,D_k)$ with overall complexity $\mathcal{O}(K_n M)$ for all $k=1,\ldots,K_n$.
The key idea is to collapse the dependence on $\theta$ by substituting the normalised form of $\log q_2^*(\theta| E_k,D_k)$ into \eqref{eq: direct conditional update}, and then rearranging the resulting expression until the summation over $\theta$ becomes tractable.
The final form reveals that the expensive part ($S_j^\text{sum}$ in \eqref{eq: Sj sum}) for computing $q_2^*(E_k,D_k)$ is shared across different $k$ and therefore needs to be computed only once. This shared structure is not evident from the direct update in \eqref{eq: direct conditional update}, yet is crucial for efficient evaluation.
We defer the detailed derivation to Appendix~\ref{apx: Stage2 derivation} and present the final form below:
\vspace{-0.5em}
\begin{align}\label{eq: existance update}
    q_2^*(E_k,D_k)\propto\hat{p}(E_k)p(D_k|E_k)\exp(g(D_k)),\\[-2.5em]\notag
\end{align}
\begin{align} \label{eq: simplified g}
g(D_k)&=\!-\eta_k\rho_kD_k\!+\!\sum_{j=1}^M\log\! \Big(\!S_j^\text{sum}+(D_k-p_k^{e \prime}p_k^d)S_j^k\Big)\\[-3em]\notag
\end{align}
\begin{align}\label{eq: Sj sum}
S_j^\text{sum}&=\Lambda_0\ell_0(Y_j)+\sum_{k=1}^{K_n} p_k^{e \prime}p_k^d S_j^k,\\[-2.0em]\notag
\end{align}
where $S_j^k$ is defined in \eqref{eq: association update}, and $S_j^\text{sum}$ is computed once for all $k$. If $q_1(\theta)$ is the last updated distribution in Stage 1, all $S_j^k$ have already been computed in \eqref{eq: association update} along with $S_j^\text{sum}$ as the normalisation constant. The other quantities $\hat{p}(E_k)$ and $p_k^{e\prime}$ are defined in \eqref{eq: prior parameters}, $p(D_k|E_k)$ and $p_k^d$ in \eqref{eq: detectability model}, and $\eta_k$, $\rho_k$ in \eqref{eq: all updates form}.

The globally optimal $q_2^*(E_k,D_k)$ in \eqref{eq: existance update} and \eqref{eq: simplified g} is now in a highly simplified form, requiring only $\mathcal{O}(K_nM)$ exponential/logarithm operations (the main computational cost) for updating all $K_n$ objects, compared to at least $\mathcal{O}(K_n^2 M)$ via direct computation in \eqref{eq: direct conditional update}. This is much more efficient and matches the complexity of updating $q_1(\theta)$ in \eqref{eq: association update}.

Finally, the object $k$'s existence probability is 
\vspace{-0.3em}
\begin{align} \label{eq: marginal existance calc}
    &q_2^*(E_k)=\text{Ber}(p_k^e)\propto \hat{p}(E_k)\sum_{D_k}p(D_k|E_k)\exp (g(D_k)) \\\notag
    &p_k^e=l_e/(l_e+l_n), \qquad\quad l_n=(1-p_k^{e\prime})\exp (g(D_k=0)),\\\notag
    &l_e=p_k^{e\prime} \big[p_k^d\exp (g(D_k=1))+(1-p_k^d)\exp (g(D_k=0))\big],
\end{align}
%}
\subsection{Numerically stable computation of $g(D_k)$} \label{sec: numerical stable gD}
The numerically stable evaluation of $g(D_k)$ using \eqref{eq: simplified g} requires care, especially for $g(D_k=0)$, which involves leave-one-out log-sums. A naive stable evaluation would recompute these sums using log-sum-exp for each $k$, reintroducing quadratic complexity. We avoid this by reusing stable global sums where possible, with a separate treatment only when the removed term is the largest one in the sum. Appendix~\ref{apx: numerical stable gD} gives the detailed numerically stable computation of $g(D_k)$, specified in \eqref{eq: gD1 computation}--\eqref{eq: numerical stable g0 k = Ij}, which preserves the $\mathcal{O}(K_nM)$ complexity.

\section{PiVoT with Doppler Velocity Measurements} 
\label{sec: Doppler PiVoT}

PiVoT can be readily extended to incorporate Doppler velocity measurements, which supply crucial radial motion information commonly exploited in radar systems to greatly improve tracking accuracy. This section first introduces a new Doppler-incorporated Poisson measurement model that retains the efficient linear Gaussian single-object likelihood. We then derive the corresponding PiVoT updates, where most of the existing updates remain unchanged. 
\subsection{Doppler-augmented Poisson measurement model} \label{sec: Doppler model}

\begin{figure*}[tp!]
\centerline{\includegraphics[width=16cm]{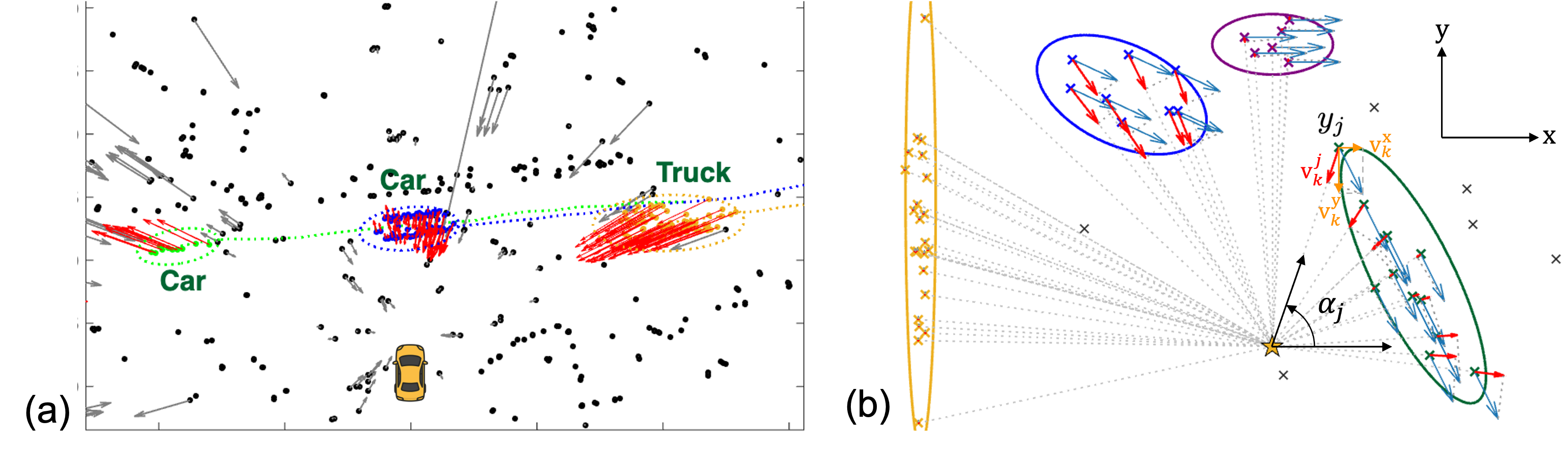}}
    \caption{(a) Automotive radar data from \cite{schumann2021radarscenes}, with PiVoT estimates of moving objects overlaid (ellipses: shape, curves: tracks). Coloured points and arrows denote positional measurements and Doppler velocities from ground truth moving objects, whereas black points and arrows denote static background and clutter. (b) Simulated measurements under the Doppler NHPP model. Ellipses show object shapes $P_k$. Crosses denote positional measurements $y_j$ from clutter (black), stationary objects (yellow) and moving objects (other colours). Measurements from the same object share the same true velocity, shown by blue arrows, while their radial velocities relative to the sensor (yellow star) are indicated by red arrows. One example detection $y_j$ is annotated with its bearing angle $\alpha_j$, true velocity components $\mathrm{v}_k^\mathrm{x}, \mathrm{v}_k^\mathrm{y}$ and radial velocity $\mathrm{v}_k^j$ ($k$ being the true generating object's index).
    }
    \label{fig: Dop Illust Combined}
    \vspace{-0.9em}
\end{figure*}

Doppler velocity measures the relative radial velocity of a detection point, namely the component of its actual velocity along the line of sight to the sensor.
Fig.~\ref{fig: Dop Illust Combined} illustrates this geometric effect.

To make the model concrete, we consider a 2D Cartesian sensing setting. Each object state $X_k$ contains planar velocity components $\mathrm{v}_k^\mathrm{x},\mathrm{v}_k^\mathrm{y}$ for the x and y axis, and we let $G$ extract them so that $G X_k = [\mathrm{v}_k^\mathrm{x},\mathrm{v}_k^\mathrm{y}]^\top$. For any positional measurement $y_j\in \mathbb{R}^2$, let $\alpha_j$ be its bearing angle relative to the sensor and define the line-of-sight unit vector $e_j:=[\cos\alpha_j,\sin\alpha_j]$. The true radial velocity of object $k$ at this bearing is therefore $\mathrm{v}_k^j= e_j G X_k$. A visual illustration of $\mathrm{v}_k^\mathrm{x}$, $\mathrm{v}_k^\mathrm{y}$, $\mathrm{v}_k^j$, $\alpha_j$ for a detection point $y_j$ is provided in Fig. \ref{fig: Dop Illust Combined}(b).

Since Doppler is naturally attached to each positional detection, we model each measurement $Y_j=(y_j,v_j)$ in a marked NHPP, where each positional measurement $y_j$ is equipped with a mark $v_j\in\mathbb{R}$. This retains the NHPP likelihood in \eqref{likelihood poisson existence}, with $\ell_0$ and $\ell$ defined below. Assuming the observed Doppler velocity $v_j$ is a Gaussian measurement of the true radial velocity $\mathrm{v}_k^j$, the single-object likelihood still admits a convenient linear Gaussian form:
\begin{align} \notag
\ell(Y_j\mid X_k,P_k)
&= \mathcal{N}(v_j; e_j G X_k,\sigma_v^2)\,
  \mathcal{N}(y_j; HX_k, P_k^{-1}),\\ \label{eq: Doppler likelihood}
  e_j&:=[\cos\alpha_j,\sin\alpha_j],
\end{align}
where $\sigma_v^2$ is the Doppler noise variance. As defined above, $\alpha_j$ is the bearing angle of $y_j$ (see Fig. \ref{fig: Dop Illust Combined}(b)), and $G$ and $H$ are the velocity and positional observation matrices, respectively. For instance, when $X_k= [\mathrm{p}_k^\mathrm{x}, \mathrm{v}_k^\mathrm{x},\mathrm{p}_k^\mathrm{y},\mathrm{v}_k^\mathrm{y}]^\top$ represents the position and velocity in the x and y axes, we have
\begin{align} \label{eq: G and H}
    G=\begin{bmatrix}
        0& 1 & 0 &0\\
        0 & 0 & 0 & 1
    \end{bmatrix}, 
    \qquad 
    H=\begin{bmatrix}
        1& 0& 0& 0\\
        0& 0& 1& 0
    \end{bmatrix}.
\end{align}
The bearing angle $\alpha_j$ in \eqref{eq: Doppler likelihood} can be computed from the known sensor position that receives $y_j$. For example, if the positional measurement $y_j=[y_j^\mathrm{x},y_j^\mathrm{y}]^\top$ and the corresponding sensor is located at $[s_j^\mathrm{x},s_j^\mathrm{y}]^\top$, then
$\alpha_j = \operatorname{atan2}(y_j^\mathrm{y}-s_j^\mathrm{y},\; y_j^\mathrm{x}-s_j^\mathrm{x})$. 

The clutter likelihood $\ell_0(Y_j)$ in \eqref{likelihood poisson existence} may be defined using a prelearned spatial–Doppler map or any suitable distribution over part or all of $Y_j=(y_j,v_j)$, analogous to Section~\ref{sec: measuremnt model}. For the Doppler component $v_j$, a common choice is a mixture of a uniform distribution over a prescribed velocity range and a normal distribution centred at zero, placing additional mass on stationary clutter. This completes the formulation of the Doppler-augmented Poisson measurement model. All joint priors and likelihood terms in \eqref{eq: likelihood joint}–\eqref{eq: association prior form 2} remain valid, with the single-object likelihood $\ell$ replaced by \eqref{eq: Doppler likelihood}.

Finally, we note that the convenient linear Gaussian single-object likelihood \eqref{eq: Doppler likelihood} in the Doppler-augmented Poisson measurement model relies on the assumption that all detection points from the same object share the same true velocity (see Fig. \ref{fig: Dop Illust Combined}(b)).
This is valid for a rigid object moving approximately in a straight line (including accelerated motion), but not for turning motion that induces velocity variation across the rigid body, or for nonrigid objects exhibiting micro Doppler effects. In such cases, the Doppler noise in \eqref{eq: Doppler likelihood} is expected to absorb part of the resulting model mismatch.
More accurate Doppler models that incorporate yaw rate as part of the object state exist \cite{knill2016direct,thormann2021incorporating} and can be extended to the NHPP setting, but these lose the convenient linear Gaussian form of \eqref{eq: Doppler likelihood}.
\subsection{Doppler-augmented PiVoT updates}
Since the two-stage inference framework and the corresponding updates in Sections \ref{sec: PiVoT challenges solution} to \ref{sec: stage 2} are formulated for the general NHPP likelihood in \eqref{likelihood poisson existence}, which includes the Doppler-augmented model in Section \ref{sec: Doppler model}, the derived update rules remain valid with only minor changes to the final expressions.

\subsubsection{Stage 1 updates} 
Stage 1 performs CAVI to iteratively update $q_1(X)$, $q_1(\Lambda)$, $q_1(P)$, and $q_1(\theta)$, which retain the same factorised structure and distributional forms as in \eqref{eq: all updates form}. 

The state update still follows the rule in \eqref{eq: general state update}, but with single-object likelihood $\ell$ now given by \eqref{eq: Doppler likelihood}. This yields the following independent state updates for each object $k$
\begin{align} \label{eq: Doppler state update}
q_1(X_k)\propto \hat{p}(X_k)\,
\mathcal{N}\!\left(\overbar{y}_k;HX_k,\overbar{R}_k\right)\,
\mathcal{N}\!\left(\overbar{v}_k;GX_k,\overbar{U}_k\right),
\end{align}
where $\hat{p}(X_k)$ is given in \eqref{eq: prior parameters}, and $\overbar{y}_k$ and $\overbar{R}_k$ are as defined in \eqref{eq: state update}. The matrices $H$ and $G$ are exemplified in \eqref{eq: G and H}. The Doppler related terms $\overbar{U}_k$ and $\overbar{v}_k$ take the form
\begin{align} \label{eq: pseudo measurements Dopp}
\overbar{U}_k
&=\sigma_v^2\Big(\sum\nolimits_{j=1}^M q_1(\theta_j=k)\,e_j^\top e_j\Big)^{-1},\\ \notag
\overbar{v}_k
&=\Big(\sum\nolimits_{j=1}^M q_1(\theta_j=k)\,e_j^\top e_j\Big)^{-1}
\sum\nolimits_{j=1}^M q_1(\theta_j=k)\,e_j^\top v_j,
\end{align}
where $\sigma_v^2, e_j$ are given in \eqref{eq: Doppler likelihood}. This update in \eqref{eq: Doppler state update} can be implemented by applying two standard Kalman filter updates in sequence, one to incorporate $\overbar{y}_k$ as in the positional update in \eqref{eq: state update}, and one to incorporate $\overbar{v}_k$ for the Doppler measurements.

The rate and shape updates remain unchanged: each $q_1(\Lambda_k)$ still follows \eqref{eq: rate update} and each $q_1(P_k)$ follows \eqref{eq: shape update}, as these do not involve the additional Doppler term $\mathcal{N}(v_j; e_j G X_k,\sigma_v^2)$ that augments the single-object likelihood $\ell$ in \eqref{eq: Doppler likelihood} relative to \eqref{measurement model}.

The data association still follows the independent updates in \eqref{eq: association update} for each $q_1(\theta_j)$. The clutter likelihood $\ell_0(Y_j)$ in \eqref{eq: association update} now incorporates the Doppler measurement. With the Doppler-augmented likelihood $\ell$ in \eqref{eq: Doppler likelihood}, the $S_j^k$ in \eqref{eq: association update} becomes
\begin{align} \label{eq: Sjk Doppler}
    S_j^k=\exp(T_j^k- \tfrac{1}{2\sigma_v^2} e_jG\Sigma_kG^\top e_j^\top)\mathcal{N}(v_j; e_j G \mu_k,\sigma_v^2),
\end{align}
where $T_j^k$ is defined in \eqref{eq: Tjk definition}; $v_j,\sigma_v^2,e_j,G$ are as in the Doppler likelihood \eqref{eq: Doppler likelihood}, and $\mu_k,\Sigma_k$ are variational parameters in \eqref{eq: all updates form}.

\subsubsection{Stage 2 updates} 
Recall that Stage 2 evaluates the existence and detectability distribution $q_2(E_k,D_k)$ for each object $k$, using the $q_1(X,\Lambda,P,\theta)$ obtained in Stage 1. All Stage 2 results in Sections \ref{sec: efficient optimal}, \ref{sec: numerical stable gD} depend on the single-object likelihood $\ell$ only through $S_j^k$. Therefore, the optimal $q_2^*(E_k,D_k), q_2^*(E_k)$ in \eqref{eq: existance update}, \eqref{eq: marginal existance calc}, the expression for $g(D_k)$ in \eqref{eq: simplified g}, \eqref{eq: Sj sum}, and its numerically stable and efficient evaluations in Appendix~\ref{apx: numerical stable gD}
all remain unchanged. We simply note that $S_j^k$ is now evaluated using its Doppler-augmented form in \eqref{eq: Sjk Doppler}.

\subsection{Properties of Doppler-augmented PiVoT} \label{sec: Property of Doppler PiVoT}
The Doppler-augmented PiVoT enables efficient detection and tracking using all received Doppler measurements through principled model-based inference. We highlight two advantages over common Doppler pruning and clustering heuristics.

First, PiVoT associates Doppler measurements using object shape, kinematics, and sensor geometry jointly through inference, rather than grouping similar Doppler values as in distance-based clustering methods \cite{scheiner2019multi, malzer2021constraint}. In practice, detections from the same object can exhibit markedly different Doppler velocities due to their relative geometry to the sensor, as seen from the arrows on the blue points in Fig.~\ref{fig: Dop Illust Combined}(a) and the green crosses in Fig.~\ref{fig: Dop Illust Combined}(b). Distance-based clustering may therefore incorrectly separate them, whereas PiVoT accounts for the geometric model and maintains a consistent association.

Second, for detection of moving objects, measurements with Doppler velocity close to zero are still useful and are fully exploited in PiVoT. As illustrated in Fig.~\ref{fig: Dop Illust Combined}(b), the purple points have almost zero radial velocity but arise from a moving object, and these measurements provide evidence for the existence of objects moving approximately perpendicular to the sensor’s line of sight. In contrast, a simple preprocessing step in radar pipelines is to prune such near-zero Doppler returns by treating them as stationary clutter.

\section{Algorithmic Realisation of PiVoT}
\label{sec: Algorithm structure}

\setlength{\textfloatsep}{5pt} % Reduce space before this algorithm
\begin{algorithm} [tp!]
% \SetAlgoLined
\algsetup{linenosize=\tiny}
  \scriptsize
\textbf{\textit{Prediction}}: Evaluate the predictive prior $\hat{p}_n$ in \eqref{eq: prior parameters} using the prediction step \eqref{eq: prediction}, with birth prior \eqref{eq: birth prior} set as in points~(i) and~(iii) of Section~\ref{sec: initialisation, clustering demo}.\\
\textbf{\textit{Stage 1 Initialisation}}: Initialise $q_1$ according to Appendix \ref{sec: CAVI initialisation}.\\
% \textit{Initialization for variational distribution:}\\
\textbf{\textit{Stage 1 CAVI updates}} \textit{(Section \ref{sec: ca updates})}:\\
\While {not converged}  
{

   For $k=1,2,...,K_n$, update kinematic states $q_1(X_{n,k})$ using \eqref{eq: general state update}.\\
   For $k=1,2,...,K_n$, update measurement rates $q_1(\Lambda_{n,k})$ using \eqref{eq: rate update}.\\
   For $k=1,2,...,K_n$, update object shapes $q_1(P_{n,k})$ using \eqref{eq: shape update}.\\
   For $j=1,2,...,M_n$, update data association $q_1(\theta_{n,j})$ using \eqref{eq: association update}.\\
   \textit{Ineffective birth removal (Section \ref{sec: ineffective birth})}:\\
   \If {$\sum_{j=1}^{M_n} q_1(\theta_{n,j}=k)<L$ for any  $k=K_{n-1}+1,...,K_n$} 
   {Remove components $\{X,\Lambda,P,E,D\}_{n,k}$ for all such $k$ from $\hat{p}_n$ and $q_1$, restrict each $\theta_{n,j}$ to the remaining indices, then update $K_n$ and relabel indices accordingly such that $k\in\{1,...,K_n\}$.}
}
\textbf{\textit{Stage 2}} \textit{(Section \ref{sec: stage 2})}: For $k=1,2,\ldots,K_n$, compute the globally optimal object existence and detectability distributions $q_2^*\!(E_{n,k},D_{n,k})$, $q_2^*\!(E_{n,k})$ using \eqref{eq: existance update}, \eqref{eq: marginal existance calc}, with $g(D_k)$ evaluated from \eqref{eq: gD1 computation}--\eqref{eq: numerical stable g0 k = Ij} in Appendix~\ref{apx: numerical stable gD}.\\
\vspace{0.3em}
\hrule
\vspace{0.3em}
 \caption{Generic PiVoT at time step $n$
}
 \label{Algo: PiVoT}

\footnotesize\textbf{Remark:} This generic algorithm covers both positional-only PiVoT and Doppler-augmented PiVoT. In the $q_1(X_{n,k})$ update, \eqref{eq: general state update} reduces to \eqref{eq: state update} and \eqref{eq: Doppler state update}, respectively. Similarly, the $S_j^k$ terms (used in \eqref{eq: association update} and \eqref{eq: gD1 computation}--\eqref{eq: gD0 computation}) reduce to \eqref{eq: Sjk positional only l} and \eqref{eq: Sjk Doppler}, respectively.

\end{algorithm}

Algorithm~\ref{Algo: PiVoT} summarises the generic PiVoT routine for both positional-only and Doppler-augmented models. This section gives implementation details for the prediction step in Algorithm~\ref{Algo: PiVoT}; Stage 1 initialisation is detailed in Appendix~\ref{apx: implem details}. We then describe the extraction of state estimates and a Doppler specific post-processing step for reporting confidently moving objects in automotive radar data.

\vspace{-1em}
\subsection{Prediction} \label{sec: prediction}
After obtaining $q_1(X_{n-1})$, $q_1(\Lambda_{n-1})$, $q_1(P_{n-1})$ in \eqref{eq: all updates form} from Stage 1, and $q_2^*(E_{n-1,k})\!=\!\text{Ber}(p_{n-1,k}^e)$ in \eqref{eq: marginal existance calc} from Stage 2 at time step $n-1$, the prediction step for parameters in \eqref{eq: prior parameters} at time step $n$ for legacy object $k=1,...,K_{n-1}$ is:
\vspace{-0.5em}
\begin{align} \label{eq: prediction}
\begin{aligned}
    &\mu_{n,k}^\prime=F_n\mu_{n-1,k}, \qquad \Sigma_{n,k}^\prime=F_n\Sigma_{n-1,k}F_n^\top+Q_n,\\
    &\eta_{n,k}^\prime=\eta_{n-1,k}\gamma_{\Lambda,n}, \qquad \rho_{n,k}^\prime=\rho_{n-1,k}/\gamma_{\Lambda,n},\\
    &\phi_{n,k}^\prime=\max\{\phi_{n-1,k}\gamma_{P,n},\ d_Y-1\}, \\
    &\Phi_{n,k}^\prime = \Phi_{n-1,k}\phi_{n-1,k}/\phi_{n,k}^\prime, \qquad p_{n,k}^{e\prime}=p_{n-1,k}^e p_{n,k}^s, 
\end{aligned}
\end{align}
where $F_n$, $Q_n$ are from the state transition in \eqref{eq: linear Gaussian transition}, $\gamma_{\Lambda,n},\gamma_{P,n}\leq 1$ are forgetting factors for $\Lambda$ and $P$, and $p_{n,k}^s\in(0,1]$ is the survival probability defined in Section~\ref{sec: legacy object model}. The state and existence predictions follow from the linear Gaussian transition and the survival process defined in Section~\ref{sec: legacy object model}. The rate and shape predictions preserve the mean while inflating uncertainty via the forgetting factors. Alternative prediction models for $\Lambda$ and $P$ are also available, e.g. \cite{granstrom2019poisson,koch2008bayesian}. For a moving platform, such as automotive radar on an ego vehicle, measurements and object states may need to be transformed between time steps to account for ego motion.

\subsection{Estimation extraction} \label{sec: estimation extraction}
To prevent unbounded growth of $K_n$, a pruning step is applied at each time step, removing objects with existence probability $q_2^*(E_{n,k}) < P_{\text{pru}}$ from the posterior. This is necessary because, in principle, $K_n$ can grow to infinity over time, as every potential object born at previous time steps, including those with zero existence posterior, remains in the system and continues to be processed.

For reporting a reliable estimation output at the current time step, PiVoT introduces another two thresholds: when reporting an unreported object for the first time, we require $q_2^*(E_{n,k}) > P_{\text{rep}}$ to select only convincing estimates from heavy clutter; previously reported objects will not be reported in the current time step if $q_2^*(E_{n,k}) < P_{\text{stp}}$. It is important to note that the thresholds $P_{\text{rep}},P_{\text{stp}}$ affect only the final selection of estimates from the set of all processed objects. PiVoT still performs variational inference over the full set of potential objects with the same computational efficiency, regardless of the choice of reporting thresholds.

\begin{figure}[t!]
    \centering
    \includegraphics[width=1\linewidth]{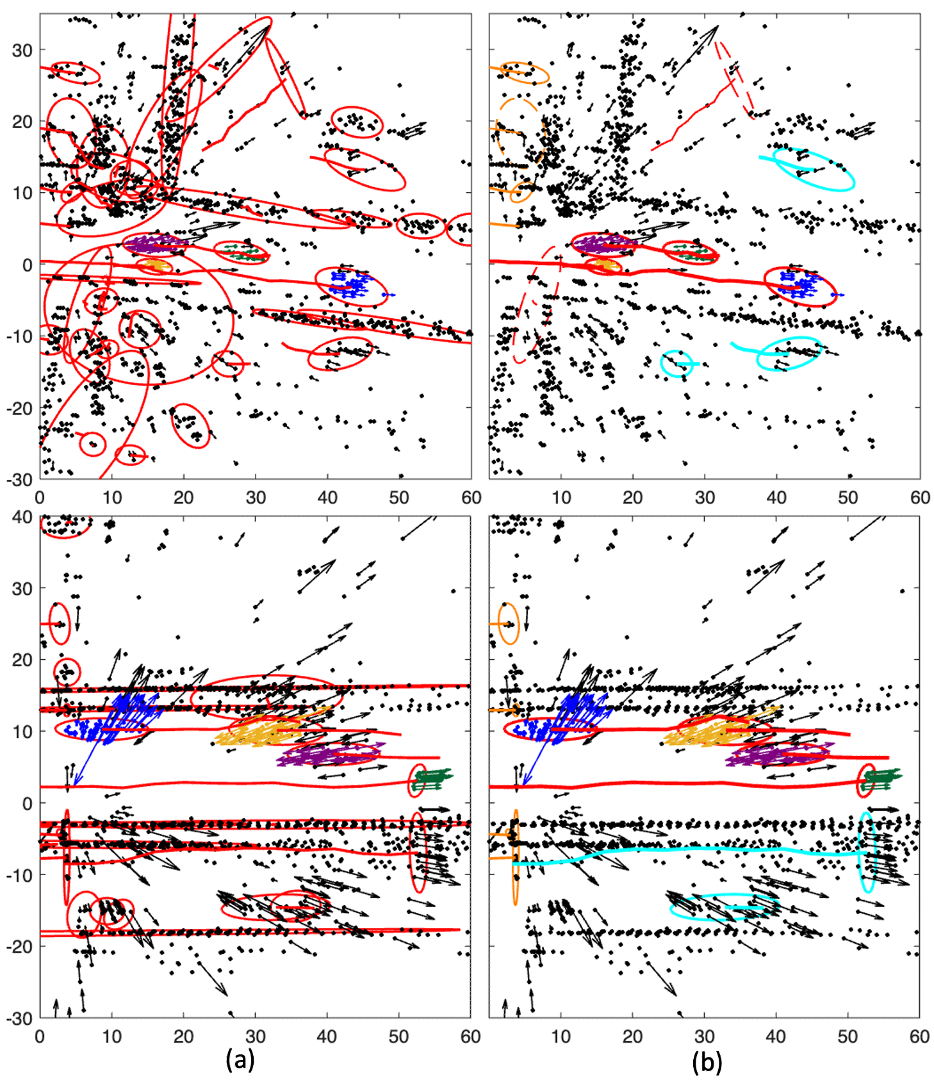}
    \caption{
    PiVoT tracking results on the RadarScenes \cite{schumann2021radarscenes} dataset (a) before and (b) after moving object post-processing. The ego vehicle is located at $(0,0)$ and moves to the right. Black and coloured dots/arrows denote measurements from ground truth stationary background and moving objects, respectively. Red trajectories and ellipses in (a) show all estimated tracks and shapes, including stationary and moving objects.
    All objects shown in (b) are regarded as moving. Among those,
    orange, cyan, and dashed ellipses indicate tracks rejected due to Doppler uninformativeness, ghost suppression, and the concrete object shape constraint, respectively.
    }
    \label{fig: Dop post processing}
\end{figure}
\begin{figure*}[t!]
\centerline{\includegraphics[width=18cm]{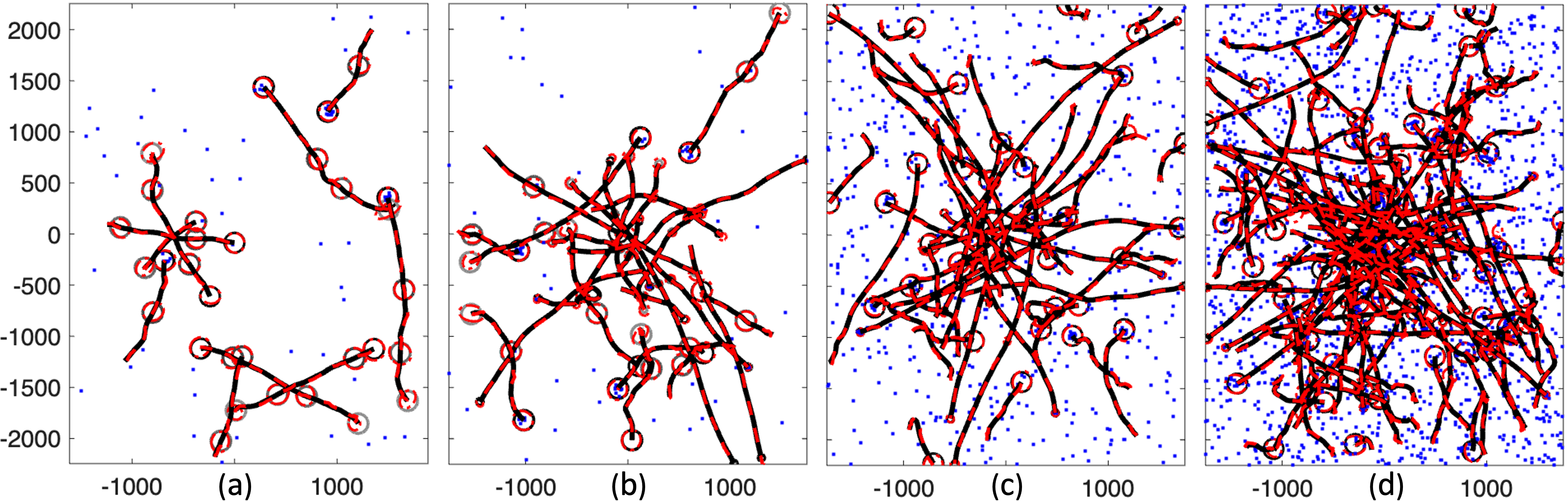}}
\caption{
True tracks (black), shapes, one-run PiVoT estimates (red dashed tracks and ellipses), and final-time-step measurements for trajectory/dataset pairs (a) T1/DS1, (b) T2/DS3, (c) T3/DS5, and (d) T4/DS6. True shapes (grey-to-black for earlier-to-later time steps) and shape estimates are shown at multiple steps in (a)--(b), and only at the final step in (c)--(d).
}
    \label{fig: gtdatset}
    \vspace{-0.2em}
\end{figure*}

\begin{table*}[tp!]
\centering
\caption{Performance comparison on DS1--DS6. Time and GOSPA are averaged over all time steps and 50 Monte Carlo runs\textsuperscript{\dag}. Each dataset header gives the trajectory set, detection probability $p_k^d$, and clutter rate $\Lambda_0$. Object numbers, tracks, shapes, and measurement rates are defined by T1--T4, see Section~\ref{sec: simulation setting}. DS2 additionally sets the T1 measurement rate to $\Lambda_k=5$.}
\vspace{-0.5em}
\setlength{\tabcolsep}{2.3pt}
\begin{tabular}{ c | c c c c c | c c c c c | c c c c c}
\toprule
Method 
& Time ($\mathrm{s}$) & GOSPA & Location & Missed & False
& Time ($\mathrm{s}$) & GOSPA & Location & Missed & False
& Time ($\mathrm{s}$) & GOSPA & Location & Missed & False \\
\midrule
& \multicolumn{5}{c|}{DS1: T1, $p_{k}^d=1$, $\Lambda_0=47.25$} 
& \multicolumn{5}{c|}{DS2: T1 with $\Lambda_k=5$, $p_{k}^d=0.9$, $\Lambda_0=47.25$}
& \multicolumn{5}{c}{DS3: T2, $p_{k}^d=0.9$, $\Lambda_0=47.25$} \\
\midrule
PiVoT 
& \textbf{0.0077} & \textbf{82.8745} &  \textbf{79.6145} & 1.3000 & 1.9600
& \textbf{0.0106} & \textbf{126.4433} & \textbf{105.1433} & \textbf{8.4400} & 12.860
& \textbf{0.0091} & \textbf{191.1176} & \textbf{159.5576} & \textbf{16.860} & \textbf{14.700} \\

$\mathrm{PMBM}_{\mathrm{F}}$ 
& 0.4747 & 108.0524 & 89.4524  & 17.200 & 1.4000
& 0.6243 & 141.2471 & 114.7471 & 12.500 & 14.000 
& 1.6159 & 253.0662 & 190.9662 & 17.900 & 44.200 \\
$\mathrm{PMBM}_{\mathrm{C}}$ 
& 0.1837 & 109.6324 & 90.4324  & 18.400 & 0.8000 
& 0.2927 & 141.5576 & 114.7576 & 13.000 & 13.800 
& 0.6221 & 261.1853 & 193.6853 & 19.900 & 47.600 \\
SPA$_{\mathrm{3000}}$ 
& 1.6411 & 133.4078 & 118.2878 & 8.1800 & 6.9400
& 1.3807 & 170.7621 & 140.6221 & 16.260 & 13.880
& 5.8976 & 463.1671 & 252.0071 & 160.14 & 51.020 \\

SPA$_{15000}$ 
& 7.3063 & 93.0783 & 91.5983 & \textbf{0.8800} & \textbf{0.6000}
& 6.0615 & 134.4843 & 118.8843 & 8.6000 &  \textbf{7.0000}
&  -- & -- & -- & -- & -- \\

\midrule
& \multicolumn{5}{c|}{DS4: T2, $p_{k}^d=1$, $\Lambda_0=1575$} 
& \multicolumn{5}{c|}{DS5: T3, $p_{k}^d=1$, $\Lambda_0=472.5$}
& \multicolumn{5}{c}{DS6: T4, $p_{k}^d=1$, $\Lambda_0=1575$} \\
\midrule
PiVoT 
& \textbf{0.2013} & \textbf{223.3302} & \textbf{168.1702} & \textbf{26.540} & 28.620
& \textbf{0.0550} & \textbf{395.5997} & \textbf{361.5197} & \textbf{24.460} & \textbf{9.6200}
& \textbf{0.3008} & \textbf{1000.147} & \textbf{817.8874} & \textbf{160.08} & \textbf{22.180} \\

$\mathrm{PMBM}_{\mathrm{C}}$ 
& 9.3477 & 355.4586 & 208.2586 & 120.90 & \textbf{26.300}
& 4.7251 & 621.1152 & 458.9152 & 115.80 & 46.400
& 39.983 & 1931.305 & 883.2056 & 925.00 & 123.10 \\
SPA$_{100}$ 
& 143.44 & 869.7601 & 325.1601 & 445.70 & 98.900
& 16.638 & 1930.081 & 774.9614 & 782.28 & 372.84
& 216.10 & 3537.380 & 1396.480 & 1655.7 & 485.20 \\

\bottomrule 
\end{tabular}
\label{tb:results}
\vspace{0.2em}
{\footnotesize
\textsuperscript{\dag} The only exceptions are that SPA$_{100}$ uses 10 runs on DS4 and DS6, and SPA$_{15000}$ is omitted on DS3 due to excessive runtime.
}
\vspace{-1em}
\end{table*}

\subsection{Post-processing for moving object selection} \label{sec: post processing}

PiVoT processes the full measurement stream to jointly detect and track all persistent structures, including genuinely moving objects, stationary infrastructure, and clutter induced artefacts. For the automotive radar experiments, we therefore apply a lightweight deterministic post-processing stage to report a reliable subset of moving objects. This stage operates only on inferred track statistics and does not modify the variational inference. In brief, tracks are retained only when they show 1) consistent kinematic motion over a short temporal window, 2) informative Doppler support along the estimated direction of motion when required to disambiguate motion, 3) plausible road user shape and sufficient dominance of their associated detections within the estimated extent, and 4) no evidence of being a multipath ghost induced by stationary structures. Fig.~\ref{fig: Dop post processing} illustrates how these checks refine the full PiVoT output into the final reported moving object set.

This automotive radar specific stage is used only to report a reliable and interpretable subset of moving road users from the richer set of PiVoT tracks, without changing inference; further procedural details are therefore given in Appendix~\ref{apx: doppler post processing}.

\section{Experiments } \label{sec: results}

This section first evaluates PiVoT against existing model-based trackers on various simulated tracking scenarios in Section \ref{sec: comparison simulation}, followed by a large-scale simulation demo tracking 1000 objects in Section~\ref{sec: large scale demo}. Finally, Section~\ref{sec: evaluation RadarScenes} compares PiVoT with a deep learning detection benchmark on a real-world automotive radar dataset.

\vspace{-0.5em}
\subsection{Evaluation on simulated scenes} \label{sec: comparison simulation}
This section compares PiVoT's detection and tracking performance with existing Bayesian NHPP-based trackers, including the Poisson multi-Bernoulli mixture (PMBM) filter~\cite{granstrom2019poisson} and the sum-product algorithm (SPA)~\cite{meyer2021scalable}. 
Since these methods do not yet accommodate Doppler measurements, we restrict this comparison to positional NHPP model in \eqref{measurement model}. The evaluation uses 6 simulated datasets of increasing complexity.

\subsubsection{Experimental setting} \label{sec: simulation setting} The six datasets are built from four trajectory sets, T1--T4, shown in Fig.~\ref{fig: gtdatset}. 
We use two object types: small objects with $(\Lambda_k,P_{n,k}^{-1})=(5,100I_2)$ and large objects with $(\Lambda_k,P_{n,k}^{-1})=(8,800I_2)$. 
T1 contains 4--10 randomly appearing and disappearing large objects. 
T2 contains 11--25 objects, with 60\% small and 40\% large. 
T3 and T4 contain 21--50 and 42--99 objects, respectively, both with 30\% small and 70\% large.
Each trajectory set has 50 time steps with sampling interval $\tau=1\,\mathrm{s}$. 
Objects follow a 2D constant-velocity model with positional NHPP measurements. In \eqref{measurement model} and \eqref{eq: linear Gaussian transition}, we set $F_n \!=\! \mathrm{diag}(F_n^1, F_n^2)$, $Q_n \!=\! \mathrm{diag}(Q_n^1, Q_n^2)$, and $H_n \!=\! \mathrm{diag}(H_n^1, H_n^2)$, where for each $d = 1, 2$,
\begin{equation}\label{eq:cv model}
    F_{n}^d=\begin{bmatrix} 1 & \tau \\ 0& 1
    \end{bmatrix}, \quad
    Q_{n}^d=25\begin{bmatrix} \tau^3/3 & \tau^2/2 \\ \tau^2/2& \tau\end{bmatrix}, \quad
    H^d=\begin{bmatrix} 1 & 0 \end{bmatrix}.
\end{equation}

For each dataset DS1--DS6, the true tracks are fixed from one of T1--T4, while 50 independent measurement sets are generated from the model, each over 50 time steps. The detailed configurations are given in Table~\ref{tb:results}. DS1--DS2 contain only large objects, and DS3--DS6 contain both object types.

The detailed parameter settings for all compared methods are provided in Appendix~\ref{apx: parameterasations for exp}. In summary, PiVoT uses uninformative parameterisations across DS1--DS6 by design, to assess robustness across diverse scenes. PMBM and SPA are configured more informatively to improve performance; 
for example, in DS1--DS2, their shape priors concentrate more on the single object type, while PiVoT uses uninformative priors spanning both types.  
SPA also uses the ground truth measurement rate, since it is not estimated in \cite{meyer2021scalable}.
In Table \ref{tb:results}, PMBM$_\mathrm{C}$ and PMBM$_\mathrm{F}$ use 16 and 100 DBSCAN distance values, respectively, evenly spaced over $[1,100]$, a range found to give strong performance.
The subscript in SPA denotes the number of particles, e.g., SPA$_{3000}$ uses 3000 particles. 

All methods are implemented in MATLAB, with runtime measured as the average execution time, in seconds, per time step (System: Apple M1, 16GB RAM).
Tracking performance is evaluated using the GOSPA metric~\cite{rahmathullah2017generalized}, a standard metric in multi-object tracking literature, with cut-off distance $100$, penalty parameter $1$, and $\alpha=2$. The total GOSPA error decomposes into localisation error for matched objects, missed-object, and false-object errors, as reported in Table \ref{tb:results}.

\subsubsection{Results} The results in Table~\ref{tb:results} establish PiVoT as a promising advancement in multi-object tracking, in both accuracy and efficiency. Across all datasets, PiVoT consistently achieves the lowest GOSPA scores and the fastest runtime. Its advantages are especially clear in DS4--DS6, which involve heavy clutter and/or large object numbers. In these challenging cases, PiVoT achieves substantially higher tracking accuracy while being around 50–100 times faster than the second-best method. 
In the simpler DS1--DS2 scenarios, SPA achieves lower missed object and false object components, but remains second to PiVoT in overall GOSPA and requires more than 500 times the computation time.
As the tracking challenge intensifies in DS4--DS6, SPA deteriorates substantially. PMBM then becomes the second-best method, 
although its accuracy also degrades in highly cluttered scenes with many objects.
In contrast, PiVoT maintains robust tracking performance, with GOSPA values increasing roughly proportionally to the object number. 
SPA's deterioration in DS4--DS6 may reflect its sensitivity to uncertainty in the shape prior as two object types are present: it is observed that SPA improves clearly when the shape prior is sufficiently concentrated around the true shape.

PiVoT's tracking and shape estimates are visualised in Fig.~\ref{fig: gtdatset} for representative datasets. The estimated tracks and shapes closely follow the ground truth, with few missed and false detections, even under the heavy clutter and frequent coalescence in scenes (c) and (d). The initially imperfect shape estimates in Fig.~\ref{fig: gtdatset}(a) and (b) are due to the uninformative prior and limited early observations, but are soon corrected to reasonable accuracy. In Fig.~\ref{fig: gtdatset}(d), and more generally in DS6, severe clutter makes true objects with sparse measurements difficult to distinguish by human inspection. A video demonstrating PiVoT’s accurate detection and tracking in this challenging setting is provided in \cite{pivotweb}.

\vspace{-0.5em}
\subsection{Demo of large-scale multi-object tracking in clutter} \label{sec: large scale demo}

This section demonstrates PiVoT's capability to detect and track 1000 objects in clutter. Since running PMBM and SPA in this large-scale setting would be computationally impractical, we report only PiVoT results as a scalability demonstration.

We consider a densely populated tracking environment with up to 1036 objects appearing and disappearing over 50 time steps. Objects follow a constant-velocity motion model with parameters in \eqref{eq:cv model} and the true tracks are shown in Fig. \ref{fig:estimate track}. Object elliptical shape covariance is $P_{n,k}^{-1}=50\,I$, and the measurement rates for all objects are $5$. Detection probability is $p_{n,k}^d=1$. The clutter density is $10^{-6}$ per unit area, resulting in a clutter rate of $15.75$. The object number varies as in Fig.~\ref{fig:objec number}. The challenge of this case is efficiently handling a large volume of data while managing many birth and death events. Further, the close proximity of objects leads to frequent coalescence, making data association particularly difficult. 

\begin{figure}[tp!]
    \centering
    \includegraphics[width=8cm]{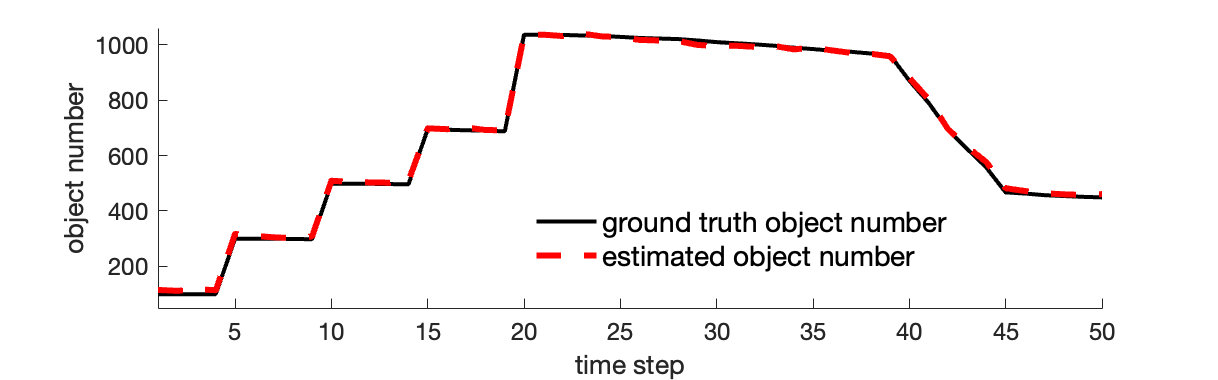}
    \caption{True and estimated object number.}
    \label{fig:objec number}
    \vspace{-1em}
\end{figure}

\begin{figure}[tp!]
    \centering
\includegraphics[width=8.5cm]{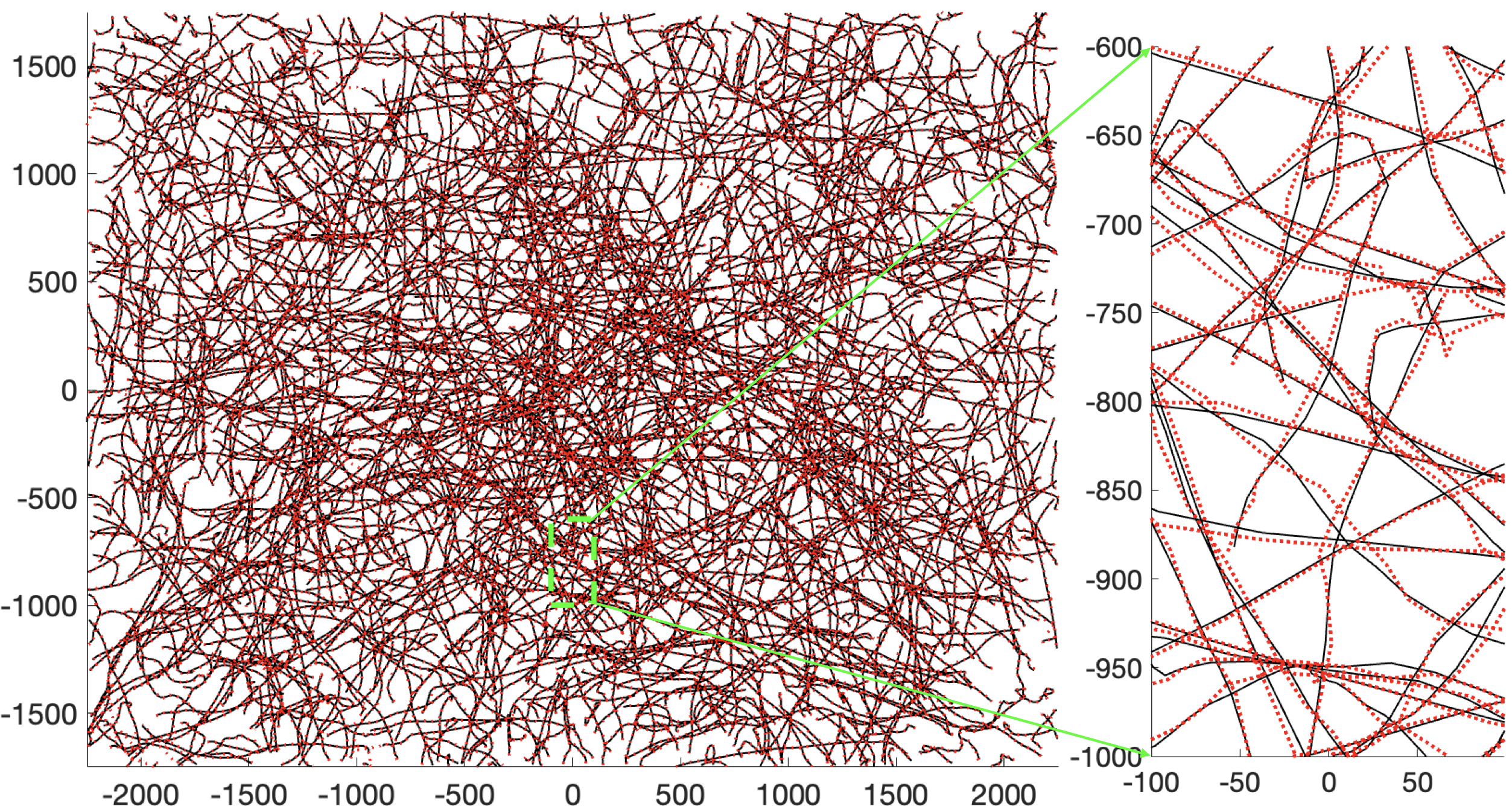}
    \caption{Tracking results for up to 1036 objects. Black curves are true tracks and red dotted curves are PiVoT estimates. The right panel enlarges the region marked by the green rectangle.}
    \label{fig:estimate track}
    \vspace{-0.5em}
\end{figure}

PiVoT uses the same parameter setting as in Section \ref{sec: comparison simulation}, except that the ground-truth object rates and shapes are assumed known. Fig. \ref{fig:objec number} and Fig.~\ref{fig:estimate track} present the estimated trajectories and estimated object counts over time alongside the ground truth, demonstrating PiVoT’s ability to track a thousand objects simultaneously, with timely detection of births and deaths and no excessive track loss. Furthermore, PiVoT achieves this with remarkable computational efficiency, requiring only $0.67\,\mathrm{s}$ per time step on average to track over 1000 objects in MATLAB 2024b (System: Apple M4 Pro, 24GB RAM). This implementation uses only vectorised operations, without explicit parallelisation or any gating technique. 

A video demonstration of this large-scale scene is provided in \cite{pivotweb}. Given its performance, further speed improvements could be expected through dedicated CPU/GPU parallelisation and gating over a large surveillance area, making PiVoT a highly promising solution for real-time multi-object tracking in large-scale, cluttered environments.

\subsection{Evaluation on RadarScenes dataset} \label{sec: evaluation RadarScenes}

\begin{table*}[tp!]
\centering
\caption{RadarScenes results on the validation and test splits. Time is measured per $1\,\mathrm{s}$ of radar data. All object matching and GOSPA metrics are macro-averaged over frames, except AP$_{\mathrm{global}}$, which pools object estimates over all frames.}
\vspace{-0.5em}
\footnotesize
\setlength{\tabcolsep}{2.1pt}
\renewcommand{\arraystretch}{1}
\begin{tabular}{c|c|c|ccccc|ccccc|cccc}
\toprule
\multirow{2}{*}{Split} 
& \multirow{2}{*}{Method}
& \multirow{2}{*}{\begin{tabular}{@{}c@{}}Time\\($\mathrm{s}$)\end{tabular}}
& \multicolumn{5}{c|}{Object matching, IoU $0.3$}
& \multicolumn{5}{c|}{Object matching, IoU $0.5$}
& \multicolumn{4}{c}{GOSPA error} \\[1pt]
\cline{4-8}\cline{9-13}\cline{14-17}
& &
& \rule{0pt}{2.4ex} F1 & Precision & Recall & AP$_{\mathrm{macro}}$ & AP$_{\mathrm{global}}$
& F1 & Precision & Recall & AP$_{\mathrm{macro}}$ & AP$_{\mathrm{global}}$
& GOSPA & Location & Missed & False \\
\midrule
\multirow{4}{*}{Val.}
& RadarGNN$_{\mathrm{O}}$ & 4.895
& 0.6943 & 0.6383 & 0.8259 & 0.7729 & \textbf{0.7482}
& 0.6017 & 0.5551 & 0.7096 & 0.6458 & 0.5901
& 4.9084 & 1.5588 & \textbf{0.7112} & 2.6384 \\

& RadarGNN$_{\mathrm{B}}$ & 4.899
& 0.7407 & 0.7085 & \textbf{0.8279} & 0.7797 & 0.7415
& 0.6447 & 0.6190 & 0.7144 & 0.6550 & 0.5862
& 4.2197 & 1.5153 & 0.8098 & 1.8946 \\

& RadarGNN$_{\mathrm{P}}$ & 4.896
& 0.7886 & 0.8291 & 0.7897 & 0.7585 & 0.6916
& 0.6931 & 0.7309 & 0.6892 & 0.6456 & 0.5535
& 3.4500 & 1.3384 & 1.2850 & 0.8266 \\

& PiVoT & \textbf{0.830}
& \textbf{0.8016} & \textbf{0.8547} & 0.7993 & \textbf{0.7825} & 0.7106
& \textbf{0.7829} & \textbf{0.8313} & \textbf{0.7815} & \textbf{0.7597} & \textbf{0.6739}
& \textbf{2.8114} & \textbf{0.8704} & 1.2400 & \textbf{0.7009} \\

\midrule
\multirow{4}{*}{Test}
& RadarGNN$_{\mathrm{O}}$ & 5.135
& 0.6643 & 0.6157 & \textbf{0.7826} & 0.7211 & \textbf{0.6921}
& 0.5688 & 0.5282 & 0.6669 & 0.5964 & 0.5316
& 5.6744 & 1.7170 & \textbf{0.9971} & 2.9603 \\

& RadarGNN$_{\mathrm{B}}$ & 5.139
& 0.7109 & 0.6884 & \textbf{0.7826} & 0.7302 & 0.6827
& 0.6117 & 0.5933 & 0.6706 & 0.6074 & 0.5255
& 4.8692 & 1.6411 & 1.1585 & 2.0697 \\

& RadarGNN$_{\mathrm{P}}$ & 5.133
& 0.7538 & 0.8103 & 0.7396 & 0.7072 & 0.6263
& 0.6571 & 0.7050 & 0.6438 & 0.5992 & 0.4907
& 4.0263 & 1.3731 & 1.8314 & \textbf{0.8218} \\

& PiVoT & \textbf{0.861}
& \textbf{0.7723} & \textbf{0.8372} & 0.7573 & \textbf{0.7357} & 0.6748
& \textbf{0.7554} & \textbf{0.8186} & \textbf{0.7407} & \textbf{0.7131} & \textbf{0.6469}
& \textbf{3.6252} & \textbf{0.9633} & 1.7185 & 0.9434 \\
\bottomrule
\end{tabular}
\vspace{-0.0em}
\label{tb:radarscenes_all_metrics}
\end{table*}

\begin{table}[tp!]
\centering
\caption{Point-wise moving foreground segmentation results on RadarScenes. Macro metrics are averaged over frames, while global metrics are computed by pooling all points.}
\vspace{-0.5em}
\footnotesize
\setlength{\tabcolsep}{2.3pt}
\renewcommand{\arraystretch}{1.0}
\begin{tabular}{c | c | c c c | c c c}
\toprule
\multirow{2}{*}{Split}
& \multirow{2}{*}{Method}
& \multicolumn{3}{c|}{Macro}
& \multicolumn{3}{c}{Global} \\[1pt]
\cline{3-8}
& & \rule{0pt}{2.2ex} F1 & Precision & Recall & F1 & Precision & Recall \\
\midrule
\multirow{2}{*}{Val.}
& RadarGNN
& 0.8258 & 0.7710 & \textbf{0.9108}
& \textbf{0.8887} & 0.8250 & \textbf{0.9631} \\
& PiVoT
& \textbf{0.8551} & \textbf{0.8824} & 0.8541
& 0.8785 & \textbf{0.8880} & 0.8692 \\
\midrule
\multirow{2}{*}{Test}
& RadarGNN
& 0.7861 & 0.7311 & \textbf{0.8801}
& \textbf{0.8607} & 0.7980 & \textbf{0.9342} \\
& PiVoT
& \textbf{0.8119} & \textbf{0.8500} & 0.8076
& 0.8410 & \textbf{0.8516} & 0.8307 \\
\bottomrule
\end{tabular}
\vspace{-0.0em}
\label{tb:radarscenes_pointwise}
\end{table}

To demonstrate PiVoT’s capability to operate on real-world automotive radar data with Doppler information, we evaluate it on RadarScenes \cite{schumann2021radarscenes}, a dataset with typical radar clutter and diverse road users.
RadarScenes is widely used for assessing machine learning methods for automotive radar-based multi-object detection.
We therefore compare with RadarGNN \cite{fent2023radargnn}, a strong deep learning baseline on this dataset with a publicly available implementation and released trained checkpoints.

We note that the two methods differ in their design goals, temporal aggregation windows, and input features.
PiVoT is evaluated in its full tracking configuration with moving object selection (Section \ref{sec: post processing}), where each frame accumulates radar data over $0.25\,\mathrm{s}$. This shorter frame length reduces latency and enables more timely responses in real-time tracking. By contrast, RadarGNN uses frames accumulated over $0.5\,\mathrm{s}$ and performs frame-wise detection without temporal state propagation, consistent with its original design and training in \cite{fent2023radargnn}. The input information also differs: PiVoT currently uses only positional and Doppler measurements, whereas RadarGNN additionally uses radar cross-section information. RadarGNN’s object classification capability is not considered here, since this experiment focuses on the object detection and tracking performance across all moving objects.

We emphasise that this comparison is not intended to claim state-of-the-art performance on RadarScenes, but to demonstrate that a fully training-free, model-based joint detector and tracker can achieve efficient and competitive performance on a full-scale modern automotive radar benchmark. To our knowledge, PiVoT is the first such method to report quantitative results comparable to a deep learning detection benchmark.

\begin{figure*}[tp!] 
\centerline{\includegraphics[width=18cm]{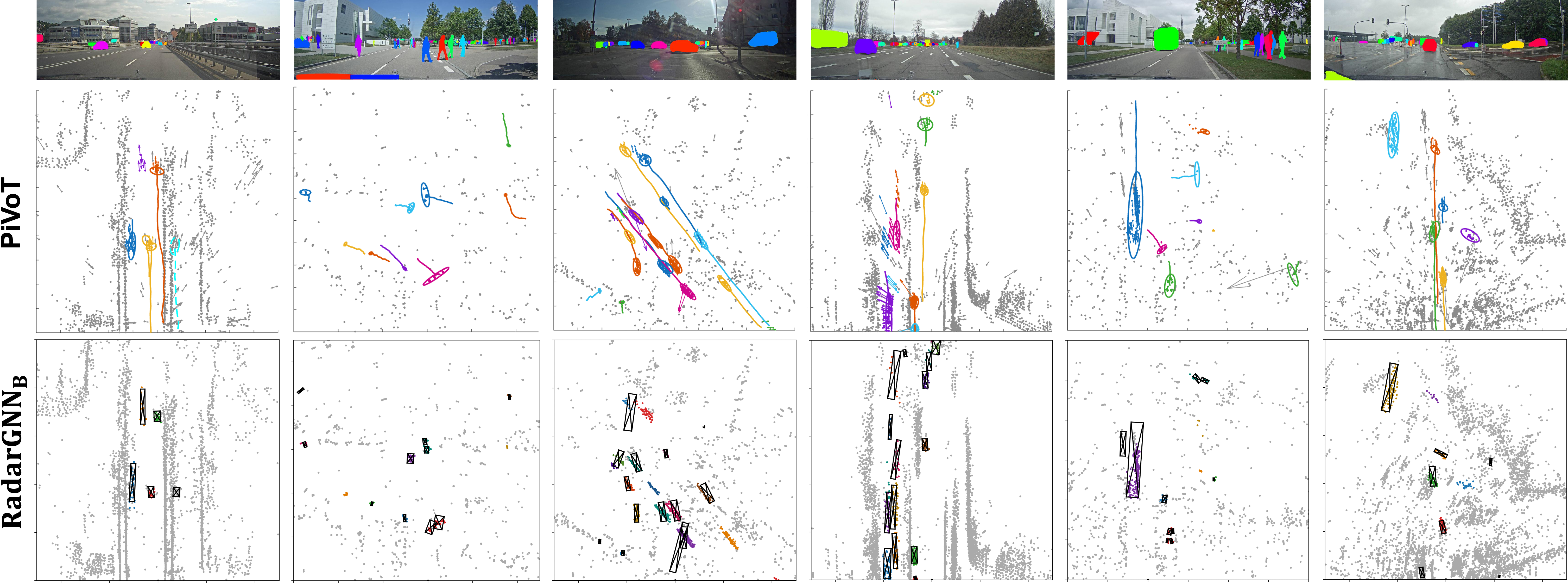}}
    \caption{Visualisation on representative RadarScenes frames. The results are matched to the camera images. Coloured points denote ground-truth moving-object measurements, with colours indicating object identities. RadarGNN$_{\mathrm{B}}$ contains more radar points from longer accumulation. PiVoT shows estimated shapes and tracks using ellipses and curves, while RadarGNN$_{\mathrm{B}}$ shows detections using bounding boxes. Dashed cyan ellipses and curves in PiVoT's first column indicate a ghost target.
    }
    \vspace{-0.9em}
    \label{fig: AutoRadar Compr}
\end{figure*}

\subsubsection{Experimental setting}
We evaluate all methods on the RadarScenes validation and test splits, following \cite{fent2023radargnn}. For RadarGNN, we use the best-performing translation-invariant model reported in \cite{fent2023radargnn} with the released checkpoints trained on the RadarScenes training split. Besides the original post-processing configuration, denoted by RadarGNN$_{\mathrm{O}}$, we consider two more configurations. Since RadarGNN$_{\mathrm{O}}$ tends to produce many false alarms, we tune the confidence thresholds used to remove background points and objects to obtain a balanced precision-recall configuration, denoted by RadarGNN$_{\mathrm{B}}$, and a precision-oriented configuration, denoted by RadarGNN$_{\mathrm{P}}$. These configurations are selected based on detection metrics evaluated on the validation split. For PiVoT, the parameterisation is selected only through visual inspection of tracking videos from a few sequences in the validation split. All runtime evaluations are conducted on a standard laptop with a 2.4 GHz 8-core Intel Core i9 CPU and 32GB RAM. PiVoT is implemented in MATLAB, while RadarGNN is run in Python through Docker with access to 7 of the 8 CPU cores.

Object-level detection and tracking results are reported in Table \ref{tb:radarscenes_all_metrics}. We use object-matching metrics, including precision, recall, F1 score, and average precision (AP), under point-wise IoU thresholds of 0.3 and 0.5 as in \cite{scheiner2021object}, together with GOSPA metrics \cite{rahmathullah2017generalized} using cut-off distance $3\,\mathrm{m}$, penalty parameter $1$, and $\alpha=2$. For GOSPA, unavailable ground-truth object positions are computed by averaging the corresponding ground-truth points. Table \ref{tb:radarscenes_pointwise} further reports point-wise moving-foreground segmentation performance, where all RadarGNN configurations share the same point-wise results.

\subsubsection{Computational time}
Table \ref{tb:radarscenes_all_metrics} reports the average processing time per $1\,\mathrm{s}$ radar data interval. 
For RadarGNN, the full inference chain takes $4.897\,\mathrm{s}$ on the Val. split and $5.136\,\mathrm{s}$ on the Test split on average.
In contrast, PiVoT takes $0.830\,\mathrm{s}$ and $0.861\,\mathrm{s}$, respectively, both below the $1\,\mathrm{s}$ data interval, satisfying the real-time processing requirement on average and running much faster than RadarGNN under the evaluated CPU setting. We note that RadarGNN is expected to run much faster on a GPU, while PiVoT may also be further accelerated.

\subsubsection{Detection and Tracking Results}

From Table \ref{tb:radarscenes_all_metrics}, PiVoT achieves the best frame-wise macro-averaged overall performance across all RadarGNN configurations, including F1 score, AP$_{\mathrm{macro}}$, and GOSPA. This improvement is mainly due to its higher precision and, similarly, fewer false alarms, which reflects PiVoT's ability to recover true moving objects from dense clutter. This precision advantage remains even compared with RadarGNN$_{\mathrm{P}}$, whose thresholds are deliberately tuned towards higher precision at the cost of recall, with recall already lower than that of PiVoT.

By contrast, RadarGNN often achieves higher AP$_{\mathrm{global}}$ and recall, and similarly fewer missed detections in Table \ref{tb:radarscenes_all_metrics}. This is mainly because PiVoT uses a conservative moving-object selection rule, which requires moving evidence over two to three frames before declaring a moving object. As shown in columns $1, 3$, and $4$ of Fig. \ref{fig: AutoRadar Compr}, this can cause PiVoT to miss objects that have only recently appeared in the field of view.

The same trend is also reflected in Table \ref{tb:radarscenes_pointwise}, which reports point-wise moving-foreground segmentation between moving-object and background measurements. PiVoT is stronger in precision and macro-averaged F1, whereas RadarGNN has an advantage in recall and global F1. An exception appears in Table \ref{tb:radarscenes_all_metrics} under the stricter IoU threshold of $0.5$, where RadarGNN performs worse than PiVoT across all object-level metrics. This is mainly because RadarGNN bounding boxes can miss many ground-truth points and sometimes split the same object into multiple boxes. Both effects are visible in Fig. \ref{fig: AutoRadar Compr}. The third column particularly illustrates poor box alignment when the object is diagonally oriented with respect to the coordinate axes, possibly due to limited representation of such orientations in the training data.

Fig. \ref{fig: AutoRadar Compr} further supports the preceding analysis through representative visual comparisons between RadarGNN${_\mathrm{B}}$ and PiVoT. In particular, 
PiVoT produces fewer false alarms and better shape estimates in terms of measurement inclusion. Its missed detections mainly occur for newly appearing objects or objects with abnormal measurements. For example, in column $4$, the missed object has Doppler arrows inconsistent with its motion, possibly due to upstream radar signal-processing artefacts. In column $1$, PiVoT also correctly suppresses a ghost target that RadarGNN$_{\mathrm{B}}$ falsely detects as a moving object, highlighting the benefit of retaining stationary structures as evidence for ghost-target rejection (see also Fig. \ref{fig: Dop post processing}).

We note that PiVoT also provides multi-object tracks directly, as shown in Fig.~\ref{fig: AutoRadar Compr}, along with estimates such as object kinematics and existence probabilities. In contrast, RadarGNN outputs require additional post-processing to form tracks.

Finally, \cite{pivotweb} provides frame-by-frame comparison videos between PiVoT and RadarGNN$_{\mathrm{B}}$ on three complete radar sequences, together with additional PiVoT demonstrations on varied RadarScenes clips.

\section{Theoretical analysis} 

\label{sec: theoretical analysis} 
This section presents Theorem~\ref{main theorem}, which justifies PiVoT's early identification of ineffective births in Section~\ref{sec: ineffective birth}. Further details and the proof are provided in Appendices~\ref{sec: theoretical lemma and proof} and~\ref{sec: selection of Ls}.

We first state the modelling assumption and introduce the admissible bound used in the theorem.
\begin{assumption} \label{assump}
The single-object measurement likelihood $\ell$ in \eqref{likelihood poisson existence} follows \eqref{measurement model}. The clutter Poisson intensity $\lambda_c=\Lambda_0\ell_0(Y_j)$ is uniform over the surveillance area. The birth prior $p(X_{n,k})$ in \eqref{eq: birth prior} is uninformative in the observation space, i.e. $H\Sigma_k^bH^\top=c I_{d_Y}$ with $c\to\infty$, where $I_{d_Y}$ is the identity matrix, and $H$ is in \eqref{measurement model}. This induces a uniform prior over $\mathbb{R}^{d_Y}$ for $HX_k$.
\end{assumption}

\begin{remark} \label{birth prior remark}
    A convenient consequence of this assumption is that the updated state $q_1(X_k)$ in \eqref{eq: all updates form}, \eqref{eq: state update} satisfies $H\Sigma_k H^\top \!=\! \overbar{R}_k$ (see Lemma~\ref{lemma: covariance update} in Appendix~\ref{sec: analysis ca update}) and $H\mu_k \!=\! \overbar{y}_k$ with $\overbar{R}_k,\overbar{y}_k$ given in \eqref{eq: state update}. This greatly simplifies the state update for newly-born objects, as used in Section \ref{sec: ineffective birth}'s Model Scenario.
\end{remark}

\begin{definition}[Admissible bound] \label{def:admissible-bound}
For any $s>0$ and any newly born object $k$, define the birth-common constants
\begin{align} 
\notag c(s):=&\exp(0.5[\psi_{d_Y}((\phi_k^\prime+s)
/2)+\log|\Phi_k^\prime|-d_Y\log\pi])\\ \label{eq: c(s)}
&\quad \times p_k^{e\prime}p_k^d\rho_k \exp(\psi(\eta_k^\prime+s))\textstyle\frac{1}{\lambda_c},\\ \label{eq: t(s)}
t(s):=&c(s)\exp(-\textstyle \frac{d_Y}{2s}),
\end{align}
where $\phi_k^\prime, \Phi_k^\prime, p_k^{e\prime}, \eta_k^\prime$ are the (birth) prior parameters in \eqref{eq: prior parameters}; $d_Y,\psi_{d_Y},\psi$ are as in \eqref{eq: Tjk definition}; $\rho_k$ is given in \eqref{eq: rate update} and fixed across CAVI iterations; and $\lambda_c$ is the clutter intensity in Assumption~\ref{assump}.

For a newly born object $k$, a scalar $B_k(s)$ is called an admissible bound if, for any current association distribution satisfying $\sum_{j=1}^{M}q_1(\theta_j=k)<s$,
one round of CAVI updates \eqref{eq: shape update}, \eqref{eq: rate update}, and \eqref{eq: state update} produces parameters $\mu_k,\phi_k,\Phi_k$ such that
\begin{align}
\label{eq: Bk(s)}
\Delta_k(s)\leq B_k(s)
\end{align}
where $\Delta_k(s)$ is defined as
\begin{align} \label{eq: Delta(s)}
    & \Delta_k(s):= \sum\nolimits_{j=1}^M \!\frac{1+t(s)}{d_k(y_j)+t(s)},  \\
    \label{eq: dk(s)}
    &d_k(y_j):= \exp(0.5(y_j\!-H\mu_k)^{\!\top} \!\phi_k\Phi_k (y_j\!-H\mu_k)),
\end{align}
\end{definition}
\begin{remark} \label{rmk: admis bound def}
The universal choice $B_k(s)=M$ is always admissible, since $d_k(y_j)\geq 1$ for all $j$.  Tighter admissible bounds are possible; Appendix~\ref{sec: construction U(s)} gives one such $B_k(s)$ for the known-shape case.
\end{remark}

We now present the main theorem.
\begin{theorem} \label{main theorem}
Let Assumption~\ref{assump} hold. For any $s>0$ and any newly born object $k\!=\!K_{n-1}\!+\!1,\ldots,K_n$, let $B_k(s)$ be an admissible bound in Definition~\ref{def:admissible-bound}, with $c(s)$ defined in \eqref{eq: c(s)}. Denote by $W_{-1}:[-1/e,0)\rightarrow(-\infty,-1]$ the $-1$ branch of the Lambert W function, and define
    \begin{align} \label{eq: Ls}
        L_k(s)\!:=\!\begin{cases}  
     s, &\hspace{-1.5cm}  \textup{if} \ \ \frac{ c(s) B_k(s)}{d_Y/2}\exp(-\frac{d_Y/2}{B_k(s)})< e \\ 
     \min\{s, V_k(s)\}, & \textup{otherwise}
    \end{cases} 
    \end{align}
    \vspace{-0.5em}
    \begin{align} \label{eq: Vs}
        \!V_k(s):=\frac{1}{\frac{1}{B_k(s)}-\frac{1}{d_Y/2}W_{\!-1}\!\left(-\frac{d_Y/2}{c(s)B_k(s)}\exp\left(\frac{d_Y/2}{B_k(s)}\right) \right)}
    \end{align}
    
Suppose CAVI iteratively applies the updates \eqref{eq: association update}--\eqref{eq: state update}. If, at any CAVI iteration,
$\sum_{j=1}^M q_1(\theta_j=k)<L_k(s)$, then $\sum_{j=1}^M q_1(\theta_j=k)$ decreases monotonically to $0$ in subsequent iterations, with consecutive updates satisfying
\begin{align} \label{eq: association ratio bound}
\frac{
\sum_{j=1}^{M} q_1^{\textup{new}}(\theta_j=k)
}{
\sum_{j=1}^{M} q_1^{\textup{old}}(\theta_j=k)
}
<
\frac{\Delta_k(s)}{B_k(s)}
\leq 1,
\end{align}
where $\Delta_k(s)$ is evaluated from \eqref{eq: Delta(s)} after the corresponding CAVI round starting from $q_1^{\textup{old}}(\theta)$.
\end{theorem}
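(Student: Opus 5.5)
The plan is to follow one CAVI round for a single newly born object $k$. Write $\varsigma:=\sum_j q_1^{\textup{old}}(\theta_j=k)<s$. I would bound the new association mass $\varsigma':=\sum_j q_1^{\textup{new}}(\theta_j=k)$ by an explicit function of $\varsigma$, and then find the range of $\varsigma$ on which this bound is below $\varsigma\,\Delta_k(s)/B_k(s)$. The Lambert-W function enters only when that range is solved for explicitly.

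\textbf{Step 1: bound $S_j^k$ after the parameter updates.} By \eqref{eq: shape update} and \eqref{eq: rate update}, $\phi_k=\phi_k'+\varsigma$ and $\eta_k=\eta_k'+\varsigma$, while $\rho_k$ is fixed. The matrix added to ${\Phi_k'}^{-1}$ is positive semidefinite, so $|\Phi_k|\le|\Phi_k'|$. Remark~\ref{birth prior remark} gives $H\Sigma_kH^\top=\overbar{R}_k=\Phi_k^{-1}/(\phi_k\varsigma)$, so the trace term in \eqref{eq: Tjk definition} equals exactly $d_Y/\varsigma$. Since $\psi$ and $\psi_{d_Y}$ are increasing and $\varsigma<s$, we can replace $\varsigma$ by $s$ inside the digammas. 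This gives
\begin{align*}
p_k^{e\prime}p_k^dS_j^k/\lambda_c\le a(\varsigma)/d_k(y_j), \qquad a(\varsigma):=c(s)\exp\!\big(-\tfrac{d_Y}{2\varsigma}\big)\le t(s).
\end{align*}
In \eqref{eq: association update} I drop the nonnegative contributions of all other objects from the normaliser, which yields $q_1^{\textup{new}}(\theta_j=k)\le a/(d_k(y_j)+a)$.

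\textbf{Step 2: compare with $\Delta_k(s)$.} Since $d_k(y_j)\ge1$ and $a\le t$, the ratio $(d+t)/(d+a)$ is nonincreasing in $d$. Hence
\begin{align*}
\frac{a}{d_k(y_j)+a}\le\frac{a}{1+a}\cdot\frac{1+t}{d_k(y_j)+t}.
\end{align*}
Summing over $j$ gives $\varsigma'\le\frac{a(\varsigma)}{1+a(\varsigma)}\Delta_k(s)$, where $\Delta_k(s)$ is computed from the updated $\mu_k,\phi_k,\Phi_k$. It therefore suffices to show
\begin{align*}
\frac{a(\varsigma)}{1+a(\varsigma)}<\frac{\varsigma}{B_k(s)}.
\end{align*}
Admissibility then gives $\Delta_k(s)/B_k(s)\le1$, which is \eqref{eq: association ratio bound}.

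\textbf{Step 3: solve the sufficient inequality.} Rearranging gives $c(s)e^{-d_Y/(2\varsigma)}(B_k(s)-\varsigma)<\varsigma$. I would substitute $x=1/\varsigma-1/B_k(s)$ (or a similar reciprocal variable) so that it takes the form $x\,e^{-(d_Y/2)x}<\text{const}\cdot e^{-d_Y/(2B_k(s))}/c(s)$, and invert it with the $W_{-1}$ branch. I expect this algebra to be the main obstacle. The exact constants in \eqref{eq: Vs} must come out right: depending on whether $a/(1+a)$ is kept or relaxed to $a$, the argument of $W_{-1}$ shifts. I would first try the cleanest relaxation that reproduces the factor $\exp(d_Y/(2B_k(s)))$ and the prefactor $c(s)B_k(s)$, and otherwise appeal to Corollary~\ref{lemma: lambert W corr}.

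The case split in \eqref{eq: Ls} then has a direct interpretation. If $\frac{c(s)B_k(s)}{d_Y/2}e^{-(d_Y/2)/B_k(s)}<e$, the argument of $W_{-1}$ lies below $-1/e$, so the inequality has no crossing point and holds for every $\varsigma<s$; this gives $L_k(s)=s$. Otherwise, the set of admissible $\varsigma$ is $(0,V_k(s))$, with its boundary given by the $W_{-1}$ root, so $L_k=\min\{s,V_k(s)\}$.

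\textbf{Step 4: induction and convergence to $0$.} Because $\varsigma'<\varsigma<L_k(s)\le s$, the same $s$ and $B_k(s)$ remain valid at the next iteration, and the sufficient inequality is monotone in $\varsigma$ on $(0,L_k)$. Induction then gives a strictly decreasing sequence. To get the limit $0$ rather than just some positive value, note that $\varsigma'/\varsigma\le a(\varsigma)B_k(s)/\varsigma$. This bound is increasing in $\varsigma$, so once $\varsigma<\varsigma_0$ the ratio is at most a fixed constant below $1$. The convergence is therefore at least geometric, and in fact faster, since $a(\varsigma)\to0$ super-exponentially as $\varsigma\to0$.
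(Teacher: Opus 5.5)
Steps 1--3 follow the paper's proof. Step 1 is Proposition~\ref{CAVI bound proposition}, your monotonicity of $(d+t)/(d+a)$ is Lemma~\ref{lemma sugar water ieq}, and your reduced inequality $c(s)e^{-d_Y/(2\varsigma)}(B_k(s)-\varsigma)<\varsigma$ is exactly what the paper inverts with Corollary~\ref{lemma: lambert W corr}. The genuine gap is the convergence to $0$ in Step 4.

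Strict decrease only gives $\varsigma_r\downarrow\beta_\infty\ge 0$, and neither per-step bound you state excludes $\beta_\infty>0$:
\begin{itemize}
\item The factor $\Delta_k(s)/B_k(s)$ need not be bounded away from $1$.
\item Your bound $a(\varsigma)B_k(s)/\varsigma$ is not guaranteed to be below $1$ on $(0,L_k(s))$. Step 3 only certifies $a(\varsigma)B_k(s)/\big((1+a(\varsigma))\varsigma\big)<1$. For example, with $d_Y=2$, $s=2$, $c(s)=0.29$ and $B_k(s)=M=10$, you are in the first case of \eqref{eq: Ls}, so $L_k(s)=2$, yet $a(1)B_k(s)\approx 1.07$.
\item The map $\varsigma\mapsto e^{-d_Y/(2\varsigma)}/\varsigma$ is increasing only for $\varsigma<d_Y/2$.
\end{itemize}
So ``once $\varsigma<\varsigma_0$'' presupposes that the iterates ever enter $(0,\varsigma_0)$, which is precisely what must be shown.

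The missing step is the paper's limit argument. Set $f(x):=B_k(s)\,a(x)/(1+a(x))$. Then $\varsigma_{r+1}<f(\varsigma_r)$, and Step 3 gives $f(x)<x$ on $(0,L_k(s))$. If $\beta_\infty>0$, then $\beta_\infty<L_k(s)$, and continuity of $f$ yields $\beta_\infty\le f(\beta_\infty)<\beta_\infty$, a contradiction. Equivalently, $f(x)/x$ attains a maximum below $1$ on the compact interval $[\beta_\infty,\varsigma_1]$, which forces geometric decay. Only after this is established does your super-exponential tail remark apply.

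On Step 3, keep $a/(1+a)$ exact rather than relaxing it to $a$; your substitution then works directly. With $u=-\frac{d_Y}{2}\big(\frac{1}{\varsigma}-\frac{1}{B_k(s)}\big)$, the inequality becomes $u e^{u}>-\frac{d_Y/2}{c(s)B_k(s)}\exp\big(\frac{d_Y/2}{B_k(s)}\big)$.
\begin{itemize}
\item If the right-hand side is below $-1/e$ (the first case of \eqref{eq: Ls}), it holds for all $u$.
\item Otherwise it holds for $u<W_{-1}(\cdot)$, i.e.\ $\varsigma<V_k(s)$ exactly as in \eqref{eq: Vs}. It also holds on an upper branch via $W_0$, which you can discard.
\end{itemize}
Relaxing to $a$ instead gives a different case condition, $\frac{c(s)B_k(s)}{d_Y/2}<e$, and a no larger threshold $-\frac{d_Y/2}{W_{-1}(-\frac{d_Y/2}{c(s)B_k(s)})}$. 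That would not prove the stated theorem.
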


\begin{remark}
A larger $L_k(s)$ enables earlier identification of guaranteed ineffective births, improving inference efficiency. Appendix~\ref{sec: function behaviour} shows that a smaller admissible bound $B_k(s)$ only increases the threshold $L_k(s)$, or leaves it unchanged. Hence a tighter $B_k(s)$ is preferred when available; see Remark~\ref{rmk: admis bound def} under Definition \ref{def:admissible-bound} for admissible choices of $B_k(s)$.
\end{remark}
\begin{remark}
$L_k(s)$ and $V_k(s)$ remain fixed across CAVI iterations for fixed $s$ and a chosen admissible bound $B_k(s)$. If $B_k(s)$ is also independent of $k$, the same threshold $L_k(s)$ can be used for all birth components sharing the same birth prior and for all iterations. More refined control is possible by recomputing $L_k(s)$ with different choices of $s$ or $B_k(s)$.
\end{remark}
\begin{remark}
The threshold $L_k(s)$ in Theorem~\ref{main theorem} provides only a sufficient condition for monotonic decrease. In practice, we use a simple heuristic threshold $L$, such as $0.5$ for higher inference efficiency or $0.05$ for more reliable identification of ineffective births. This heuristic is convenient, although an overly large $L$ may risk discarding valid births.
\end{remark}
Further discussion on how the threshold $L_k(s)$ and related functions depend on $B_k(s)$ and $s$, together with a tighter construction of $B_k(s)$, is provided in Appendix~\ref{sec: selection of Ls}.

The proof of Theorem~\ref{main theorem}, supporting lemmas, and detailed CAVI update analysis are provided in Appendix~\ref{sec: theoretical lemma and proof}.

\section{Conclusion}
\label{sec: conclusion}
We present PiVoT, a fast, training-free framework for large-scale multi-object detection and tracking in heavy clutter from point clouds. Experiments show that PiVoT significantly outperforms existing Bayesian trackers in cluttered or large-scale scenes and scales to a thousand objects without gating. PiVoT runs efficiently on a full-scale automotive radar dataset, with performance comparable to a deep-learning detection benchmark, while requiring no training and additionally providing tracking, shape, and existence probability estimation.         

For automotive radar applications, future work includes a formal Bayesian treatment of moving-object indication, 4D Doppler radar modelling, the incorporation of radar cross-section information, and object classification. More broadly, PiVoT could be extended towards decentralised sensor fusion~\cite{li2025decentralised} and nonlinear Gaussian measurement models.

\bibliographystyle{IEEEtran}
\bibliography{./IEEEabrv, references}

@book{bar1995multitarget,
  title={Multitarget-multisensor tracking: principles and techniques},
  author={Bar-Shalom, Yaakov and Li, Xiao-Rong},
  volume={19},
  year={1995},
  publisher={YBs Storrs, CT}
}

@inproceedings{rahmathullah2017generalized,
  title={Generalized optimal sub-pattern assignment metric},
  author={Rahmathullah, Abu Sajana and Garc{\'\i}a-Fern{\'a}ndez, {\'A}ngel F and Svensson, Lennart},
  booktitle={2017 20th International Conference on Information Fusion (FUSION)},
  pages={1--8},
  year={2017},
  organization={IEEE}
}

@article{blei2017variational,
  title={Variational inference: A review for statisticians},
  author={Blei, D. M. and Kucukelbir, A. and McAuliffe, J. D.},
  journal={Journal of the American statistical Association},
  volume={112},
  number={518},
  pages={859--877},
  year={2017},
  publisher={Taylor \& Francis}
}

@Book{           bishop:2006:PRML,
  author = 	 "C. Bishop",
  title = 	 "Pattern Recognition and Machine Learning",
  publisher =  "Springer",
  year = 	 "2006",
}

@inproceedings{gilholm2005poisson,
  title={Poisson models for extended target and group tracking},
  author={Gilholm, Kevin and Godsill, Simon and Maskell, Simon and Salmond, David},
  booktitle={Signal and Data Processing of Small Targets 2005},
  volume={5913},
  pages={230--241},
  year={2005},
  organization={SPIE}
}

@inproceedings{Gan2022,
  title={Conditionally factorized Variational {B}ayes with importance sampling},
  author={Gan, R. and Godsill, S. J.},
  booktitle={IEEE International Conference on Acoustics, Speech and Signal Processing (ICASSP)},
  year={2022},
  organization={IEEE},
}

@article{meyer2021scalable,
  title={Scalable detection and tracking of geometric extended objects},
  author={Meyer, Florian and Williams, Jason L},
  journal={IEEE Transactions on Signal Processing},
  volume={69},
  pages={6283--6298},
  year={2021},
  publisher={IEEE}
}

@inproceedings{lau2016structured,
  title={A structured mean field approach for existence-based multiple target tracking},
  author={Lau, Roslyn A and Williams, Jason L},
  booktitle={2016 19th International Conference on Information Fusion (FUSION)},
  year={2016},
  organization={IEEE}
}

@article{granstrom2019poisson,
  title={Poisson {multi-Bernoulli} mixture conjugate prior for multiple extended target filtering},
  author={Granstr{\"o}m, Karl and Fatemi, Maryam and Svensson, Lennart},
  journal={IEEE Transactions on Aerospace and Electronic Systems},
  volume={56},
  number={1},
  pages={208--225},
  year={2019},
  publisher={IEEE}
}

@article{gan2024variational,
  title={Variational tracking and redetection for closely-spaced objects in heavy clutter},
  author={Gan, Runze and Li, Qing and Godsill, Simon J},
  journal={IEEE Transactions on Aerospace and Electronic Systems},
    volume={60},
  number={4},
  pages={5286--5311},
  year={2024},
  publisher={IEEE}
}

@inproceedings{gan2022variational,
  title={A variational {Bayes} association-based multi-object tracker under the non-homogeneous {Poisson} measurement process},
  author={Gan, Runze and Li, Qing and Godsill, Simon},
  booktitle={2022 25th International Conference on Information Fusion (FUSION)},
  pages={1--8},
  year={2022},
  organization={IEEE}
}

@article{turner2014complete,
  title={A complete variational tracker},
  author={Turner, Ryan D and Bottone, Steven and Avasarala, Bhargav},
  journal={Advances in Neural Information Processing Systems},
  year={2014}
}

@book{papoulis2002probability,
  author    = {A. Papoulis and S. U. Pillai},
  title     = {Probability, Random Variables, and Stochastic Processes},
  edition   = {4},
  publisher = {McGraw-Hill},
  year      = {2002},
  note      = {{Section 5-4}}
}

@article{li2023adaptive,
  title={An adaptive and scalable multi-object tracker based on the non-homogeneous {Poisson} process},
  author={Li, Qing and Gan, Runze and Liang, Jiaming and Godsill, Simon J},
  journal={IEEE Transactions on Signal Processing},
  volume={71},
  pages={105--120},
  year={2023},
  publisher={IEEE}
}

@inproceedings{li2023scalable,
  title={A scalable {Rao-Blackwellised} sequential {MCMC} sampler for joint detection and tracking in clutter},
  author={Li, Qing and Gan, Runze and Godsill, Simon},
  booktitle={2023 26th International Conference on Information Fusion (FUSION)},
  organization={IEEE}
}

@article{xia2023trajectory,
  title={Trajectory {PMB} filters for extended object tracking using belief propagation},
  author={Xia, Yuxuan and Garc{\'\i}a-Fern{\'a}ndez, {\'A}ngel F and Meyer, Florian and Williams, Jason L and Granstr{\"o}m, Karl and Svensson, Lennart},
  journal={IEEE Transactions on Aerospace and Electronic Systems},
  volume={59},
  number={6},
  pages={9312--9331},
  year={2023},
  publisher={IEEE}
}

@article{li2025decentralised,
  title={Decentralised Variational Inference Frameworks for Multi-object Tracking on Sensor Networks},
  author={Li, Qing and Gan, Runze and Godsill, Simon J},
  journal={IEEE Transactions on Signal Processing},
  year={2025},
    volume={73},
  number={},
  pages={2753-2767},
}

@article{davey2007integrated,
  title={Integrated track maintenance for the {PMHT} via the hysteresis model},
  author={Davey, Samuel J and Gray, Douglas A},
  journal={IEEE transactions on Aerospace and Electronic Systems},
  volume={43},
  number={1},
  pages={93--111},
  year={2007},
  publisher={IEEE}
}

@article{granstrom2022tutorial,
  title={A tutorial on multiple extended object tracking},
  author={Granstr{\"o}m, Karl and Baum, Marcus},
  journal={Authorea Preprints},
  year={2022},
  publisher={Authorea}
}

@article{koch2008bayesian,
  title={Bayesian approach to extended object and cluster tracking using random matrices},
  author={Koch, Johann Wolfgang},
  journal={IEEE Transactions on Aerospace and Electronic Systems},
  volume={44},
  number={3},
  pages={1042--1059},
  year={2008},
  publisher={IEEE}
}

@misc{pivotweb,
  note         = {PiVoT project page and demos: \url{https://runzegan.github.io/projects/pivot/}}
}

@inproceedings{schumann2021radarscenes,
  title={{RadarScenes}: A real-world radar point cloud data set for automotive applications},
  author={Schumann, Ole and Hahn, Markus and Scheiner, Nicolas and Weishaupt, Fabio and Tilly, Julius F and Dickmann, J{\"u}rgen and W{\"o}hler, Christian},
  booktitle={2021 IEEE 24th International Conference on Information Fusion (FUSION)},
  pages={1--8},
  year={2021},
  organization={IEEE}
}

@article{malzer2021constraint,
  title={Constraint-based hierarchical cluster selection in automotive radar data},
  author={Malzer, Claudia and Baum, Marcus},
  journal={Sensors},
  volume={21},
  number={10},
  pages={3410},
  year={2021},
  publisher={MDPI}
}

@inproceedings{scheiner2019multi,
  title={A multi-stage clustering framework for automotive radar data},
  author={Scheiner, Nicolas and Appenrodt, Nils and Dickmann, J{\"u}rgen and Sick, Bernhard},
  booktitle={2019 IEEE Intelligent Transportation Systems Conference (ITSC)},
  pages={2060--2067},
  year={2019},
  organization={IEEE}
}

@inproceedings{knill2016direct,
  title={A direct scattering model for tracking vehicles with high-resolution radars},
  author={Knill, Christina and Scheel, Alexander and Dietmayer, Klaus},
  booktitle={2016 IEEE Intelligent Vehicles Symposium (IV)},
  pages={298--303},
  year={2016},
  organization={IEEE}
}

@inproceedings{thormann2021incorporating,
  title={Incorporating range rate measurements in EKF-based elliptical extended object tracking},
  author={Thormann, Kolja and Baum, Marcus},
  booktitle={2021 IEEE International Conference on Multisensor Fusion and Integration for Intelligent Systems (MFI)},
  pages={1--6},
  year={2021},
  organization={IEEE}
}

@inproceedings{fent2023radargnn,
  title={{RadarGNN}: Transformation invariant graph neural network for radar-based perception},
  author={Fent, Felix and Bauerschmidt, Philipp and Lienkamp, Markus},
  booktitle={Proceedings of the IEEE/CVF Conference on Computer Vision and Pattern Recognition},
  pages={182--191},
  year={2023}
}

@article{ahmad2023review,
  title={A review of automatic classification of drones using radar: key considerations, performance evaluation, and prospects},
  author={Ahmad, Bashar I and Rogers, Colin and Harman, Stephen and Dale, Holly and Jahangir, Mohammed and Antoniou, Michael and Baker, Chris and Newman, Mike and Fioranelli, Francesco},
  journal={IEEE Aerospace and Electronic Systems Magazine},
  volume={39},
  number={2},
  pages={18--33},
  year={2023},
  publisher={IEEE}
}

@article{scheiner2021new,
  title={New Challenges for Deep Neural Networks in Automotive Radar Perception: An Overview of Current Research Trends},
  author={Scheiner, Nicolas and Weishaupt, Fabio and Tilly, Julius F and Dickmann, Jurgen},
  journal={Automatisiertes Fahren 2020: Von der Fahrerassistenz zum autonomen Fahren 6. Internationale ATZ-Fachtagung},
  pages={165--182},
  year={2021},
  publisher={Springer}
}

@article{scheiner2021object,
  title={Object detection for automotive radar point clouds--a comparison},
  author={Scheiner, Nicolas and Kraus, Florian and Appenrodt, Nils and Dickmann, J{\"u}rgen and Sick, Bernhard},
  journal={AI Perspectives},
  volume={3},
  number={1},
  pages={6},
  year={2021},
  publisher={Springer}
}

@article{cavagna2019sparta,
  title={{SpaRTA} tracking across occlusions via partitioning of {3D} clouds of points},
  author={Cavagna, Andrea and Melillo, Stefania and Parisi, Leonardo and Ricci-Tersenghi, Federico},
  journal={IEEE Transactions on Pattern Analysis and Machine Intelligence},
  volume={43},
  number={4},
  pages={1394--1403},
  year={2019},
  publisher={IEEE}
}

@article{cox1996efficient,
  title={An efficient implementation of {Reid's} multiple hypothesis tracking algorithm and its evaluation for the purpose of visual tracking},
  author={Cox, Ingemar J. and Hingorani, Sunita L.},
  journal={IEEE Transactions on pattern analysis and machine intelligence},
  volume={18},
  number={2},
  pages={138--150},
  year={1996},
  publisher={IEEE}
}

@article{kari2023evolutionary,
  title={Evolutionary developments of today’s remote sensing radar technology—Right from the telemobiloscope: A review},
  author={Kari, Samedh Sachin and Raj, A Arockia Bazil and K, Balasubramanian.},
  journal={IEEE Geoscience and Remote Sensing Magazine},
  volume={12},
  number={1},
  pages={67--107},
  year={2023},
  publisher={IEEE}
}

@article{wang2024sequential,
  title={Sequential point clouds: A survey},
  author={Wang, Haiyan and Tian, Yingli},
  journal={IEEE Transactions on Pattern Analysis and Machine Intelligence},
  volume={46},
  number={8},
  pages={5504--5523},
  year={2024},
  publisher={IEEE}
}

@incollection{BOLE2014139,
title = {Chapter 3 - Target Detection},
booktitle = {Radar and ARPA Manual},
publisher = {\hspace{-0.5em} Butterworth-Heinemann},
edition = {3rd},
pages = {139-213},
year = {2014},
isbn = {978-0-08-097752-2},
doi = {https://doi.org/10.1016/B978-0-08-097752-2.00003-9},
author = {Alan Bole and Alan Wall and Andy Norris},
}

@inproceedings{tilly2020detection,
  title={Detection and tracking on automotive radar data with deep learning},
  author={Tilly, Julius F and Haag, Stefan and Schumann, Ole and Weishaupt, Fabio and Duraisamy, Bharanidhar and Dickmann, J{\"u}rgen and Fritzsche, Martin},
  booktitle={2020 IEEE 23rd International Conference on Information Fusion (FUSION)},
  pages={1--7},
  year={2020},
  organization={IEEE}
}

@article{scheel2018tracking,
  title={Tracking multiple vehicles using a variational radar model},
  author={Scheel, Alexander and Dietmayer, Klaus},
  journal={IEEE Transactions on Intelligent Transportation Systems},
  volume={20},
  number={10},
  pages={3721--3736},
  year={2018},
  publisher={IEEE}
}

@inproceedings{gan2025pivot,
  title={{PiVoT: Poisson} measurements-based variational multi-object detection and tracking},
  author={Gan, Runze and Li, Qing and Hopgood, James R and Davies, Mike E and Godsill, Simon},
  booktitle={2025 28th International Conference on Information Fusion (FUSION)},
  pages={1--8},
  year={2025},
  organization={IEEE}
}

\clearpage

\vspace{5em}
\setcounter{page}{1}
\onecolumn
\begin{center} { \LARGE Supplementary Materials for PiVoT: A Variational Solution for Real-time Large-scale Multi-object Detection and Tracking under Heavy Clutter } \end{center}
\appendices
\section{Notation} \label{apx: notation}
Tables~\ref{tb: variable notation}--\ref{tab: notation_functions} summarise the notation for variables, parameters, probability laws, and functions used throughout the paper.

\vspace{1em}
\begin{table}[!htbp]
\centering
\caption{Main variable and parameter notation. Note that the time index subscript $n$ is omitted from Section~\ref{sec: PiVoT challenges solution} onwards to simplify notation, unless needed to avoid ambiguity.}
\label{tab: notation_variables}
\footnotesize
\begin{tabular}{p{0.24\textwidth} p{0.68\textwidth}}
\hline
Notation & Description \\
\hline
$n$ & Discrete time index. \\
$k$, $j$ & Object index and measurement index, respectively. \\
$K_n$ & Number of potentially existing objects at time $n$. \\
$K_n^b$ & Number of birth objects at time $n$. \\
$M_n$ & Number of received measurements at time $n$. \\
$Y_{n,j}$, $Y_n=[Y_{n,1},\ldots,Y_{n,M_n}]$ & The $j$-th measurement and the full measurement set at time $n$. \\
$y_{n,j}$ & Positional component of $Y_{n,j}$ in the positional-only and Doppler-augmented models. \\
$d_Y$ & Dimension of the positional measurement space. \\
$\mathbb{R}^{d_Y}$ & $d_Y$-dimensional real vector space\\
$I_{d_Y}$ & Identity matrix of dimension $d_Y$. \\
$X_{n,k}$, $X_n=[X_{n,1},\ldots,X_{n,K_n}]$ & Kinematic state of object $k$ and the collection of all object states. \\
$\Lambda_{n,k}$, $\Lambda_n=[\Lambda_{n,1},\ldots,\Lambda_{n,K_n}]$ & Poisson rate of object $k$ and the collection of all object rates. \\
$P_{n,k}$, $P_n=[P_{n,1},\ldots,P_{n,K_n}]$ & Shape precision matrix of object $k$ and the collection of all object shape precision matrices. \\
$E_{n,k}$, $E_n=[E_{n,1},\ldots,E_{n,K_n}]$ & Existence indicator of object $k$ and the collection of all existence indicators. \\
$D_{n,k}$, $D_n=[D_{n,1},\ldots,D_{n,K_n}]$ & Detectability indicator of object $k$ and the collection of all detectability indicators. \\
$\Lambda_{n,0}$ & Clutter Poisson rate. \\
$\theta_{n,j}$, $\theta_n=[\theta_{n,1},\ldots,\theta_{n,M_n}]$ & Association variable for measurement $Y_{n,j}$ and the collection of all measurement oriented associations. \\
$X_{k-}$, $\Lambda_{k-}$, $P_{k-}$ & Collections of all object states, rates, and shape precision matrices except those of object $k$. \\
$E_{k-}$, $D_{k-}$ & Collections of all existence and detectability indicators except those of object $k$. \\
$F_n$, $Q_n$ & State transition matrix and covariance for legacy objects at time $n$; see \eqref{eq: linear Gaussian transition}.\\
$p_{n,k}^d$ & Detection probabilities of object $k$ at time $n$; see \eqref{eq: detectability model}.\\
$p_{n,k}^s$, $p_{n,k}^b$ & Survival and birth probabilities of object $k$ at time $n$; see Section \ref{sec: legacy object model} and \eqref{eq: birth prior}. \\
$\mu_{n,k}^b$, $\Sigma_{n,k}^b$ & Mean and covariance of the Gaussian birth prior for $X_{n,k}$; see \eqref{eq: birth prior}. \\
$\eta_{n,k}^b$, $\rho_{n,k}^b$ & Shape and scale parameters of the Gamma birth prior for $\Lambda_{n,k}$; see \eqref{eq: birth prior}. \\
$\Phi_{n,k}^b$, $\phi_{n,k}^b$ & Scale matrix and degrees of freedom of the Wishart birth prior for $P_{n,k}$; see \eqref{eq: birth prior}. \\
$p_{n,k}^{e\prime}$ & Predictive prior existence probability of object $k$; see \eqref{eq: prior parameters}. \\
$\mu'_{n,k}$, $\Sigma'_{n,k}$ & Mean and covariance of the Gaussian predictive prior for $X_{n,k}$; see \eqref{eq: prior parameters}. \\
$\eta'_{n,k}$, $\rho'_{n,k}$ & Shape and scale parameters of the Gamma predictive prior for $\Lambda_{n,k}$; see \eqref{eq: prior parameters}. \\
$\Phi'_{n,k}$, $\phi'_{n,k}$ & Scale matrix and degrees of freedom of the Wishart predictive prior for $P_{n,k}$; see \eqref{eq: prior parameters}. \\
$\gamma_{\Lambda,n},\gamma_{P,n}$ & Forgetting factors for the rate and shape prediction step; see \eqref{eq: prediction}.\\
$X_k^p$, $X_k^v$ & Positional and remaining components of the kinematic state $X_k$, respectively; used in Section \ref{sec: initialisation, clustering demo}. \\
$L$ & Practical threshold for early removal of ineffective births. \\
$H$ & Observation matrix mapping the object state to the positional measurement space, as exemplified in \eqref{eq: G and H}. \\
$G$ & Matrix that extracts the velocity component from the object state, as exemplified in \eqref{eq: G and H}. \\
$v_{n,j}$ & Scalar Doppler velocity component of $Y_{n,j}$ in the Doppler-augmented model, where $Y_{n,j}=(y_{n,j},v_{n,j})$. \\
$\sigma_v^2$ & Variance of the scalar Doppler velocity measurement noise in the Doppler-augmented model; see \eqref{eq: Doppler likelihood}. \\
$\alpha_{n,j}$, $e_{n,j}$ & Bearing angle of measurement $Y_{n,j}$ and the corresponding line of sight unit vector; see \eqref{eq: Doppler likelihood} and Fig. \ref{fig: Dop Illust Combined}(b). \\
$\mathrm{v}_{n,k}^{j}$ & True radial velocity of object $k$ at the bearing of measurement $j$; see Fig. \ref{fig: Dop Illust Combined}(b). \\
$\mathrm{v}_{n,k}^{\mathrm{x}},\mathrm{v}_{n,k}^{\mathrm{y}}$ & Cartesian velocity components of object $k$ along the $x$ and $y$ axes in the Doppler-augmented model; see Fig. \ref{fig: Dop Illust Combined}(b). \\
$\bar{y}_{k}$ & Pseudo measurement for object $k$ constructed from all positional measurements for state update; see \eqref{eq: state update}. \\
$\bar{R}_{k}$ & Covariance matrix associated with the positional pseudo measurement $\bar{y}_{k}$; see \eqref{eq: state update}. \\
$\bar{u}_{k}$ & Pseudo measurement for object $k$ constructed from all Doppler measurements for state update; see \eqref{eq: pseudo measurements Dopp}. \\
$\bar{U}_{k}$ & Covariance matrix associated with the Doppler pseudo measurement $\bar{u}_{k}$; see \eqref{eq: pseudo measurements Dopp}. \\
$P_{\mathrm{pru}}$, $P_{\mathrm{rep}}$, $P_{\mathrm{stp}}$ & Existence probability thresholds for pruning, reporting, and stopping reporting an object, respectively; see Section \ref{sec: estimation extraction}.\\
$e$ & Euler's number, the base of the natural exponential, used in Theorem~\ref{main theorem}. \\
\hline
\end{tabular}
\label{tb: variable notation}
\end{table}

\clearpage

\begin{table}[!htbp]
\centering
\caption{Probability law and variational notation. The time index subscript $n$ is omitted from Section~\ref{sec: PiVoT challenges solution} onwards for clarity.}
\label{tab: notation_laws}
\footnotesize
\begin{tabular}{p{0.24\textwidth} p{0.68\textwidth}}
\hline
Notation & Description \\
\hline
$p$ & Exact probability law specified by the generative model; see Sec.~\ref{sec: Prob Formulation}. \\
$\hat{p}_n$ & Filtering probability law at time $n$, formed by combining the predictive prior constructed from the filtered result at time $n-1$ with the exact current measurement likelihood; see \eqref{eq: filtering posterior definition}. \\
$\bar{p}$ & Approximate filtering probability law obtained after applying the approximation in \eqref{eq: final approx for Xi}; defined separately for Stage~1 and Stage~2 in \eqref{eq: stage 1 unnormalised} and \eqref{eq: stage 2 unnormalised}. \\
$q_1$ & Stage 1 variational law used for detection, tracking, rate estimation, shape estimation, and association. \\
$q_2$ & Stage 2 object-wise variational law used for existence and detectability evaluation. \\
$q_2^*$ & Optimal Stage 2 variational law under the corresponding Stage 2 objective. \\
\hline
\end{tabular}
\label{tb: law notation}
\end{table}

\begin{table}[!htbp]
\centering
\caption{Main named functions.}
\label{tab: notation_functions}
\footnotesize
\begin{tabular}{p{0.24\textwidth} p{0.50\textwidth} p{0.18\textwidth}}
\hline
Notation & Description & Defined in \\
\hline
$\mathcal{N}(\cdot)$ & Gaussian distribution, parameterised by a mean and covariance. & Standard notation \\
$\mathcal{G}(\cdot)$ & Gamma distribution, parameterised by shape and scale parameters. & Standard notation \\
$\mathcal{W}(\cdot)$ & Wishart distribution, parameterised by a scale matrix and degrees of freedom. & Standard notation \\
$\textup{Ber}(\cdot)$ & Bernoulli distribution, parameterised by its success probability. & Standard notation \\
$\delta[\cdot]$ & Kronecker delta indicator, equal to $1$ when the condition is true and $0$ otherwise. & Standard notation \\
$\psi(\cdot)$ & Digamma function. & Standard notation \\
$\psi_{d_Y}\!(\cdot)$ & Multivariate digamma function of dimension $d_Y$. & Standard notation \\
$\exp(\cdot)$ & Exponential function. & Standard notation \\
$|\cdot|$ & Determinant of a square matrix. & Standard notation \\
$\Tr(\cdot)$ & Trace of a square matrix. & Standard notation \\
$W_{-1}(\cdot)$ & The $-1$ branch of the Lambert W function. & Standard notation \\
$\ell_0(Y_j)$ & Clutter measurement likelihood. & Sections \ref{sec: measuremnt model} and \ref{sec: Doppler model} \\
$\ell(Y_j|X_k,P_k)$ & Single-object measurement likelihood, defined separately for the positional-only and Doppler-augmented models. & \eqref{measurement model} and \eqref{eq: Doppler likelihood}, respectively. \\
$h(\Lambda_n,D_n,\theta_n)$ & Product term in the association prior. & \eqref{eq: association prior form 1}, \eqref{eq: association prior form 2} \\
$\Xi_k(\Lambda_k,\theta)$ & Object specific term obtained after marginalising $D_k$ in the association prior. & \eqref{eq: original expectation} \\
$S_j^k$ & Expected contribution of object $k$ to measurement $Y_j$ in the association update. & \eqref{eq: association update} \\
$g(D_k)$ & Detectability dependent term used in the Stage~2 update of $q_2(E_k,D_k)$. & \eqref{eq: simplified g} \\
$c(s)$, $t(s)$ & Birth common constants used to construct the admissible threshold. & \eqref{eq: c(s)} and \eqref{eq: t(s)}\\
$d_k(y_j)$ & Exponential distance score between measurement $y_j$ and object $k$. & \eqref{eq: dk(s)} \\
$\Delta_k(s)$ & Aggregate score controlling contraction of the association mass of object $k$. & \eqref{eq: Delta(s)} \\
$B_k(s)$ & Admissible upper bound on $\Delta_k(s)$ for birth object $k$. & \eqref{eq: Bk(s)} \\
$L_k(s)$ & Sufficient threshold of birth object $k$ for association mass to decay to zero.& \eqref{eq: Ls} \\
$V_k(s)$ & Auxiliary threshold function used in the construction of $L_k(s)$. & \eqref{eq: Vs} \\
\hline
\end{tabular}
\label{tb: function notation}
\end{table}

\twocolumn
% \clearpage
\section{Limitations of the Naive Mean-Field Assumption} 
\label{apx:limitation MF}
This appendix examines the inevitable failure of using the naive mean-field structure in \eqref{eq: naive mean-field} for the variational distribution $q(\theta,X,\Lambda,P,E,D)$ to approximate the true posterior $\hat{p}(\theta,X,\Lambda,P,E,D|Y)$ in \eqref{eq: filtering posterior definition} by minimising KL$(q\|\hat{p})$. Applying the standard CAVI update formula to this variational inference problem, we can verify that the mean-field structure $q$ in \eqref{eq: naive mean-field} needs to satisfy the following finer mean-field structure to achieve a lower KL$(q\|\hat{p})$:
\begin{align} \label{eq: finer mean-field}
    q(\theta,X,\Lambda,P,E,D)\!=\!\prod_{j=1}^M\!q(\theta_j)\!\prod_{k=1}^K \! q(X_k)q(\Lambda_k)q(P_k)q(E_k,D_k\!)
\end{align}
A fundamental issue arises because the true posterior $\hat{p}(\theta,X,\Lambda,P,E,D|Y)$ in \eqref{eq: filtering posterior definition} involves $p(\theta,M|\Lambda,D)$ in \eqref{eq: association prior} and $p(D_k|E_k)$ in \eqref{eq: detectability model} as multiplication factors. Specifically, for any $k\!=\!1,...,K$ and $j\!=\!1,...,M$, the joint factor $p(\theta_j\!=\!k, \theta_{j-},M|\Lambda,D)p(D_k|E_k\!=\!0)$ is always zero. Consequently, $\hat{p}(\theta_j=k,E_k=0,\theta_{j-},E_{k-},X,\Lambda,P,D)$ is always zero, i.e. $\hat{p}$ is zero whenever a measurement $Y_j$ is associated with a non-existent object $k$. Since a variational distribution cannot assign probability mass where $\hat{p}=0$ (as otherwise KL$(q\|\hat{p})$ would be infinite), we must have: 
\begin{align} \notag
    q(\theta_j=k,E_k=0,\theta_{j-},E_{k-},X,\Lambda,P,D)=0,
\end{align}
for all $j,k$, and for all values of $\theta_{j-},E_{k-},X,\Lambda,P,D$. Since $q$ follows the structure in \eqref{eq: finer mean-field}, this constraint implies 
\begin{align} \label{eq: naive constraint}
    \!q(\theta_j\!=\!k)q(E_k\!=\!0,D_k)q(\theta_{j-},E_{k-},X,\Lambda,P,D_{k-})=0.
\end{align}
To allow an object $k$ even a small probability of associating with at least one measurement, i.e. $q(\theta_j=k)\neq 0$, the only way to ensure the constraint \eqref{eq: naive constraint} for all possible values of $\theta_{j-},E_{k-},X,\Lambda,P,D_{k-}$ and $D_k$ is to enforce $q(E_k=0,D_k)=0$ for all values of $D_k$, which in turn implies $q(E_k=0)=0$. This leads to an inevitable failure: 
\begin{enumerate}
  \item Forcing $q(E_k=0)=0$ means the object $k$ must always exist, eliminating any meaningful uncertainty quantification of existence.
  \item Otherwise, the object $k$ cannot associate with any measurement, preventing it from ever being updated.
\end{enumerate}
Either outcome is problematic, making the naive mean-field assumption in \eqref{eq: naive mean-field} fundamentally flawed.

\section{Optimal NHPP Approximation View of \eqref{eq: final approx for Xi}} \label{apx: alternative approx interpret}
This appendix provides a detailed interpretation of the approximation in \eqref{eq: final approx for Xi} from the perspective of point process approximation. We first show that approximating $\Xi_k(\Lambda_k,\theta)$ in \eqref{eq: original expectation} by \eqref{eq: final approx for Xi} is equivalent to replacing a marginalised point process with an NHPP. We then justify this replacement by showing that the approximating NHPP minimises the KLD from the original marginalised process.

\subsubsection{Point process interpretation of the approximation} \label{apx: PP interpretation}
Recall from Section \ref{sec: measuremnt model} that the overall measurement process is constructed by superposing a clutter process with intensity $\Lambda_0\ell_0(Y_j)$ and $K$ independent NHPPs, where the $k$-th NHPP has intensity $D_k\Lambda_k\ell(Y_j|X_k,P_k)$, dependent on the detectability variable $D_k$. From \eqref{eq: likelihood joint}-\eqref{eq: association prior form 1}, the joint association and likelihood density for the overall measurement process is
\begin{align} \label{eq: joint likeasoc example}
    &p(Y,M,\theta|D,X,\Lambda,P)=\textstyle\frac{1}{M!}p(Y|\theta,P,X)\ \Omega_0\prod\nolimits_{s=1}^K\Omega_s\\ \notag
    &\Omega_0=e^{-\Lambda_0}\Lambda_0^{\sum_{j=1}^{M} \delta[\theta_{j}=0]}, \ \ \Omega_s=e^{-D_s\Lambda_s}(D_s\Lambda_s)^{\sum_{j=1}^{M} \delta[\theta_{j}=s]},
\end{align}
where each $\Omega_s$ is contributed solely by the $s$-th NHPP and is independent of the other superposed components.

Now consider marginalising $D_k$ under prior $\hat{p}(D_k)$, similarly to \eqref{eq: stage 1 unnormalised goal} and \eqref{eq: stage 2 unnormalised goal}. The global joint density $\sum_{D_k}\hat{p}(D_k)p(Y,M,\theta|D,X,\Lambda,P)$ retains the same form as \eqref{eq: joint likeasoc example}, except that $\Omega_k$ is replaced by its expectation under $\hat{p}(D_k)$, i.e. $\E_{\hat{p}(D_k)}\Omega_k=\Xi_k(\Lambda_k,\theta)$, which is exactly the quantity defined in \eqref{eq: original expectation}. From a modelling perspective, this corresponds to replacing the $k$-th component of the superposed measurement process, which was originally an NHPP conditioned on $D_k$, with a marginalised point process. The other components remain unchanged due to independence.

To examine this $D_k$-marginalised point process more closely, let $Z=[Z_1,\ldots,Z_m]$ denote the measurement set generated by object $k$ alone, with cardinality $m$. Its likelihood density $p_k(Z,m|X_k,\Lambda_k,P_k)$ is obtained by integrating out $D_k$ from the likelihood of the original $k$-th NHPP, i.e.
\begin{align} \notag
    &p_k(Z,m|X_k,\Lambda_k,P_k) =  p_k(m|\Lambda_k)\prod\nolimits_{j=1}^m \ell(Z_j|X_k,P_k),\\ \label{eq: margi k NHPP}
    &\qquad\qquad p_k(m|\Lambda_k):=\E_{\hat{p}(D_k)}\text{Pois}(m;D_k\Lambda_k).
\end{align}
where $\text{Pois}(m;D_k\Lambda_k)$ denotes the Poisson distribution with rate $D_k\Lambda_k$, and $\hat{p}_k(D_k)=\text{Ber}(p_k^{e \prime}p_k^d)$ as defined in \eqref{eq: predictive Dk}. This marginalised process is generally not an NHPP. Rather, it is a mixture between an empty process and an NHPP, with the original spatial density $\ell(\cdot|X_k,P_k)$ retained.

The approximation replaces $\Omega_k$ in \eqref{eq: joint likeasoc example} by 
$$e^{-p_k^{e \prime}p_k^d\Lambda_k} (p_k^{e \prime}p_k^d\Lambda_k)^{\sum_{j=1}^{M_n} \delta[\theta_{j}=k]}.$$
Comparing this expression with the original form of $\Omega_k$ in \eqref{eq: joint likeasoc example}, we see that this corresponds to replacing the $k$-th NHPP with rate $D_k\Lambda_k$ by a new NHPP with rate $p_k^{e \prime}p_k^d\Lambda_k$. Since this substitution does not alter the spatial distribution encoded in the likelihood term $p(Y|\theta,P,X)$ in \eqref{eq: joint likeasoc example}, this approximating NHPP inherits the same spatial distribution $\ell(\cdot|X_k,P_k)$. Hence, the intensity function of the approximating NHPP is: $p_k^{e \prime}p_k^d\Lambda_k\ell(\cdot|X_k,P_k)$.

Consequently, replacing $\Xi_k(\Lambda_k,\theta)$ in \eqref{eq: original expectation} with the approximation in \eqref{eq: final approx for Xi} is equivalent to replacing the $D_k$-marginalised point process (i.e. the $k$-th component superposed into the overall measurement process) with an NHPP of intensity $p_k^{e \prime}p_k^d\Lambda_k\ell(\cdot|X_k,P_k)$.

\subsubsection{Justification as KLD minimisation}
\label{apx: KLD minimisation}
We justify the employed NHPP approximation by showing that it arises naturally as the solution to a KLD minimisation problem. Specifically, we consider the following result:
\begin{lemma} \label{lemma: KLD minimisation}
    Let $q(Z,m)$ denote the likelihood density of an NHPP. Among all NHPP likelihoods, the $q(Z,m)$ that minimises the KLD from the $D_k$-marginalised point process, i.e. 
    \begin{align} \label{eq: KLD point process}
        \textup{KL}(p_k(Z,m|X_k,\Lambda_k,P_k)||q(Z,m)),
    \end{align}
    where $p_k(Z,m|X_k,\Lambda_k,P_k)$ is defined in \eqref{eq: margi k NHPP}, is given by
    \begin{align} \label{eq: KLD minimisation result}
        q(Z,m)=\textup{Pois}(m;\Lambda_k\E_{\hat{p}(\!D_k\!)}D_k)\prod\nolimits_{j=1}^m \!\ell(Z_j|X_k,P_k).
    \end{align}
\end{lemma}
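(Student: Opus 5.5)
The plan is to parametrise a generic NHPP likelihood on the space of finite point sets as $q(Z,m)=\textup{Pois}(m;\lambda)\prod_{j=1}^m f(Z_j)$, where $\lambda>0$ is a rate and $f$ a probability density on the measurement space. We then minimise \eqref{eq: KLD point process} jointly over $(\lambda,f)$. Since both $p_k$ in \eqref{eq: margi k NHPP} and $q$ share the same product structure over points given $m$, the KLD separates. Writing $\textup{KL}(p_k\|q)=\E_{p_k}[\log p_k(Z,m)]-\E_{p_k}[\log q(Z,m)]$, only the cross term depends on $(\lambda,f)$. We expand it as
\begin{align*}
-\E_{p_k}\log q(Z,m)=-\E_{p_k(m)}\log \textup{Pois}(m;\lambda)-\E_{p_k(m)}[m]\,\E_{\ell(\cdot|X_k,P_k)}\log f(Z_1).
\end{align*}
This uses the fact that under $p_k$, given $m$, the points are i.i.d.\ with density $\ell(\cdot|X_k,P_k)$. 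The symmetrisation constant $m!$ (or its absence) appears identically in both densities and so cancels.

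The two pieces are then minimised independently. For the spatial part, provided $\E_{p_k}[m]=p_k^{e\prime}p_k^d\Lambda_k>0$, minimising $-\E_{\ell}\log f$ over densities $f$ is equivalent to minimising $\textup{KL}(\ell\|f)$ up to the entropy of $\ell$. By Gibbs' inequality the minimiser is $f=\ell(\cdot|X_k,P_k)$. For the cardinality part, $-\E_{p_k(m)}\log\textup{Pois}(m;\lambda)=\lambda-\E_{p_k(m)}[m]\log\lambda+\E\log m!$, which is strictly convex in $\lambda$. Setting the derivative to zero gives $\lambda=\E_{p_k(m)}[m]$.

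It remains to evaluate this mean. From \eqref{eq: margi k NHPP}, $p_k(m|\Lambda_k)$ is a mixture of Poisson laws with rates $D_k\Lambda_k$, so by linearity $\E[m]=\E_{\hat p(D_k)}[D_k\Lambda_k]=\Lambda_k\E_{\hat p(D_k)}D_k$. Combining the two optimal pieces yields \eqref{eq: KLD minimisation result}.

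The main subtlety, rather than a real obstacle, is justifying the decomposition of the expectation over the mixture process. Under $p_k$, the event $D_k=0$ produces $m=0$ almost surely. The spatial term therefore contributes only through $m\geq1$, and the weight $\E[m]$ correctly accounts for this. A secondary point is the degenerate case $p_k^{e\prime}p_k^d=0$, where $f$ is unidentifiable but $\lambda=0$. This case is harmless and is excluded by the paper's assumptions $p^d_k\in(0,1]$ and $p_k^{e\prime}>0$. Finiteness of the KLD requires the support of $f$ to cover that of $\ell$, which the optimal choice satisfies.
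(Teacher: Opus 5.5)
Your proposal is correct and follows essentially the same route as the paper. Both arguments parametrise the NHPP as $\textup{Pois}(m;\lambda)\prod_j q_\ell(Z_j)$, split the divergence into a cardinality term and an $\E[m]$-weighted spatial term, take the spatial density to be $\ell(\cdot|X_k,P_k)$, and match the Poisson rate to $\E_{p_k}[m]=\Lambda_k\E_{\hat{p}(D_k)}D_k$. The differences are cosmetic: you minimise the cross-entropy using Gibbs' inequality and direct differentiation in $\lambda$, whereas the paper uses the KL chain rule and exponential-family moment matching.
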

\begin{proof}
    As a general form, the likelihood of an NHPP can be expressed as
    \begin{align} \label{eq: NHPP likelihood form}
        q(Z,m)=\text{Pois}(m;\lambda)\prod\nolimits_{j=1}^m q_\ell (Z_j),
    \end{align}
    where $\lambda>0$ is the rate, and $q_\ell (Z_j)$ is the spatial distribution. Subsequently, the KLD $\text{KL}(p_k||q)$ in \eqref{eq: KLD point process} can be expressed as 
    \begin{align} \label{eq: KL factorisation}
        \text{KL}(p_k||q)=& \ \text{KL}(p_k(m|\Lambda_k)||\text{Pois}(m;\lambda) )\\ \notag
        &+\E_{p_k(m|\Lambda_k)}\sum\nolimits_{j=1}^m\text{KL}(\ell(Z_j|X_k,P_k)|| q_\ell (Z_j) ).
    \end{align}
The second term is minimised to zero by setting $q_\ell (\cdot)=\ell(\cdot|X_k,P_k)$, which does not affect the first term. Therefore, the optimal spatial distribution is $q_\ell (Z_j)=\ell(Z_j|X_k,P_k)$. 

The remaining KLD minimisation task is to find the $\lambda$ that minimises $\text{KL}(p_k(m|\Lambda_k)||\text{Pois}(m;\Lambda) )$. Since the Poisson distribution belongs to the exponential family, its KLD is minimised by matching the expectation of its sufficient statistic, which is $m$. That is, the optimal $\Lambda$ satisfies
$$\E_{p_k(m|\Lambda_k)}T(m)= \E_{\text{Pois}(m;\Lambda)}T(m)=\lambda.$$
Using the definition of $p_k(m|\Lambda_k)$ in \eqref{eq: margi k NHPP}, we compute
\begin{align} \notag
    \lambda= \sum\nolimits_m \E_{\hat{p}(D_k)} \text{Pois}(m;D_k\Lambda_k) m = \Lambda_k\E_{\hat{p}(\!D_k\!)}D_k.
\end{align}
Substituting this $\lambda$ and $q_\ell (Z_j)=\ell(Z_j|X_k,P_k)$ into \eqref{eq: NHPP likelihood form} yields the \eqref{eq: KLD minimisation result} stated in the lemma, completing the proof.
\end{proof}

Using Lemma \ref{lemma: KLD minimisation} and the definition of $\hat{p}(D_k)$ in \eqref{eq: predictive Dk}, we find that the NHPP that minimises the KLD in \eqref{eq: KLD point process} from the $D_k$-marginalised point process has intensity $p_k^{e \prime}p_k^d\Lambda_k\ell(\cdot|X_k,P_k)$, which exactly matches the employed approximate NHPP. The resulting approximation in \eqref{eq: final approx for Xi} for $\Xi_k(\Lambda_k,\theta)$ is hence justified.

\section{Approximation Accuracy Analysis and Refinement for Isolated Objects}
\label{apx: justification of approx for far away}
This appendix provides justification for why the approximation introduced in \eqref{eq: final approx for Xi} has minimal impact on the inference accuracy of an object $k$ that is well-separated from others, and also explains how to remove this approximation error through an additional refinement step in Stage 2. 

\subsubsection{Justification for well-separated object $k$}
Specifically, we show that the approximation error does not affect the Stage 2 inference for object $k$ where $q_1(X,\Lambda,P)$ is fixed. The argument is to show that, under the same gating reduction that is safe for a sufficiently isolated object, the Stage~2 update without the approximation in \eqref{eq: final approx for Xi} and the employed Stage~2 update with the approximation in \eqref{eq: final approx for Xi} yield the same expression for $g(D_k)$.

It is well known that for an object $k$ sufficiently isolated from others, gating techniques can be employed to restrict consideration to only a subset of measurements that lie within a sufficiently large gate around this object. Denote the set of labels for these measurements as $\mathcal{G}_k$. In this scenario, associations can be safely assumed to involve either object $k$ or clutter, as the remaining objects are too distant to contribute meaningfully. The Stage 2 updates derived in \eqref{eq: existance update} can then be applied for this single-object case, and the optimal $\log q_2^*(E_k,D_k)$ is given by \eqref{eq: existance update}, where $g(D_k)$ takes the form:
\begin{align} \label{gDk exact gating}
    g(D_k)\!=\!-\eta_k\rho_kD_k\!+\!\sum\nolimits_{j\in\mathcal{G}_k}\log\! \Big(\!D_kS_j^k\!+\!\Lambda_0\ell_0(Y_j)\!\Big).
\end{align}
Note that this expression involves no approximation via $\Xi_k(\Lambda_k,\theta)$ from \eqref{eq: final approx for Xi}, as only object $k$ is considered and the $\prod\nolimits_{k-}\Xi_k(\Lambda_k,\theta)$ in \eqref{eq: partial marginal assoc}, to which the approximation applies, is absent.

For comparison, consider the derived expression for $\log q_2^*(E_k,D_k)$ in the multi-object case with $K_n$ objects, where the approximation in \eqref{eq: final approx for Xi} is applied. The corresponding $g(D_k)$ becomes:
\begin{align} \notag
    g(D_k)\!=\!-\eta_k\rho_kD_k\!+\!\sum_{j=1}^M\log\! \Big(\!D_kS_j^k\!+\!\Lambda_0\ell_0(Y_j)\!+\!\!\sum_{\substack{s=1 \\ s \neq k}}^{K_n} p_s^{e \prime}p_s^d S_j^s\Big).\\[-2.0em]\notag
\end{align}
For $j\notin \mathcal{G}_k$, the log-summands can be ignored as constants independent of $D_k$ since $S_j^k \approx 0$. This is due to the very large value of $(Y_j\!-\!H\mu_k)^{\!\top} \!\phi_k\Phi_k (Y_j\!-\!H\mu_k)$ in the definition of $S_j^k$ in \eqref{eq: association update}, as measurements outside the gate are far from object $k$.
% This is due to the large $(Y_j\!-\!H\mu_k)^{\!\top} \!\phi_k\Phi_k (Y_j\!-\!H\mu_k)$ in the definition of $S_j^k$ in \eqref{eq: association update}. 
Similarly, for $j\in \mathcal{G}_k$, $S_j^s \approx 0$ for all $s \neq k$, since those objects $s$ are far from the measurements within the gate. Consequently, $\sum_{s\neq k} p_s^{e \prime}p_s^d S_j^s$ can be neglected. Neglecting these terms, the resulting expression for $g(D_k)$ under the approximation coincides with the exact form in \eqref{gDk exact gating}, showing that the approximation in \eqref{eq: final approx for Xi} does not impact the Stage 2 inference for object $k$.

Hence, the remaining influence of the approximation comes from the $q_1(X,\Lambda,P)$ computed from the Stage 1 inference.

\subsubsection{Approximation mitigation and Stage 2 refinement} \label{apx: refinement}
The approximation error from \eqref{eq: final approx for Xi} introduced in Stage 1 for $q_1(X,\Lambda,P)$ can be mitigated or removed by extending the Stage 2 objective to include a refinement step. Specifically, we may incorporate $X_k,\Lambda_k,P_k$ in $q_2$ and exclude them from the fixed $q_1$. In doing so, we continue to approximate the same target distribution $\hat{p}$ as in \eqref{eq: stage 2 marginal posterior}, but with a refined and richer variational form:
\begin{align}\notag
    q_2(E_k,D_k)&q_2(\theta|E_k,D_k)q_2(X_k)q_2(\Lambda_k)q_2(P_k)\\ \notag
    &\times q_1(X_{k-})q_1(\Lambda_{k-})q_1(P_{k-}).
\end{align}
In this refined Stage~2 objective, the detectability variable $D_k$ of the object being evaluated is retained explicitly. Hence, unlike in Stage~1 where the approximation in \eqref{eq: final approx for Xi} is applied to every object, this approximation is not applied to the object-specific contribution of $k$. The approximation can only enter through the marginalised components of the other objects. For a sufficiently well-separated object, these other-object contributions are negligible within the local gate of object $k$, following the same reasoning as above. Thus, the refinement can reduce, and in the isolated gated case remove, the residual impact of the Stage~1 approximation on the inferred object features.

Importantly, this refinement procedure is flexible. One may choose to refine only a subset of variables or a subset of objects, rather than updating all of $X_k$, $\Lambda_k$, and $P_k$ for every $k$. Furthermore, structured dependencies can be incorporated, such as $q_2(X_k | E_k)$, where $q_2(X_k | E_k = 1)$ can approximate a more accurate conditional posterior $\hat{p}(X_k \mid E_k = 1, Y)$, as advocated in works such as \cite{lau2016structured,meyer2021scalable}. Nevertheless, all such refinement steps require additional computation and iterative updates, and are therefore omitted in this paper.

\section{Derivation of efficient evaluation of $q_2^*(E_k,D_k)$} \label{apx: Stage2 derivation}
Here we derive the optimal $q_2^*(E_k,D_k)$ in \eqref{eq: existance update} and \eqref{eq: simplified g}, which enables an efficient Stage 2 implementation.

First, observing from \eqref{eq: conditional association}, the normalised $q_2^*(\theta|E_k,D_k)$ is
\vspace{-0.1em}
\begin{align} \notag
    \frac{\exp(\E_{q_1(X,\Lambda,P)}\log \bar{p}(X, \Lambda, P, \theta , E_k, D_k,  Y))}{\sum_{\theta} \exp(\E_{q_1(X,\Lambda,P)}\log \bar{p}(X, \Lambda, P, \theta , E_k, D_k,  Y)) }.\\[-1.5em]\notag
\end{align}
Substituting this into the $\log q_2^*(\theta| E_k,D_k)$ term in \eqref{eq: direct conditional update}, the last line of \eqref{eq: direct conditional update} cancels, yielding $\log q_2^*(E_k,D_k)$ up to an additive constant as
\vspace{-0.1em}
\begin{align} \notag
    &\log \textstyle\sum_{\theta} \exp(\E_{q_1(X,\Lambda,P)}\log \bar{p}(X, \Lambda, P, \theta , E_k, D_k,  Y)).
\end{align}

Next, substituting \eqref{eq: stage 2 unnormalised}, \eqref{eq: individual likelihood}, and discarding terms constant with respect to $E_k,D_k$, the optimal $\log q_2^*(E_k,D_k)$ is
\begin{align}\notag
    &\log q_2^*(E_k,D_k)\eqc\log\hat{p}(E_k)p(D_k|E_k)\\ \notag
    & \quad \ + \log \textstyle\sum_{\theta} \exp(\E_{q_1(X,\Lambda,P)}\log p(Y|\theta,P,X)\bar{p}(\theta,M|\Lambda,D_k) )    \\ \notag
    &= \log \hat{p}(E_k)p(D_k|E_k) \!+\! \log \sum_{\theta}\! \Big[ \!\exp(\E_{q_1(\Lambda)} \!\log \bar{p}(\theta,M|\Lambda,D_k))\\[-0.5em] \notag
    &\qquad\qquad\qquad\quad \times\prod\nolimits_{j=1}^M \exp\big(\E_{q_1(X,P)}\log p(Y_j|\theta_j,P,X)\big)\Big].
    \\ \notag
    & \eqc \log \hat{p}(E_k)p(D_k|E_k)\! - \! D_k\E_{q_1(\Lambda)}\Lambda_k \!+\! \log \sum\nolimits_{\theta}\!\!\Big[\! \prod\nolimits_{j=1}^M  \\ \notag
    & \hspace{1em}\big[\!\exp\E_{q_1(\Lambda)}\log\big(\sum\nolimits_{\substack{s=1 \\ s \neq k}}^{K_n}p_s^{e \prime}p_s^d \Lambda_s \delta[\theta_j=s]
    +\! \Lambda_kD_k\delta[\theta_j=k]\\ \notag
    &\hspace{1.5em}+\Lambda_0\delta[\theta_j=0]\big)\big]\exp\big(\E_{q_1(X,P)}\log p(Y_j|\theta_j,P,X)\big)\Big],
\end{align}
where $\eqc$ denotes equality up to additive constant terms with respect to $E_k$ and $D_k$. The last line follows from \eqref{eq: stage 2 marginal assoc}, \eqref{eq: association prior form 2} and the identity $\exp\E\log(\prod_{j}a_j)=\prod_{j} \exp\E\log a_j$. Furthermore, since the delta indicators select exactly one intensity term for each fixed $\theta_{n,j}$, we have $f(\sum_{k=0}^{K_n}b_k\delta[\theta_{n,j}\!=\!k])=\sum_{k=0}^{K_n} f(b_k)\delta[\theta_{n,j}\!=\!k]$ for $f$ equal to $\exp$ or $\log$. Therefore, we arrive at
\begin{align} \notag 
    &\log q_2^*(E_k,D_k)\eqc \log \hat{p}(E_k)p(D_k|E_k) -D_k\E_{q_1(\Lambda)}\Lambda_k\\ \notag
    & \ + \log \prod\nolimits_{j=1}^M \sum\nolimits_{\theta_j=0}^{K_n}\Big[\exp(\E_{q_1(X,P)}\log p(Y_j|\theta_j,P,X)) \\ \notag
    &\times \!\big(\Lambda_0\delta[\theta_j=0]\!+\! \tilde{\Lambda}_kD_k\delta[\theta_j=k]\!+\!\sum\nolimits_{\substack{s=1 \\ s \neq k}}^{K_n}p_s^{e \prime}p_s^d \tilde{\Lambda}_s \delta[\theta_j=s]\big)\!\Big]
\end{align}
where $\tilde{\Lambda}_k:=\exp(
\E_{q_1(\Lambda_k)}\log\Lambda_k)$ for all $k=1,...,K_n$. Finally, performing the summation over $\theta_j$, substituting $p(Y_j|\theta_j,P,X)$ in \eqref{eq: individual likelihood} and the expectation $\E_{q_1(\Lambda)}\Lambda_k$ using \eqref{eq: all updates form}, we obtain the final expression for the optimal $q_2^*(E_k,D_k)$:

\vspace{-0.5em}
\begin{align} \notag
    \log q_2^*(E_k,D_k)\eqc\log\hat{p}(E_k)p(D_k|E_k)+g(D_k),\\[-2.0em]\notag
\end{align}
\begin{align} \notag
    g(D_k)\!=\!-\eta_k\rho_kD_k\!+\!\sum_{j=1}^M\log\! \Big(\!D_kS_j^k\!+\!\Lambda_0\ell_0(Y_j)\!+\!\!\sum_{\substack{s=1 \\ s \neq k}}^{K_n} p_s^{e \prime}p_s^d S_j^s\Big)\\[-2.0em]\notag
\end{align}
where $S_j^k$ is defined in \eqref{eq: association update}. This yields the $\log q_2^*(E_k,D_k)$ expression in \eqref{eq: existance update}. Since each logarithm differs by only one summand for different $k$, we simplify $g(D_k)$ using a shared total sum:
\vspace{-0.5em}
\begin{align} \notag
g(D_k)&=\!-\eta_k\rho_kD_k\!+\!\sum_{j=1}^M\log\! \Big(\!S_j^\text{sum}+(D_k-p_k^{e \prime}p_k^d)S_j^k\Big)\\[-2.9em]\notag
\end{align}
\begin{align}\notag
S_j^\text{sum}&=\Lambda_0\ell_0(Y_j)+\sum_{k=1}^{K_n} p_k^{e \prime}p_k^d S_j^k.\\[-2.0em]\notag
\end{align}
This matches \eqref{eq: simplified g} and completes the derivation.

\section{Numerically stable computation of $g(D_k)$} \label{apx: numerical stable gD}
While \eqref{eq: simplified g} and \eqref{eq: Sj sum} offer an efficient way to compute $g(D_k)$, they may encounter numerical issues in practice.
This appendix presents a numerically stable yet efficient evaluation of $g(D_k\!=\!1)$ and $g(D_k\!=\!0)$ that mitigates overflow/underflow, and quantifies its computational cost. For all $j \!=\! 1,...,M$ and $k \!=\! 1,...,K_n$, the following quantities are assumed to be precomputed and stored with sufficient precision. These values are also required in the log-sum-exp computation of $S_j^\text{sum}$ in \eqref{eq: Sj sum}, and thus should already be available:
\begin{itemize}
    \item Log-domain terms $\log S_j^{\text{sum}} $, along with $\log S_j^k$, 
    \begin{align} \notag
        h_j^k=\log (p_k^{e \prime}p_k^dS_j^k)
    \end{align} for $k=1,...,K_n$, and $h_j^0=\log(\Lambda_0\ell_0(Y_j))$.

    \item The maximum value $m_j=\max_{k=0}^{K_n} h_j^k$ and its index $I_j\in$ $\{0,1,...,K_n\}$ such that $m_j = h_j^{I_j}$.
    
    \item The stabilised terms $\exp(h_j^k - m_j)$, and their sum $U_j^{\text{sum}} = \sum_{k=0}^{K_n} \exp(h_j^k - m_j)$.
\end{itemize}

\subsubsection{$g(D_k = 1)$}
For $D_k = 1$, the two summands inside each logarithm in \eqref{eq: simplified g} are positive, so we can apply the log-sum-exp trick to obtain: 
\begin{align}\notag
    &g(D_k\!=\!1)\!=\!-\eta_k\rho_k\!+\!\! \sum_{j=1}^{M} \!\max \{a_j, b_j^k \} \!+\! \log\!\big(1\!+\!\exp(\!-|a_j\!-\!b_j^k |)\!\big)\\ \label{eq: gD1 computation}
    &\qquad  a_j= \log S_j^\text{sum}, \qquad  b_j^k = \log(1\!-\!p_k^{e \prime}p_k^d)\!+\!\log S_j^k.
\end{align}
The quantities $a_j$ and $b_j^k$ are either precomputed or can be efficiently evaluated from stored quantities. The main computational cost lies in evaluating $\log(1+\exp(-|a_j-b_j^k |))$ for all $j$ and $k=1,...,K_n$, which requires $K_nM$ exponential and logarithmic operations in total to compute all $g(D_k=1)$.

\subsubsection{$g(D_k = 0)$} 
For $D_k = 0$ the computation is slightly more complex. From \eqref{eq: simplified g} we have
\begin{align} \label{eq: gD0 computation}
\begin{aligned}
g(D_k\!=\!0)=\!\sum\nolimits_{j=1}^M \!g_j^k, \ \quad g_j^k=\log(S_j^\text{sum}\!-p_k^{e \prime}p_k^dS_j^k).
\end{aligned}
\end{align}
To evaluate $g_j^k$ numerically stably for all $k = 1,\dots,K_n$ and $j = 1,\dots,M$, thereby ensuring sufficient precision for all $g(D_k=0)$, we consider two scenarios:
\paragraph{For $k \neq I_j$}
We use
\begin{align}\label{eq: numerical stable g0 k neq Ij}
    g_j^k=m_j+\log(U_j^{\text{sum}}-\exp(h_j^k - m_j)),
\end{align}
where $m_j,U_j^{\text{sum}},\exp(h_j^k - m_j)$ are all precomputed and stored. Since $k\neq I_j$, the largest stabilised term remains in the sum after subtraction, so $U_j^{\text{sum}}-\exp(h_j^k - m_j)\in[1,K_n]$. This keeps the logarithm numerically stable.
\paragraph{For $k= I_j$} The subtraction in the expression above may be close to zero, risking underflow before the logarithm is taken. In this case, we apply the log-sum-exp trick using the second-highest value $c_j$ from $\{h_j^0,h_j^1,...,h_j^K\}$: 
\begin{align} \label{eq: numerical stable g0 k = Ij}
    g_j^k=&c_j+\log\Big( \exp(h_j^0-c_j)+\sum\nolimits_{\substack{s=1 \\ s \neq k}}^{K_n} \exp(h_j^s-c_j)\Big).
\end{align}
Since $c_j$ is the largest value after excluding $h_j^k$, the largest exponentiated term in the sum is $1$, while all remaining terms lie in $(0,1]$. This gives a stable log-sum-exp evaluation of $g_j^k$.

\subsubsection{Cost analysis} We now assess the cost in terms of exponential and logarithmic operations. Computing all $g(D_k = 0)$ for $k=1,...,K_n$ requires evaluating all $g_j^k$ via either \eqref{eq: numerical stable g0 k neq Ij} or \eqref{eq: numerical stable g0 k = Ij}. Specifically: 1) Each use of \eqref{eq: numerical stable g0 k neq Ij} involves one exponential and one logarithm, and is applied up to $K_n M$ times. 2) Each use of \eqref{eq: numerical stable g0 k = Ij} involves $K_n$ exponentials and one logarithm, and is applied at most $M$ times. Hence, the total number of exponentials and logarithms required is bounded by $2 K_n M$.

Recall that computing all $g(D_k=1)$ for $k=1,...,K_n$ via \eqref{eq: gD1 computation} mainly requires $K_nM$ exponential and logarithmic operations in total. The numerically stable evaluation of all $g(D_k)$ therefore retains $\mathcal{O}(K_n M)$ complexity. In practice, the total number of exponential/logarithmic evaluations is bounded by $3 K_n M$, making it comparable to performing two to three updates of $q_1(\theta)$ in \eqref{eq: association update}. This remains substantially more efficient than the direct computation via \eqref{eq: direct conditional update}, which requires at least $\mathcal{O}(K_n^2 M)$ operations.

\section{Additional implementation details} \label{apx: implem details}
This appendix supplements Section~\ref{sec: Algorithm structure} with additional implementation details on variational initialisation and moving-object selection.
\subsubsection{Variational initialisation and closely spaced birth pruning} \label{sec: CAVI initialisation}
Points~(i)--(iii) of Section~\ref{sec: initialisation, clustering demo} suggest using a flat factorised birth prior, initialising newly born components over distinct small positional regions, and tiling these regions over the surveillance area. Building on this birth initialisation, we recommend first running two to three CAVI iterations for existing objects $k=1,\ldots,K_{n-1}$ with births disabled, so that existing tracks are prioritised in associating with measurements. The resulting $q_1(X_{n,k})$ for existing objects is then combined with the birth initialisation above to start the subsequent CAVI inference with new births included. This reduces the risk that newborn objects explain measurements belonging to existing tracks.

In addition, when updating $q_1(\theta_n)$ in the first CAVI iteration via \eqref{eq: association update}, we recommend using a clutter likelihood $\ell_0(Y_{n,j})$ lower than the expected true level. This biases the early association update towards explaining measurements by objects, which helps $q_1(X_{n,k})$ localise within its prescribed region.

After CAVI converges, multiple variational components may represent a single physical object. This is most likely under non elliptical measurement patterns or gradual target entry, where an existing component explains the already observed portion while a birth component captures the newly observed part. Such duplication inflates the estimated object count. A simple remedy is to prune newly born components whose converged positional means lie very close to other objects, and then rerun CAVI to restore convergence.

\subsubsection{Post-processing details for Doppler PiVoT moving object selection} \label{apx: doppler post processing}

This appendix details the post-processing stage for moving-object selection used in the Doppler PiVoT automotive radar experiments. The procedure is applied after inference, operates only on inferred track statistics, and does not modify the variational updates. Its purpose is to extract a reliable subset of moving road users from the broader set of scene elements tracked by PiVoT.

Tracks are first nominated as moving candidates using minimum speed gates and velocity Mahalanobis distance over short time windows, together with basic directional and trajectory smoothness checks. This stage removes stationary tracks and artefacts induced by flickering clutter and ego motion. An example of surviving candidates are shown in Fig. \ref{fig: Dop post processing}(b).

Second, candidates are examined for Doppler informativeness. Tracks whose associated Doppler velocities are largely perpendicular to the estimated motion direction, or have weak projected components along that direction, are treated as Doppler uninformative and flagged as unreliable. This suppresses objects for which stationary and moving hypotheses are unidentifiable under the adopted model. For example, the orange ellipses in Fig. \ref{fig: Dop post processing}(b) correspond to stationary road barrier segments for which the radar line of sight is nearly orthogonal to the ego motion. These segments consistently appear at the side of the ego vehicle with near-zero Doppler and can therefore be misinterpreted as moving consistently with the ego vehicle, rather than being stationary. Such tracks are rejected at this stage unless supported by smooth trajectories, kinematic consistency, and evidence of independent motion.

Third, a concrete object shape constraint is enforced. Objects are rejected if (i) estimated shapes are implausible for road participants (e.g. are excessively large or have unrealistic aspect ratios), or (ii) their associated detections fail to dominate the measurements within the estimated 95 percent shape ellipse (e.g. excessive stationary measurements inside). Tracks removed at this stage are shown as dashed ellipses in Fig. \ref{fig: Dop post processing}(b).

Finally, a ghost suppression step removes objects arising from multipath reflections, leveraging PiVoT’s ability to track stationary structures. Potential ghost and true target pairs are first identified based on symmetric motion directions about the perpendicular bisector of their positions, with ghosts characterised by longer sensor range, shorter track duration, and fewer associated measurements. These candidates are then validated by checking whether the inferred reflection points lie within stationary objects. Confirmed ghost tracks are removed. Tracks rejected by this step are shown as cyan ellipses in Fig. \ref{fig: Dop post processing}(b), whose reflection points fall within the elongated stationary ellipses corresponding to road barriers in Fig. \ref{fig: Dop post processing}(a).

Only tracks that satisfy all consistency checks are reported as moving objects (the remaining red ellipses in Fig. \ref{fig: Dop post processing} (b)). All criteria and window lengths are fixed across experiments. This post-processing stage is computationally negligible compared to inference and serves solely to improve the reliability and interpretability of the reported outputs.

\section{Theoretical foundations and proof of Theorem~\ref{main theorem}} \label{sec: theoretical lemma and proof}
This appendix presents the proof of Theorem~\ref{main theorem}, supported by auxiliary lemmas analysing the CAVI updates.
\subsubsection{Preliminary algebraic lemmas}
We begin by presenting algebraic lemmas that will be used in the subsequent analysis.
\begin{lemma} \label{lemma sugar water ieq}
    Let $0<a\leq\alpha$, and $0<x\leq 1$. Then,
    \begin{align}
        1-\frac{1}{1+ax}\leq \Big(1-\frac{1}{1+a}\Big)\frac{\alpha+1}{\alpha+1/x}.
    \end{align}
\end{lemma}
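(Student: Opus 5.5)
Simplify the left-hand side: $1-\frac{1}{1+ax}=\frac{ax}{1+ax}$. Likewise $1-\frac{1}{1+a}=\frac{a}{1+a}$, and $\frac{\alpha+1}{\alpha+1/x}=\frac{x(\alpha+1)}{\alpha x+1}$. Dividing both sides by $ax>0$, the claim becomes
\begin{align}
\notag
\frac{1}{1+ax}\leq \frac{1+\alpha}{(1+a)(1+\alpha x)}.
\end{align}
All denominators are positive, so we can cross-multiply. The inequality is then equivalent to
\begin{align}
\notag
(1+a)(1+\alpha x)\leq (1+\alpha)(1+ax).
\end{align}

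Expand both sides. The left is $1+a+\alpha x+a\alpha x$ and the right is $1+ax+\alpha+a\alpha x$. The constant and $a\alpha x$ terms cancel, leaving
\begin{align}
\notag
a+\alpha x\leq \alpha+ax.
\end{align}
This is equivalent to $0\leq(\alpha-a)(1-x)$. It holds because $a\leq\alpha$ and $x\leq 1$, which completes the argument.

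There is no real obstacle here. The only care needed is to check that every quantity we divide by or cross-multiply with is positive, which follows from $a,\alpha,x>0$. Equality holds exactly when $a=\alpha$ or $x=1$, consistent with the non-strict inequality. The role of the lemma later is presumably to control how the association mass $1-1/(1+\cdot)$ scales when the argument is shrunk by a factor $x$, giving the contraction ratio used in Theorem~\ref{main theorem}.
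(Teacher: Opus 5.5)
Your proof is correct and follows the paper's argument: your reduced inequality, equivalently $\frac{1+a}{1+ax}\le\frac{1+\alpha}{1+\alpha x}$, becomes after multiplying by $x$ exactly the paper's key step $\frac{a+1}{a+1/x}\le\frac{\alpha+1}{\alpha+1/x}$, which is then combined with the identity $\frac{a}{1+a}\cdot\frac{a+1}{a+1/x}=\frac{ax}{1+ax}$. The only difference is how that step is justified: the paper invokes the ``sugar water'' principle (adding $\alpha-a\ge 0$ to numerator and denominator of a fraction not exceeding one cannot decrease it), while you check it directly by cross-multiplying to $(\alpha-a)(1-x)\ge 0$.
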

\begin{proof}
Since $a+1\leq a+1/x$, adding $\alpha-a\geq 0$ to both the numerator and denominator of $\frac{a+1}{a+1/x}$ does not decrease the ratio; hence, $\frac{a+1}{a+1/x}\leq\frac{\alpha+1}{\alpha+1/x}$. Therefore, we have
$$ \Big(1-\frac{1}{1+a}\Big)\frac{\alpha+1}{\alpha+1/x} \geq \Big(1-\frac{1}{1+a}\Big)\frac{a+1}{a+1/x} = 1-\frac{1}{1+ax}, $$
hence completing the proof.
\end{proof}

\begin{lemma} \label{lemma: lambert W}
    Let $p,q,x>0$, and let $W_{-1}:[-\frac{1}{e},0)\rightarrow(-\infty,-1]$, $W_{0}:[-\frac{1}{e},\infty)\rightarrow[-1,\infty)$ denote the $-1$ and $0$ branches of the Lambert W function, respectively. Then,
    \begin{align} \label{eq: ieq in lemma}
        x> p\big[1-\frac{1}{1+q\exp(\frac{1}{p}-\frac{1}{x})}\big]
    \end{align}
    if and only if one of the following conditions holds: 1) $pq < e$, or 2) $pq \geq e$ and either $x<\big[\frac{1}{p}-W_{-1}(-\frac{1}{pq}\big)\big]^{-1}$ or $x>\big[\frac{1}{p}-W_{0}(-\frac{1}{pq}\big)\big]^{-1}$.
\end{lemma}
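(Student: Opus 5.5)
The plan is to reduce \eqref{eq: ieq in lemma} to a scalar inequality of the form $u e^{-u} < \text{const}$, which the Lambert W branches solve in closed form. Write $E := \exp(\frac{1}{p}-\frac{1}{x})>0$. Since $1-\frac{1}{1+qE} = \frac{qE}{1+qE}$ and $1+qE>0$, inequality \eqref{eq: ieq in lemma} is equivalent to
\begin{align}
x(1+qE) > pqE, \quad\text{i.e.}\quad x > qE\,(p-x). \notag
\end{align}
I will treat the case $x\geq p$ first. There the right-hand side is $\leq 0<x$, so the inequality always holds. I will check at the end that this case is captured by the condition $x>[\frac1p - W_0(-\frac{1}{pq})]^{-1}$ when $pq\geq e$, and is trivially covered when $pq<e$.

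For $x<p$, I will substitute $u := \frac1x - \frac1p > 0$, so that $E=e^{-u}$. Since $\frac{x}{p-x} = \frac{1}{p/x-1} = \frac{1}{pu}$, dividing by $p-x>0$ turns the inequality into $\frac{1}{pu} > q e^{-u}$, that is,
\begin{align}
u e^{-u} < \frac{1}{pq}. \notag
\end{align}
The function $f(u)=ue^{-u}$ on $(0,\infty)$ increases on $(0,1]$, decreases on $[1,\infty)$, attains its maximum $1/e$ at $u=1$, and tends to $0$ at both ends. If $pq<e$, then $\frac{1}{pq}>\frac1e\geq f(u)$, so the inequality holds for every $u>0$. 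Combined with the case $x\geq p$, this gives condition 1).

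If $pq\geq e$, the equation $ue^{-u}=\frac{1}{pq}$ is equivalent to $(-u)e^{-u} = -\frac{1}{pq}\in[-\frac1e,0)$. Its roots are $u_0=-W_0(-\frac{1}{pq})\in(0,1]$ and $u_{-1}=-W_{-1}(-\frac{1}{pq})\in[1,\infty)$. By the unimodality of $f$, the strict inequality $f(u)<\frac{1}{pq}$ holds exactly when $u<u_0$ or $u>u_{-1}$; in the degenerate case $pq=e$ both roots equal $1$ and only $u=1$ fails. Translating back through $u=\frac1x-\frac1p$:
\begin{itemize}
\item $u>u_{-1}$ is equivalent to $x<[\frac1p - W_{-1}(-\frac{1}{pq})]^{-1}$;
\item $u<u_0$ is equivalent to $x>[\frac1p - W_0(-\frac{1}{pq})]^{-1}$.
\end{itemize}
Both bracketed quantities are positive, since $W_{-1},W_0<0$, so the reciprocals are well defined.

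The main point needing care is the merging of the two regimes. Because $W_0(-\frac{1}{pq})\in[-1,0)$, we have $\frac1p - W_0(-\frac{1}{pq}) > \frac1p$, hence $[\frac1p - W_0(-\frac{1}{pq})]^{-1}<p$. Therefore every $x\geq p$ satisfies the second condition, and the set $\{x\geq p\}$ together with $\{x<p,\ u<u_0\}$ is exactly $\{x>[\frac1p-W_0(-\frac{1}{pq})]^{-1}\}$. Likewise, $[\frac1p-W_{-1}(-\frac{1}{pq})]^{-1}<p$, so the first condition only involves $x<p$, consistent with the substitution. This yields condition 2) and completes the equivalence.
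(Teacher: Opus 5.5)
Your proposal is correct and follows essentially the paper's route: the paper substitutes $t=\frac1p-\frac1x$ (your $-u$), reduces the inequality to $te^t>-\frac{1}{pq}$ (equivalent to your $ue^{-u}<\frac{1}{pq}$), and reads off the solution set from the $W_{-1}$ and $W_0$ branches. The only difference is that the paper multiplies through by $(1-pt)(1+qe^t)/p$, which is positive for every $x>0$, so the region $x\ge p$ is handled automatically and your separate case split and merging step are not needed.
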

\begin{proof}
    Let $t\!=\!\frac{1}{p}\!-\!\frac{1}{x}$, such that $x\!=\!\frac{p}{1-pt}$. Then, \eqref{eq: ieq in lemma} becomes 
    \begin{align} \label{eq: lemma chaneg of var}
        \frac{p}{1-pt}>p(1-\frac{1}{1+qe^t})=\frac{pqe^t}{1+qe^t}
    \end{align}
    Since $1-pt=1-(1-p/x)>0$, multiplying both sides of \eqref{eq: lemma chaneg of var} by $(1-pt)(1+qe^t)/p$ yields
    \begin{align}\notag
        1+qe^t>qe^t-pqte^t \iff te^t>-(pq)^{-1}.
    \end{align}
    This inequality always holds if $-\frac{1}{pq}<-\frac{1}{e}$, establishing condition 1 of Lemma \ref{lemma: lambert W}. Otherwise, when $-\frac{1}{pq}\geq-\frac{1}{e}$, the inequality holds if and only if $t<W_{-1}(-\frac{1}{pq})$ or $t>W_0(-\frac{1}{pq})$. Substituting back $t=\frac{1}{p}-\frac{1}{x}$, this becomes $\frac{1}{x}>\frac{1}{p}-W_{-1}(-\frac{1}{pq})$ or $\frac{1}{x}<\frac{1}{p}-W_{0}(-\frac{1}{pq})$. Since both $W_{-1}(-\frac{1}{pq})$ and $W_{0}(-\frac{1}{pq})$ are negative, rearranging yields condition 2 of Lemma \ref{lemma: lambert W}, completing the proof.
\end{proof}
\begin{corollary} \label{lemma: lambert W corr}
    Let $p,q,r,x>0$. Then,
    \begin{align} \label{eq: another ieq in lemma}
        x> p\big[1-\frac{1}{1+q\exp(-\frac{r}{x})}\big]
    \end{align}
    if and only if one of the following conditions holds: 1) $\frac{pq}{r}\exp(-\frac{r}{p}) < e$, or 2) $\frac{pq}{r}\exp(-\frac{r}{p}) \geq e$ and either $x<\big[\frac{1}{p}-\frac{1}{r}W_{-1}(-\frac{r}{pq}\exp(\frac{r}{p}))\big]^{-1}$ or $x>\big[\frac{1}{p}-\frac{1}{r}W_{0}(-\frac{r}{pq}\exp(\frac{r}{p}))\big]^{-1}$.
\end{corollary}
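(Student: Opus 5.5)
The plan is to reduce the corollary to Lemma~\ref{lemma: lambert W} by rescaling the exponent so that it takes the form $\frac{1}{p'}-\frac{1}{x'}$. Set $x'=x/r$ and $p'=p/r$. Then $-\frac{r}{x}=-\frac{1}{x'}=\big(\frac{1}{p'}-\frac{1}{x'}\big)-\frac{r}{p}$. Hence
\[
q\exp\!\Big(-\frac{r}{x}\Big)=q'\exp\!\Big(\frac{1}{p'}-\frac{1}{x'}\Big),\qquad q':=q\exp\!\Big(-\frac{r}{p}\Big)>0.
\]
Dividing both sides of \eqref{eq: another ieq in lemma} by $r>0$ shows that it is equivalent to
\[
x'>p'\Big[1-\frac{1}{1+q'\exp(\frac{1}{p'}-\frac{1}{x'})}\Big].
\]
This is exactly \eqref{eq: ieq in lemma} with $(p,q,x)$ replaced by $(p',q',x')$, and all three are positive.

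Next I would apply Lemma~\ref{lemma: lambert W} to this inequality. The key product is
\[
p'q'=\frac{pq}{r}\exp\!\Big(-\frac{r}{p}\Big),
\]
so condition 1) of the lemma ($p'q'<e$) becomes condition 1) of the corollary. The Lambert arguments transform as
\[
-\frac{1}{p'q'}=-\frac{r}{pq}\exp\!\Big(\frac{r}{p}\Big).
\]
In the case $p'q'\ge e$, the lemma gives $x'<\big[\frac{1}{p'}-W_{-1}(\cdot)\big]^{-1}$ or $x'>\big[\frac{1}{p'}-W_{0}(\cdot)\big]^{-1}$. Multiplying through by $r$, and using $\frac{1}{p'}=\frac{r}{p}$, gives
\[
x<r\Big[\frac{r}{p}-W_{-1}(\cdot)\Big]^{-1}=\Big[\frac{1}{p}-\frac{1}{r}W_{-1}(\cdot)\Big]^{-1},
\]
and similarly for the $W_0$ branch. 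This is exactly condition 2).

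The only point needing care is that the rescaling preserves the direction of every inequality and the positivity hypotheses. This holds because $r>0$ and the exponential factor $\exp(-r/p)$ is positive. I should also note that the bracketed bounds remain positive: $W_{-1}$ and $W_0$ are negative on $[-1/e,0)$, so each bracket exceeds $1/p>0$, and the reciprocals are well defined. No genuine obstacle is expected; the work is bookkeeping of the substitution.
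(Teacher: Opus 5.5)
Your proposal is correct and is the paper's own proof. It uses the same substitution $x'=x/r$, $p'=p/r$, $q'=q\exp(-r/p)$ to reduce the inequality to Lemma~\ref{lemma: lambert W}, and your translation of $p'q'$, the Lambert argument and the branch bounds back to the original variables is accurate.
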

\begin{proof}
    Let $x'=\frac{x}{r}$, $p'=\frac{p}{r}$, $q'=q\exp(-\frac{r}{p})$. Then, inequality \eqref{eq: another ieq in lemma} is equivalent to $x'> p'\big[1-\frac{1}{1+q'\exp(\frac{1}{p'}-\frac{1}{x'})}\big]$. Applying Lemma \ref{lemma: lambert W} completes the proof.
\end{proof}

\subsubsection{Analysis of CAVI update} \label{sec: analysis ca update}
Next, we present two lemmas and a proposition concerning the employed CAVI updates and their bounding behaviour.

\begin{lemma} \label{lemma: covariance update}
    Suppose the prior covariance $\Sigma_k^\prime$ in \eqref{eq: prior parameters}, used in the state update \eqref{eq: state update}, satisfies $H \Sigma_k^\prime H^\top = c I_{d_Y}$, where $c > 0$ is a scalar and $I_{d_Y}$ is the identity matrix. Then the updated covariance $\Sigma_k$ in \eqref{eq: state update} satisfies $H\Sigma_k H^\top=(\overbar{R}_k^{-1}+\frac{1}{c}I_{d_Y})^{-1}$.
\end{lemma}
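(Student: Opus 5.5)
The plan is to read off $H\Sigma_kH^\top$ from the information (precision) form of the Kalman update in \eqref{eq: state update}. There, $q_1(X_k)\propto\hat{p}(X_k)\mathcal{N}(\overbar{y}_k;HX_k,\overbar{R}_k)$ is a Gaussian prior $\mathcal{N}(\mu_k',\Sigma_k')$ combined with a linear Gaussian pseudo-likelihood. Its posterior covariance is therefore
\begin{align*}
\Sigma_k=\big(\Sigma_k'^{-1}+H^\top\overbar{R}_k^{-1}H\big)^{-1},
\end{align*}
or equivalently, in covariance (Kalman gain) form,
\begin{align*}
\Sigma_k=\Sigma_k'-\Sigma_k'H^\top\big(H\Sigma_k'H^\top+\overbar{R}_k\big)^{-1}H\Sigma_k'.
\end{align*}
I will use the covariance form, since it avoids any invertibility assumption on $H$ and only requires $H\Sigma_k'H^\top+\overbar{R}_k$ to be invertible. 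This holds because $\overbar{R}_k$ is positive definite whenever $\sum_j q_1(\theta_j=k)>0$ (as $\Phi_k$ is positive definite) and $H\Sigma_k'H^\top=cI_{d_Y}$ with $c>0$.

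Next, I left-multiply by $H$ and right-multiply by $H^\top$. Writing $A:=H\Sigma_k'H^\top=cI_{d_Y}$, this gives
\begin{align*}
H\Sigma_kH^\top=A-A(A+\overbar{R}_k)^{-1}A=cI_{d_Y}-c^2\big(cI_{d_Y}+\overbar{R}_k\big)^{-1}.
\end{align*}
The key step is then a matrix identity. By the Woodbury identity, $A-A(A+R)^{-1}A=(A^{-1}+R^{-1})^{-1}$ for positive definite $A,R$. Alternatively, one can check directly that
\begin{align*}
\big(cI-c^2(cI+R)^{-1}\big)\big(R^{-1}+c^{-1}I\big)=I.
\end{align*}
To verify this, factor $cI-c^2(cI+R)^{-1}=c(cI+R)^{-1}R$, using that $R$ and $(cI+R)^{-1}$ commute. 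Multiplying by $(R^{-1}+c^{-1}I)=c^{-1}R^{-1}(cI+R)$ then gives $I$. This yields
\begin{align*}
H\Sigma_kH^\top=\big(\overbar{R}_k^{-1}+\tfrac{1}{c}I_{d_Y}\big)^{-1}.
\end{align*}

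The main point requiring care, rather than a real obstacle, is avoiding the naive route of applying $H(\cdot)^{-1}H^\top$ to the information form. That route would require $H$ to be square or invertible, which it is not: $H$ is $d_Y\times d_X$. The covariance form circumvents this, because it involves $\Sigma_k'$ only through $H\Sigma_k'$, $\Sigma_k'H^\top$ and $A$, and after sandwiching by $H$ everything reduces to $A=cI_{d_Y}$. The degenerate case $\sum_j q_1(\theta_j=k)=0$, where $\overbar{R}_k$ is infinite, is handled by interpreting $\overbar{R}_k^{-1}=0$. The formula then returns $cI_{d_Y}$, consistent with no update. Letting $c\to\infty$ afterwards gives $H\Sigma_kH^\top=\overbar{R}_k$, which is Remark~\ref{birth prior remark}.
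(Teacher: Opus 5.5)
Your proposal is correct and follows essentially the same route as the paper. Both pass to the covariance (Woodbury) form of the update, sandwich by $H$ and $H^\top$ to obtain $cI_{d_Y}-c^2(\overbar{R}_k+cI_{d_Y})^{-1}$, and then simplify this to $(\overbar{R}_k^{-1}+\frac{1}{c}I_{d_Y})^{-1}$. Your direct factorisation check of that last step is simply a more explicit version of the paper's second application of Woodbury.
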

\begin{proof}
    Using the Gaussian update formula and Woodbury identity, the updated covariance $\Sigma_k$ in \eqref{eq: state update} is
    \begin{align} \notag 
        \Sigma_k&=({\Sigma_k^\prime}^{-1} + H^\top\overbar{R}_k^{-1} H )^{-1}\\ \notag
        &=\Sigma_k^\prime-\Sigma_k^\prime H^\top (\overbar{R}_k+H\Sigma_k^\prime H^\top)^{-1} H\Sigma_k^\prime.
    \end{align}
    Then, applying $H$ and $H^\top$ to both sides, we obtain
    \begin{align} \notag
        H &\Sigma_k H^\top\!= H \Sigma_k^\prime H^\top -H \Sigma_k^\prime H^\top (\overbar{R}_k+H\Sigma_k^\prime H^\top)^{-1} H \Sigma_k^\prime H^\top \\ \notag
        &=c I_{d_Y}-c^2 (\overbar{R}_k +cI_{d_Y})^{-1}=c(I_{d_Y}-(\textstyle\frac{1}{c}\overbar{R}_k+I_{d_Y})^{-1})\\ \notag
        &=c[I_{d_Y}-(I_{d_Y}-(c\overbar{R}_k^{-1}+I_{d_Y})^{-1})]=(\overbar{R}_k^{-1}+\textstyle\frac{1}{c}I_{d_Y})^{-1},
    \end{align}
    where the second last equality follows from the Woodbury identity. This completes the proof.
\end{proof}

\begin{lemma} \label{lemma: CAVI bounded by c}
    Let $\sum_{j=1}^M q_1(\theta_j=k)<s$. Then the $\eta_k,\phi_k,\Phi_k$ obtained from updates \eqref{eq: rate update}, \eqref{eq: shape update} using such a $q_1(\theta)$ satisfy:
    \begin{align} \label{eq: lemma digamma bounding}
        \psi(\eta_k)<\psi(\eta_k^\prime+s)&, \quad  \psi_{d_Y}(\phi_k/2)\!<\!\psi_{d_Y}((\phi_k^\prime+s)/2),\\ \label{eq: lemma PSD bounding}
        &\log |\Phi_k|\leq\log |\Phi_k^\prime|, 
    \end{align}
    where $d_Y$ is the dimension of $y_j$.
\end{lemma}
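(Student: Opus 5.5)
The plan is to read off each inequality directly from the closed-form updates \eqref{eq: rate update} and \eqref{eq: shape update}, writing $\varsigma_k:=\sum_{j=1}^M q_1(\theta_j=k)<s$ throughout.

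\textbf{Digamma bounds.} From \eqref{eq: rate update}, $\eta_k=\eta_k'+\varsigma_k<\eta_k'+s$. The digamma function $\psi$ is strictly increasing on $(0,\infty)$, so $\psi(\eta_k)<\psi(\eta_k'+s)$.

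For the shape, \eqref{eq: shape update} gives $\phi_k=\phi_k'+\varsigma_k<\phi_k'+s$. The multivariate digamma is
\[
\psi_{d_Y}(a)=\sum_{i=1}^{d_Y}\psi\big(a+(1-i)/2\big),
\]
and each summand is strictly increasing in $a$ wherever its argument is positive. The arguments are positive because the Wishart parameters are valid: $\phi_k'>d_Y-1$, or at least the arguments are positive under the standard validity convention. Hence $\psi_{d_Y}(\phi_k/2)<\psi_{d_Y}((\phi_k'+s)/2)$.

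\textbf{Determinant bound.} From \eqref{eq: shape update}, $\Phi_k^{-1}=\Phi_k'^{-1}+A$, where
\[
A=\sum_{j=1}^M q_1(\theta_j=k)\Big[(y_j-H\mu_k)(y_j-H\mu_k)^\top+H\Sigma_kH^\top\Big].
\]
$A$ is positive semidefinite: the weights are nonnegative, each outer product is PSD, and $H\Sigma_kH^\top$ is PSD because $\Sigma_k$ is a covariance. So $\Phi_k^{-1}\succeq\Phi_k'^{-1}\succ0$.

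Determinants are monotone under the Loewner order. Concretely, $\Phi_k^{-1}=\Phi_k'^{-1/2}\big(I+\Phi_k'^{1/2}A\Phi_k'^{1/2}\big)\Phi_k'^{-1/2}$, and every eigenvalue of $I+\Phi_k'^{1/2}A\Phi_k'^{1/2}$ is at least $1$. Therefore $|\Phi_k^{-1}|\ge|\Phi_k'^{-1}|$, which gives $\log|\Phi_k|\le\log|\Phi_k'|$.

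\textbf{Main obstacle.} There is no serious obstacle; the argument is routine. The only points needing care are:
\begin{itemize}
\item confirming that every argument of $\psi$ inside $\psi_{d_Y}$ stays positive, which relies on Wishart validity of the prior degrees of freedom, so that strict monotonicity applies;
\item checking that $\mu_k,\Sigma_k$ enter only through the PSD term $A$, so the order in which updates are applied within the round does not matter.
\end{itemize}
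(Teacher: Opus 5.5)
Your proposal is correct and follows the paper's proof. The digamma bounds come from $\eta_k=\eta_k'+\varsigma_k<\eta_k'+s$ and $\phi_k=\phi_k'+\varsigma_k<\phi_k'+s$ together with strict monotonicity of $\psi$ and $\psi_{d_Y}$ (for $\phi>d_Y-1$), and the determinant bound comes from the positive semidefinite increment in \eqref{eq: shape update}. The only cosmetic difference is that the paper states the last step as $\Phi_k'-\Phi_k\succeq 0$, which implicitly uses that inversion reverses the Loewner order. You instead work directly with $\Phi_k^{-1}\succeq\Phi_k'^{-1}$ and an explicit congruence factorisation, which is slightly more self-contained.
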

\begin{proof}
    The inequalities in \eqref{eq: lemma digamma bounding} follow from the strict monotonicity of the digamma function $\psi (\eta)$ for $\eta>0$ and the multivariate digamma function $\psi_{d_Y}(\phi/2)$ for $\phi>d_Y-1$. The inequalities in \eqref{eq: lemma PSD bounding} follow from the fact that $\Phi_k'-\Phi_k$ is a positive semi definite matrix, as implied by \eqref{eq: shape update}.
\end{proof}

\begin{proposition} \label{CAVI bound proposition}
    Define $c(s)$ as in \eqref{eq: c(s)}. Let Assumption~\ref{assump} hold, and suppose
    CAVI applies one round of updates sequentially via \eqref{eq: shape update}-\eqref{eq: state update} with $q_1(\theta)=q_1^{-}(\theta)$, followed by updating $q_1(\theta)$ to $q_1^{+}(\theta)$ using \eqref{eq: Sjk positional only l}-\eqref{eq: association update}.
    Then for any newly-born object $k\!=\!K_{n-1}\!+\!1,...,K_n$ and any $s>0$ such that $\sum_{j=1}^M q_1^-(\theta_j\!=\!k)\!<\!s$, $q_1^{+}(\theta_j\!=\!k)$ satisfies
    \begin{align} \label{eq: q+ bound in proposition}
        q_1^{+}(\theta_j\!=\!k)&< 1-\frac{1}{1\!+\!\frac{c(s)}{ d_k(y_j)}\exp(-\frac{d_Y/2}{\sum_{j=1}^M q_1^{-}(\theta_j=k)})},
    \end{align}
    where $d_k(y_j)$ is evaluated from \eqref{eq: dk(s)} using the parameters $\phi_k,\Phi_k,\mu_k$ obtained from that round of updates \eqref{eq: shape update} and \eqref{eq: state update}.
\end{proposition}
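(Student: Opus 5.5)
We need to prove Proposition \ref{CAVI bound proposition}: a bound on $q_1^+(\theta_j=k)$ after one CAVI round. The plan is to write the association update explicitly, drop the competing object terms to get an upper bound, and then bound $S_j^k$ from above using Lemma~\ref{lemma: CAVI bounded by c} and Remark~\ref{birth prior remark}.

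\textbf{Step 1 (reduce to bounding $S_j^k$).} By \eqref{eq: association update},
\[
q_1^+(\theta_j=k)=\frac{p_k^{e\prime}p_k^dS_j^k}{\lambda_c+\sum_{s}p_s^{e\prime}p_s^dS_j^s}\le \frac{p_k^{e\prime}p_k^dS_j^k}{\lambda_c+p_k^{e\prime}p_k^dS_j^k}=1-\frac{1}{1+p_k^{e\prime}p_k^dS_j^k/\lambda_c},
\]
since the other terms are nonnegative and $\lambda_c$ is uniform by Assumption~\ref{assump}. The right-hand side is increasing in $S_j^k$. It therefore suffices to show
\[
p_k^{e\prime}p_k^dS_j^k/\lambda_c< \frac{c(s)}{d_k(y_j)}\exp\!\Big(-\frac{d_Y/2}{\varsigma^-}\Big),
\qquad \varsigma^-:=\sum_j q_1^-(\theta_j=k).
\]

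\textbf{Step 2 (term-by-term bounds on $T_j^k$).} Expand $S_j^k=\exp(T_j^k)$ via \eqref{eq: Tjk definition}. The quadratic term equals exactly $-\log d_k(y_j)$. By Lemma~\ref{lemma: CAVI bounded by c}, $\eta_k=\eta_k'+\varsigma^-<\eta_k'+s$ and $\phi_k<\phi_k'+s$. Hence $\psi(\eta_k)<\psi(\eta_k'+s)$, $\psi_{d_Y}(\phi_k/2)<\psi_{d_Y}((\phi_k'+s)/2)$, and $\log|\Phi_k|\le\log|\Phi_k'|$. The factor $\rho_k$ is fixed by \eqref{eq: rate update} and appears as-is in $c(s)$. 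Collecting these strict inequalities gives everything in $c(s)$ except the trace term.

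\textbf{Step 3 (the trace term).} This is the main obstacle. We need
\[
-\tfrac12\Tr(\phi_k\Phi_kH\Sigma_kH^\top)\le -\frac{d_Y}{2\varsigma^-}.
\]
By Remark~\ref{birth prior remark} and Lemma~\ref{lemma: covariance update} with $c\to\infty$, we have $H\Sigma_kH^\top=\overbar{R}_k=\Phi_k^{-1}/(\phi_k\varsigma^-)$. One subtlety is that \eqref{eq: state update} uses the $\Phi_k,\phi_k$ from the shape update in the same round, whereas $T_j^k$ uses the current $\Phi_k,\phi_k$. The order of updates fixed in the proposition (shape, then rate, then state) must therefore be checked so that these coincide; if they differ, the argument fails and the update order has to be enforced. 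When they coincide,
\[
\Tr(\phi_k\Phi_k\overbar{R}_k)=\Tr(I_{d_Y})/\varsigma^-=d_Y/\varsigma^-,
\]
which gives exactly $\exp(-\tfrac{d_Y/2}{\varsigma^-})$. Likewise $H\mu_k=\overbar{y}_k$, and $d_k$ is evaluated at these same parameters.

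\textbf{Step 4 (combine).} Substituting the bounds from Steps 2 and 3 into Step 1 yields \eqref{eq: q+ bound in proposition}. The inequality is strict because the digamma bounds in Step 2 are strict.
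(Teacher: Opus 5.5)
Your proposal is correct and follows essentially the same route as the paper: drop the competing terms in the normalising sum of \eqref{eq: association update}, bound the digamma, log-determinant and $\rho_k$ contributions of $T_j^k$ via Lemma~\ref{lemma: CAVI bounded by c} to recover $c(s)$, and use Lemma~\ref{lemma: covariance update} with $c\to\infty$ to get $H\Sigma_kH^\top=\overbar{R}_k$, so that the trace term equals $d_Y/\sum_{j=1}^M q_1^-(\theta_j=k)$. The update-order concern is already settled by the statement: the shape update precedes the state update, so $\overbar{R}_k$ is built from the same $\phi_k\Phi_k$ that enters $T_j^k$ and $d_k(y_j)$, and you can assert this rather than hedge on it.
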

\begin{proof}
    From \eqref{eq: association update} and Assumption \ref{assump}, we have 
    \begin{align} \notag
        q_1^{+}(\theta_j=k)&=\frac{p_k^{e \prime}p_k^d S_j^k}{\lambda_c+\sum_{l=1}^{K_n} p_l^{e \prime}p_l^d S_j^l}\leq \frac{p_k^{e \prime}p_k^d S_j^k}{\lambda_c+p_k^{e \prime}p_k^d S_j^k}\\ \label{eq: q+ bound}
        &\qquad =1-\frac{1}{1+\frac{1}{\lambda_c}p_k^{e \prime}p_k^d S_j^k}
    \end{align}
    where $S_j^k>0$ is defined in \eqref{eq: association update}. Using \eqref{eq: Sjk positional only l}, \eqref{eq: c(s)}, and Lemma \ref{lemma: CAVI bounded by c}, the term $\frac{1}{\lambda_c}p_k^{e \prime}p_k^d S_j^k$ can be upper bounded as
    \begin{align} \notag
        \frac{1}{\lambda_c}p_k^{e \prime}p_k^d& S_j^k< \frac{c(s)\exp(-0.5\Tr(\phi_k\Phi_kH\Sigma_kH^\top\!))}{\exp(0.5(y_j\!-\!H\mu_k)^{\!\top} \!\phi_k\Phi_k (y_j\!-\!H\mu_k))}\\ \label{eq: ppS bounded}
        &=\frac{c(s)}{d_k(y_j)}\exp(-0.5\Tr(\phi_k\Phi_kH\Sigma_kH^\top\!)),
    \end{align}
    where $d_k(y_j)$ is defined in \eqref{eq: dk(s)}, and $\mu_k,\Sigma_k$ are obtained from state update \eqref{eq: state update}. From Lemma \ref{lemma: covariance update} and Assumption \ref{assump}, $H\Sigma_kH^\top$ equals the $\overbar{R}_k$ in \eqref{eq: state update}, and thus $\Tr(\phi_k\Phi_kH\Sigma_kH^\top\!)$ equals $d_Y/\sum_{j=1}^{M}q_1^-(\theta_{j}\!=\!k) $. Subsequently, \eqref{eq: ppS bounded} becomes
    $$\frac{1}{\lambda_c}p_k^{e \prime}p_k^d S_j^k< \frac{c(s)}{d_k(y_j)}\exp\Big(-\frac{d_Y/2}{\sum_{j=1}^M q_1^-(\theta_{j}=k)}\Big). $$
    Substituting this inequality into \eqref{eq: q+ bound} for a further upper bound yields \eqref{eq: q+ bound in proposition}, completing the proof.
\end{proof}

\subsubsection{Proof of Theorem~\ref{main theorem}}  \label{sec: proof for main theorem} 
With Proposition \ref{CAVI bound proposition}, Corollary \ref{lemma: lambert W corr}, and Lemma \ref{lemma sugar water ieq}, we are now ready to prove Theorem~\ref{main theorem}. 
\begin{proof}
    Suppose $\sum_{j=1}^M q_1^-(\theta_j = k) < s$. Proposition~\ref{CAVI bound proposition} first gives the one-step upper bound \eqref{eq: q+ bound in proposition} on $q_1^+(\theta_j=k)$. We further bound its right-hand side using Lemma~\ref{lemma sugar water ieq} with $x=\frac{1}{d_k(y_j)}$, $a=c(s)\exp(-\frac{d_Y/2}{\sum_{j=1}^M q_1^{-}(\theta_j=k)})$, and $\alpha=c(s)\exp(-\frac{d_Y}{2s})=t(s)$, where $t(s)$ is as defined in \eqref{eq: t(s)}. This gives    
    \begin{align} \notag
        q_1^{+}(\theta_j=k)<& \bigg(1-\frac{1}{1+c(s)\exp(-\frac{d_Y/2}{\sum_{j=1}^M q_1^{-}(\theta_j=k)})}\bigg)\\
        &\times \frac{1+t(s)}{d_k(y_j)+t(s)}.
    \end{align}
    Summing over $j$ and using $\Delta_k(s)$ in \eqref{eq: Delta(s)} yields
    \begin{align} \notag
        &\!\!\!\!\sum_{j=1}^{M} \!q_1^{+}(\theta_j\!=\!k) \!< \!\bigg(\!1-\frac{1}{1+c(s)\exp(-\frac{d_Y/2}{\sum_{j=1}^M q_1^{-}(\theta_j=k)})}\bigg) \Delta_k(s)\\ \label{eq: q+ upper bound med}
        & = \!B_k(s)\bigg(\!1\!-\!\frac{1}{1+c(s)\exp(-\frac{d_Y/2}{\sum_{j=1}^M q_1^{-}(\theta_j=k)})}\!\bigg) \frac{\Delta_k(s)}{B_k(s)},
    \end{align}
    where $B_k(s)$ is an admissible bound from Definition~\ref{def:admissible-bound}. We now seek a sufficient condition under which
    \begin{align} \notag
        B_k(s)\bigg(\!1\!-\!\frac{1}{1+c(s)\exp(-\frac{d_Y/2}{\sum_{j=1}^M q_1^{-}(\theta_j=k)})}\!\bigg)<\sum_{j=1}^M q_1^{-}(\theta_j=k), 
    \end{align}
    so that applying \eqref{eq: q+ upper bound med} yields the desired bound \eqref{eq: association ratio bound} in Theorem \ref{main theorem}: $\sum_{j=1}^{M} q_1^{+}(\theta_j\!=\!k) <\frac{\Delta_k(s)}{B_k(s)} \sum_{j=1}^M q_1^{-}(\theta_j=k)$. This sufficient condition can be established via Corollary~\ref{lemma: lambert W corr}, by setting $p=B_k(s)$, $q=c(s)$, $r=d_Y/2$, $x=\sum_{j=1}^M q_1^{-}(\theta_j=k)$. Corollary~\ref{lemma: lambert W corr} then implies that the inequality holds if either of the following two conditions is satisfied:\\
    1) $\frac{c(s)B_k(s)}{d_Y/2}\exp(-\frac{d_Y/2}{B_k(s)})\geq e$ and ${\sum_{j=1}^M q_1^{-}(\theta_j\!=\!k) \!<\!  V_k(s)}$, where $V_k(s)$ is defined in \eqref{eq: Vs}, or \\
    2) $\frac{c(s)B_k(s)}{ d_Y/2}\exp(-\frac{d_Y/2}{B_k(s)})<e$.
    
    Recall that we have also assumed $\sum_{j=1}^M q_1^-(\theta_j = k) < s$. Therefore, combining the above conditions gives the threshold $L_k(s)$ in \eqref{eq: Ls}. Hence, if $\sum_{j=1}^M q_1^{-}(\theta_j=k)<L_k(s)$, then $\sum_{j=1}^{M} q_1^{+}(\theta_j\!=\!k) < \frac{\Delta_k(s)}{B_k(s)}\sum_{j=1}^M q_1^{-}(\theta_j=k)$. 
    
    Since $\Delta_k(s)\leq B_k(s)$ by the admissibility defined in \eqref{eq: Bk(s)}, it follows that $\sum_{j=1}^M q_1^{+}(\theta_j = k) < \sum_{j=1}^M q_1^-(\theta_j = k) < L_k(s)$, so the sufficient condition remains satisfied at the next iteration. By induction, the same bound continues to hold, and $\sum_{j=1}^M q_1(\theta_j = k)$ decreases monotonically across iterations, with the ratio of successive values bounded by $\frac{\Delta_k(s)}{B_k(s)} \leq 1$. 

It remains to show that $\sum_{j=1}^M q_1(\theta_j=k)$ converges to zero. Let $\beta_r:=\sum_{j=1}^M q_1^{(r)}(\theta_j=k)$ denote the association mass after the $r$-th subsequent update from any iteration satisfying $\beta_r<L_k(s)$, and define $$f(x):=B_k(s)\big(1-\frac{1}{1+c(s)\exp(-d_Y/(2x))}\big).$$
The preceding analysis shows that ${\beta_r}$ is monotonically decreasing and bounded below by zero, and therefore $\beta_r\to\beta_\infty\geq0$. Suppose $\beta_\infty>0$. Then since $\beta_\infty<L_k(s)$, the construction of $L_k(s)$ gives $f(\beta_\infty)<\beta_\infty$. However, \eqref{eq: q+ upper bound med} and $\Delta_k(s)\leq B_k(s)$ give $\beta_{r+1}<f(\beta_r)$. Taking limits in this inequality, using $\beta_{r+1}\to\beta_\infty$ and continuity of $f$, yields $\beta_\infty\leq f(\beta_\infty)$, contradicting $f(\beta_\infty)<\beta_\infty$. Hence, $\beta_\infty=0$, completing the proof.

\end{proof}

\section{Additional Details on the Threshold $L_k(s)$ in Theorem \ref{main theorem}} \label{sec: selection of Ls}
This appendix provides additional details on the threshold in Theorem~\ref{main theorem}. We first analyse how the threshold $L_k(s)$ depends on the admissible bound $B_k(s)$ and the parameter $s$, showing that a tighter $B_k(s)$ gives a no smaller threshold for fixed $s$. We then give a practical construction of such a bound for the known-shape case.
\subsubsection{Threshold $L_k(s)$ behaviour with respect to $s$ and $B_k(s)$} \label{sec: function behaviour}
Here we examine how the threshold $L_k(s)$ in Theorem~\ref{main theorem} changes with $s$ and the admissible bound $B_k(s)$. The main message is twofold. First, for fixed $s$, a tighter/lower admissible bound $B_k(s)$ gives a no smaller threshold $L_k(s)$. Second, the dependence on $s$ is more subtle: decreasing $s$ improves the internal bound terms, in particular increasing $V_k(s)$ and making the first-case condition in \eqref{eq: Ls} easier to satisfy. However, $L_k(s)$ is also capped by $s$, so $L_k(s)$ is not monotone in $s$ in general.

We first consider the dependence on $B_k(s)$ for fixed $s$. Since $B_k(s)\exp(-\frac{d_Y/2}{B_k(s)})$ decreases with smaller $B_k(s)$, decreasing $B_k(s)$ makes the first case in \eqref{eq: Ls}, namely $\frac{ c(s) B_k(s)}{d_Y/2}\exp(-\frac{d_Y/2}{B_k(s)})< e$, easier to satisfy. Moreover, in the second case of \eqref{eq: Ls}, it will be shown below that $V_k(s)$ increases as $B_k(s)$ decreases. This gives the claimed monotonicity of $L_k(s)$ with respect to $B_k(s)$. Indeed, fix $s$ and compare two admissible bounds $B_1<B_2$. If $B_2$ satisfies the first case in \eqref{eq: Ls}, then so does $B_1$, and both thresholds equal $s$. If $B_2$ is in the second case but $B_1$ moves to the first case, then the new threshold is $s$, which is no smaller than $\min\{s,V_k(s)\}$. Finally, if both remain in the second case, then $V_k(s)$ is larger for $B_1$, and hence $\min\{s,V_k(s)\}$ is no smaller. Therefore, for fixed $s$, tightening/lowering $B_k(s)$ can only increase, or leave unchanged, the sufficient threshold $L_k(s)$.

We next consider the role of $s$. Decreasing $s>0$ decreases $c(s)$ in \eqref{eq: c(s)} and hence also decreases $t(s)$ in \eqref{eq: t(s)}. Since $d_k(y_j)\geq 1$, each term in \eqref{eq: Delta(s)} is nondecreasing in $t(s)$, so $\Delta_k(s)$ also decreases. Therefore a smaller admissible bound $B_k(s)$ may be obtained. In addition, the condition term $\frac{c(s)B_k(s)}{ d_Y/2}\exp(-\frac{d_Y/2}{B_k(s)})$ in \eqref{eq: Ls} decreases with a smaller $s$, so the first case is easier to satisfy. The analysis below also shows that $V_k(s)$ increases as $s$ decreases. However, since $L_k(s)$ is either $s$ or $\min\{s,V_k(s)\}$, these favourable changes do not imply monotonicity of $L_k(s)$ with respect to $s$.

Finally, we show that $V_k(s)$ increases when either $s$ or $B_k(s)$ decreases. Recall from Section~\ref{sec: proof for main theorem} that $V_k(s)$ is constructed using Corollary~\ref{lemma: lambert W corr}. Specifically, define the function
\begin{align} \label{eq: definition fl}
    f_l(p,q,r)=\left[\frac{1}{p} - \frac{1}{r} \, W_{-1}\left(-\frac{r}{pq} \exp\left(\frac{r}{p}\right)\right)\right]^{-1}.
\end{align}
Then $V(s)=f_l(p,q,r)$ with $p=B_k(s)$, $q=c(s)$, $r=d_Y/2$. Smaller $B_k(s)$ implies smaller $p$, and smaller $s$ implies smaller $c(s)$ and hence smaller $q$. Since $f_l(p, q, r)$ increases as $p$ or $q$ decreases, as formalised in Lemma~\ref{lemma: function behaviour} presented below, this establishes the desired monotonicity: $V_k(s)$ increases as either $s$ or $B_k(s)$ decreases.

\begin{lemma} \label{lemma: function behaviour}
Let $p, q, r > 0$. The function $f_l(p,q,r)$ defined in~\eqref{eq: definition fl}, when well-defined (i.e., $ \frac{pq}{r} \exp(-\frac{r}{p}) \geq e $), increases as either $p$ or $q$ decreases, or as $r$ increases.
\end{lemma}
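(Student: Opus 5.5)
We need to show that $f_l(p,q,r)=[1/p - W_{-1}(z)/r]^{-1}$, with $z=-\frac{r}{pq}e^{r/p}\in[-1/e,0)$, is decreasing in $p$ and in $q$ and increasing in $r$, on the domain where it is defined. The cleanest route is to reuse the characterisation behind Corollary~\ref{lemma: lambert W corr} instead of differentiating $W_{-1}$ directly.

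First I will identify $f_l$ as a root. In the proof of Lemma~\ref{lemma: lambert W} we set $t=\frac1p-\frac1x$, and the boundary $te^t=-\frac1{pq}$ has smaller root $t_-=W_{-1}(\cdot)$. After the rescaling of the corollary, $x^*=f_l(p,q,r)$ is the smallest positive solution of
\[
x=p\Bigl[1-\frac{1}{1+q e^{-r/x}}\Bigr]=:F(x;p,q,r).
\]
Moreover $x>F(x)$ holds for all $0<x<x^*$. This follows from the corollary's condition $x<f_l$, since $W_{-1}\le W_0$ makes the $W_{-1}$ root the smaller one.

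So $x^*$ is the first crossing point of the identity with $F$, starting from $x=0^+$, where $F(0^+)=0$ and $F(x)<x$ just to the right of $0$.

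Next I will use a comparison argument in place of implicit differentiation. Note that $F$ is increasing in $p$, increasing in $q$ (because $1-\frac1{1+u}$ is increasing in $u$), and decreasing in $r$. Take parameters with $\tilde p\le p$, $\tilde q\le q$, $\tilde r\ge r$. Then $\tilde F\le F$ pointwise. Hence for every $x<x^*$ we get $\tilde F(x)\le F(x)<x$, so $\tilde F$ has no fixed point in $(0,x^*)$.

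There is one subtlety. If the tilde parameters fall into the regime $\frac{\tilde p\tilde q}{\tilde r}e^{-\tilde r/\tilde p}<e$, then $f_l$ is undefined there. The lemma only speaks of points where $f_l$ is well defined, so both parameter sets are assumed admissible. In that case $\tilde x^*$ exists and is the first crossing, and the comparison gives $\tilde x^*\ge x^*$, which is the claimed monotonicity.

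The main obstacle is making the ``first crossing'' identification rigorous. Specifically, I must show that $W_{-1}$ yields the smaller root in $x$, and that $x<F(x)$ fails nowhere below it. This rests on $x\mapsto t=\frac1p-\frac1x$ being increasing, together with the sign analysis of $te^t+\frac1{pq}$ already carried out in Lemma~\ref{lemma: lambert W}.

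Strictness, as opposed to weak monotonicity, is the second point to check. When the inequality is strict, $\tilde F<F$ strictly, so $\tilde F(x^*)<x^*$. By continuity there is then a neighbourhood above $x^*$ that contains no fixed point of $\tilde F$, which gives $\tilde x^*>x^*$.

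As a fallback, I can differentiate. Write $W'(z)=\frac{W}{z(1+W)}$, and use that $1+W_{-1}<0$ on the interior. It is then routine to check the sign of $\partial_p$, $\partial_q$ and $\partial_r$ of $\frac1p-\frac{W_{-1}(z)}r$.
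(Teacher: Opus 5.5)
Your proposal is correct and follows essentially the same route as the paper. The paper also uses Corollary~\ref{lemma: lambert W corr} to identify $(0,f_l(p,q,r))$ as the lower component of the solution set of $x>g_x(p,q,r)$, notes that $g_x$ (your $F$) decreases when $p$ or $q$ decreases or $r$ increases, and concludes that this lower interval cannot shrink; your smallest-fixed-point phrasing handles the degenerate case $\frac{pq}{r}e^{-r/p}=e$ a little more cleanly, and your strictness step gives slightly more than the paper's weak monotonicity.
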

\begin{proof}
     Define $g_x(p,q,r)=p\big[1-\frac{1}{1+q\exp(-\frac{r}{x})}\big]$. When $f_l(p,q,r)$ is well defined, Corollary~\ref{lemma: lambert W corr} gives that the solution set of the inequality $x > g_x(p, q, r)$ is $$\textstyle x\in (-\infty,f_l(p,q,r))\bigcup(\big[\frac{1}{p}-\frac{1}{r}W_{0}(-\frac{r}{pq}\exp(\frac{r}{p}))\big]^{-1},\infty).$$
     Observe that $g_x(p, q, r)$ decreases when either $p$ or $q$ decreases, or when $r$ increases. Hence, any $x$ satisfying $x>g_x(p,q,r)$ for a given $(p,q,r)$ still satisfies the inequality after any such parameter change. Therefore, the original solution set is contained in the updated solution set. Since the two intervals above are separated by a nonempty gap except at the boundary case, the lower-branch interval $(-\infty,f_l(p,q,r))$ cannot shrink under such a parameter change. Consequently, $f_l(p,q,r)$ increases as either $p$ or $q$ decreases, or as $r$ increases.
\end{proof}

\subsubsection{Construction of $B_k(s)$ for the known-shape case} \label{sec: construction U(s)}
We consider the case where the object shape is known or the shape prior is sufficiently strong that the shape update is negligible. In this case, we assume that the expected precision of $q_1(P_k)$ remains fixed at its prior value, namely $\phi_k\Phi_k=\phi_k^\prime\Phi_k^\prime$. We now construct a tighter admissible bound $B_k(s)$ in \eqref{eq: Bk(s)} for this setting. From \eqref{eq: Delta(s)}, we have 
\begin{align} \notag
    \Delta_k(s)=\sum_{j=1}^M \!\frac{1+t(s)}{\exp(0.5(y_j\!-\!H\mu_k)^{\!\top} \!\phi_k^\prime \Phi_k^\prime (y_j\!-\!H\mu_k))+t(s)},
\end{align}
where $t(s)$ is defined in \eqref{eq: t(s)}. By Remark~\ref{birth prior remark} under Assumption~\ref{assump}, $H\mu_k=\overbar{y}_k$, where $\overbar{y}_k$ in \eqref{eq: state update} is a weighted average of the measurements. Hence $H\mu_k$ lies in the surveillance region $\mathcal{S}$ containing all measurements, given that $\mathcal{S}$ is convex, e.g. rectangular. Therefore,
\begin{align} \notag
    \Delta_k(s)\leq \max_{y\in S} \sum_{j=1}^M \!\frac{1+t(s)}{\exp(0.5(y_j\!-\!y)^{\!\top} \!\phi_k^\prime\Phi_k^\prime (y_j\!-\!y))+t(s)}.
\end{align}
This maximum can be bounded by a grid based construction. Let $G_1,\ldots,G_{N_G}$ be a partition of $\mathcal{S}$. Then a practical and relatively tight admissible bound $B_k(s)$ can be obtained as
\begin{align} \notag
 &\Delta_k(s)\!\leq \!\!\!\!\!\max_{i=1,...,N_G } \max_{y\in G_i }\!\sum_{j=1}^M \!\frac{1+t(s)}{\exp(0.5(y_j\!-\!y)^{\!\top} \!\phi_k^\prime\Phi_k^\prime (y_j\!-\!y))+t(s)}\\ \notag
 &\leq\!\!\!\max_{i=1,...,N_G } \!\sum_{j=1}^M \!\frac{1+t(s)}{\min_{y\in G_i } \exp(0.5(y_j\!-\!y)^{\!\top} \!\phi_k^\prime\Phi_k^\prime (y_j\!-\!y))+t(s)}\\ \notag
 &:=B_k(s),
\end{align}
where the last inequality becomes exact in the limiting case where the grid is arbitrarily fine, i.e. each $G_i$ consists of individual points in $\mathcal{S}$. The inner minimisation $\min_{y\in G_i } \exp(0.5(y_j-y)^{\!\top} \phi_k^\prime\Phi_k^\prime (y_j-y))$ for each $j$ and $G_i$ can be efficiently computed for diagonal $\Phi_k^\prime$ and rectangular grid cells. For non diagonal $\Phi_k^\prime$, one can instead build the grid in transformed coordinates where the quadratic form becomes diagonal. This gives the same type of computation and remains cheap for $d_Y=2$ or $3$.
%\vspace{-1em}

\section{Parameterisation of compared methods in Section~\ref{sec: comparison simulation}} \label{apx: parameterasations for exp}

Here we list the parameterisations of PMBM, SPA, and PiVoT used in the simulation experiments in Section~\ref{sec: comparison simulation}. For all methods, the kinematic transition model uses the ground truth parameterisation in \eqref{eq:cv model}, and the positional birth prior covers the entire surveillance region.

For PMBM, the parameterisation follows \cite{granstrom2019poisson}. PMBM reports objects with existence probability above $0.5$ under the highest weight global hypothesis. The gate probability is set to $0.999$, the maximum number of global hypotheses is set to $100$, and Murty's algorithm is capped at $20$ assignments. The Poisson birth rate is set to $0.5$, and survival probability is set to $0.99$. The clutter rate and detection probability are given as the ground truth for all DS1--DS6. The rate prior follows a Gamma distribution with shape parameter $2$ and mean set to $8$ for DS3--DS6, giving a prior that accommodates both measurement rates well. For DS1--DS2, its mean is instead set to the ground truth. The shape prior follows an inverse Wishart distribution with degrees of freedom $10$. Its mean is set to the ground truth $800I_2$ for DS1--DS2, and to $400I_2$ for DS3--DS6 to better accommodate both object types. The forgetting factors for shape and rate prediction are set to $\tau=10$ and $\eta=1.2$, respectively, see Table III of \cite{granstrom2019poisson}. These factors are defined differently from those in PiVoT, but the chosen values inflate the uncertainty to a comparable level. Finally, PMBM$\mathrm{C}$ and PMBM$\mathrm{F}$ use $16$ and $100$ DBSCAN distance values, respectively, evenly spaced over $[1,100]$, a range found to give strong overall performance. 

For SPA, the object declaration threshold is set to $0.5$, the pruning threshold to $10^{-3}$, and the number of iterations to $3$. The mean number of births is set to $0.5$, the survival probability to $0.99$, and the clutter rate to the ground truth. Since SPA does not handle rate estimation or detection probability, we first take the measurement rate as the ground truth rate for DS1--DS2 and the larger rate $8$ for DS3--DS6, then multiply it by the ground truth detection probability (not equal to $1$ only for DS2 and DS3) to account for missed detections. The shape prior is set as an inverse Wishart distribution with degrees of freedom $10$. Its mean is set to the ground truth $800I_2$ for DS1--DS2, and to $400I_2$ for DS3--DS6 to better accommodate both object types. The degrees of freedom for shape prediction is set to $100$. SPA is implemented with different particle sizes, indicated by the subscripts in Table~\ref{tb:results}.

For PiVoT, identical uninformative birth priors are used as suggested in Section~\ref{sec: initialisation, clustering demo}, with birth probability $p_{n,k}^b=0.2$ and ineffective birth removal threshold $L=0.5$. The thresholds $P_{\mathrm{pru}}$, $P_{\mathrm{rep}}$, and $P_{\mathrm{stp}}$ in Section~\ref{sec: estimation extraction} are set to $0.03$, $0.8$, and $0.3$, respectively. To assess PiVoT's robustness across different scenes, we assume an unknown clutter rate and use the same uninformative shape and birth priors for all DS1--DS6. The shape (precision matrix) prior is set as a Wishart distribution with mean $I_2/400$ and degree of freedom $3$, and the rate prior is set as a Gamma distribution with shape parameter $2$ and mean $8$. The unknown clutter rate is set to the current number of measurements at each time step, which is a crude upper bound but works well under the current parameterisation. The transition parameters $p_{n,k}^s$, $\gamma_{\Lambda,n}$, and $\gamma_{P,n}$ in \eqref{eq: prediction} are all set to $0.9$. The initial $q_1$ is shown in Fig.~\ref{fig:clustering}(a) in Section~\ref{sec: initialisation, clustering demo}.

\end{document}